\def\eqref#1{equation~\ref{#1}}
\def\1{\bm{1}}
\DeclareMathAlphabet{\mathsfit}{\encodingdefault}{\sfdefault}{m}{sl}
\SetMathAlphabet{\mathsfit}{bold}{\encodingdefault}{\sfdefault}{bx}{n}
\DeclareMathOperator*{\argmin}{arg\,min}
\newtheorem*{rep@theorem}{\rep@title}
\newcommand{\newreptheorem}[2]{%
\newenvironment{rep#1}[1]{%
 \def\rep@title{#2 \ref{##1}}%
 \begin{rep@theorem}}%
 {\end{rep@theorem}}}
\newtheorem{proposition}{Proposition}
\newtheorem{lemma}{Lemma}
\newcommand\topsecgap{-0.13in}
\newcommand\secgap{-0.12in}
\newcommand\subsecgap{-0.12in}
\newcommand{\rebuttal}[1]{{#1}}
\newcommand{\rebuttalTwo}[1]{{#1}}
\title{Prediction Error-based Classification \\for Class-Incremental Learning}
\author{Michał Zając\\
KU Leuven, ESAT-PSI\\
Jagiellonian University
\And
Tinne Tuytelaars\\
KU Leuven, ESAT-PSI
\And
Gido M. van de Ven\\
KU Leuven, ESAT-PSI
}
\begin{document}

\maketitle

\vspace{-0.15in}

\begin{abstract}
Class-incremental learning (CIL) is a particularly challenging variant of continual learning, where the goal is to learn to discriminate between all classes presented in an incremental fashion. Existing approaches often suffer from excessive forgetting and imbalance of the scores assigned to classes that have not been seen together during training. In this study, we introduce a novel approach, Prediction Error-based Classification (PEC), which differs from traditional discriminative and generative classification paradigms. PEC computes a class score by measuring the prediction error of a model trained to replicate the outputs of a frozen random neural network on data from that class. The method can be interpreted as approximating a classification rule based on Gaussian Process posterior variance. PEC offers several practical advantages, including sample efficiency, ease of tuning, and effectiveness even when data are presented one class at a time. Our empirical results show that PEC performs strongly in single-pass-through-data CIL, outperforming other rehearsal-free baselines in all cases and rehearsal-based methods with moderate replay buffer size in most cases across multiple benchmarks.\footnote{We release source code for our experiments: \url{https://github.com/michalzajac-ml/pec}}
\end{abstract}

\section{Introduction}
\vspace{\secgap}

\looseness-1 Continual learning addresses the problem of incrementally training machine learning models when the data distribution is changing \citep{DBLP:journals/nn/ParisiKPKW19,de2019continual,van2022three}. Humans excel in learning incrementally from real-world data that are often noisy and non-stationary; embedding a similar continual learning capability in machine learning algorithms could make them more robust and efficient \citep{hadsell2020embracing,kudithipudi2022biological}. A particularly challenging variant of continual learning is class-incremental learning (CIL) \citep{rebuffi2017icarl,belouadah2021comprehensive,van2022three,masana2022class}, which involves jointly discriminating between all classes encountered during a sequence of classification tasks.

\looseness-1 The most common approach for classification, particularly in deep learning, is to learn a discriminative classifier, as shown in Figure~\ref{fig:explainer_a}. 
However, in continual learning scenarios, issues such as forgetting relevant features of past classes arise. To address this, discriminative classification can be supplemented with continual learning techniques such as replay~\citep{chaudhry2019tiny,lopez2017gradient} \rebuttal{or} regularization~\citep{kirkpatrick2017overcoming,zenke2017continual}. Nonetheless, these methods typically perform worse with class-incremental learning than \rebuttal{with} task-incremental \rebuttal{learning}~\citep{van2022three}, and they often suffer from imbalances between classes that have not been presented together during training~\citep{wu2019large}.
Recently, an alternative paradigm of generative classification (see Figure~\ref{fig:explainer_b}) has been explored for CIL, with promising results~\citep{van2021class,banayeeanzade2021generative}. This approach involves training a likelihood-based generative model for each class, and using class-conditional likelihoods as scores for classification during inference. However, learning decent generative models is difficult, especially for complex distributions and with limited training data. In this paper, we put forward a hypothesis that the generative modeling task is not necessary and that it is possible to obtain accurate scores from class-specific modules in more efficient ways by forgoing the generative capability.

\looseness-1 As a realization of this idea, we propose Prediction Error-based Classification (PEC), depicted in Figure~\ref{fig:explainer_c}. In this new approach to classification, we replace the generative modeling objective with a much simpler one, defined by a frozen random neural network.  More precisely, for each class $c$, we train a neural network $g_{\theta^c}$ \emph{exclusively on examples from that class}, to replicate the outputs of a frozen target random neural network~$h$. During testing, the prediction error of $g_{\theta^c}$ on an example $x$ is
utilized to generate a \emph{class score} for class~$c$, analogous to logits or likelihoods, used in discriminative and generative classification, respectively. 

\begin{wrapfigure}{r}{0.487\textwidth}
    \centering
    \vspace{0.0in}
    \begin{subfigure}{0.47\textwidth}
        \centering
        \includegraphics[width=0.77\textwidth]{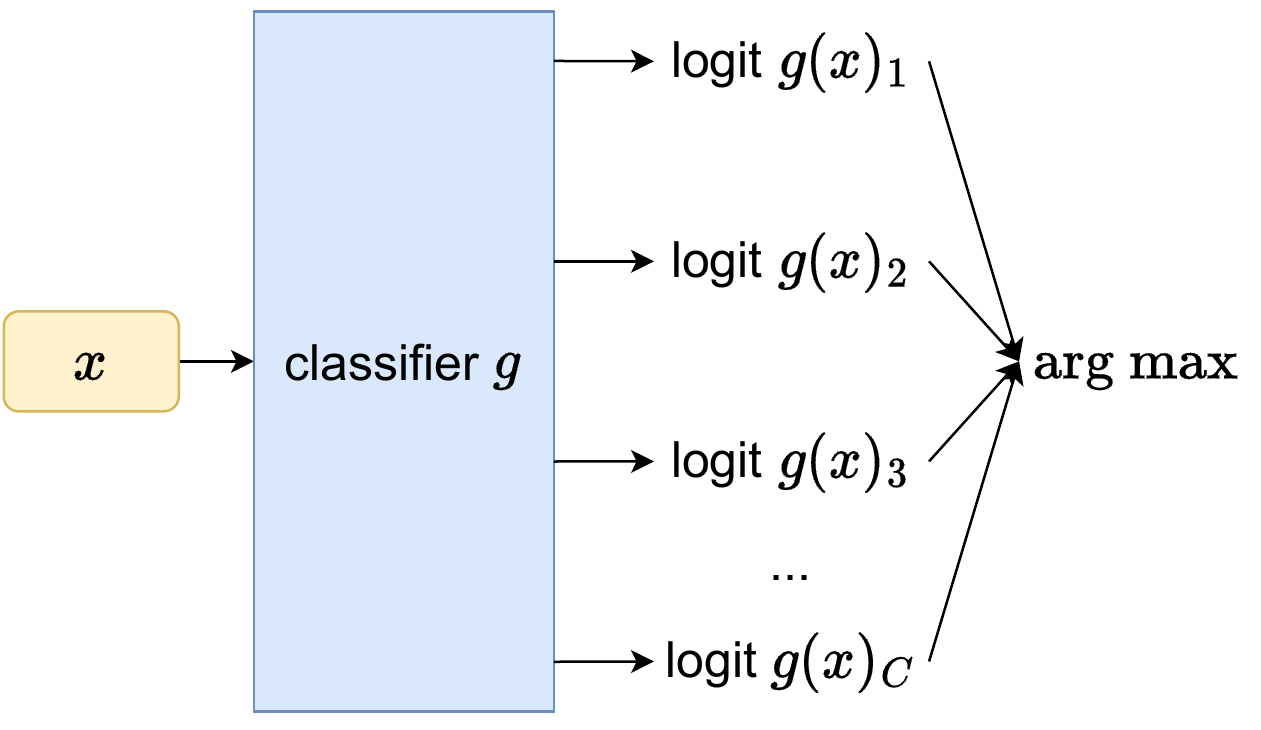}
        \caption{
    \label{fig:explainer_a} In \emph{discriminative classification}, an example $x$ is processed by a common classifier network, and the class with the largest logit is chosen.}
    \end{subfigure}
    \hfill
    \vspace{0.1in}
    \begin{subfigure}{0.47\textwidth}
        \centering
        \includegraphics[width=0.85\textwidth]{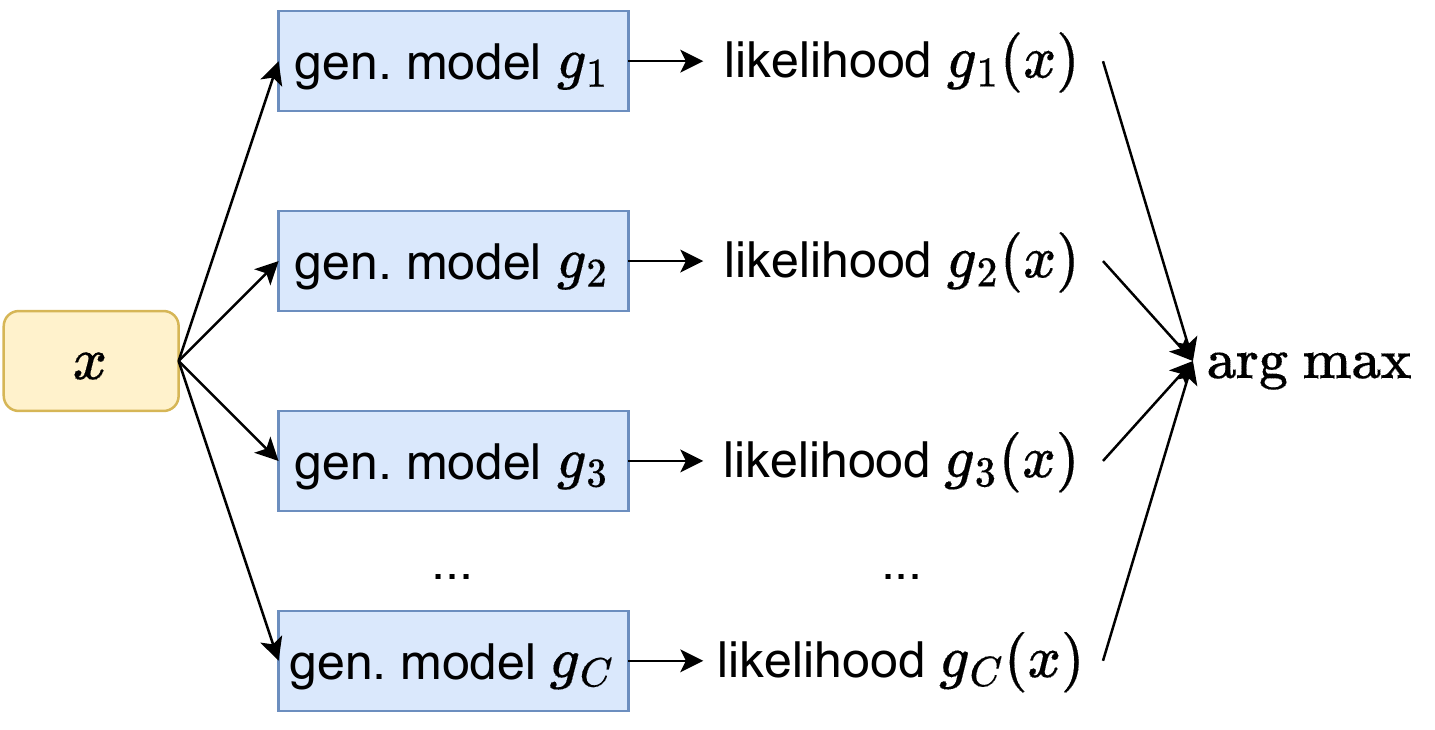}
        \caption{
    \label{fig:explainer_b} In \emph{generative classification}, the likelihood of $x$ is evaluated under each class-specific generative model, and the class with the highest one is chosen.}
    \end{subfigure}
    \hfill
    \vspace{0.1in}
    \begin{subfigure}{0.47\textwidth}
        \centering
        \includegraphics[width=\textwidth]{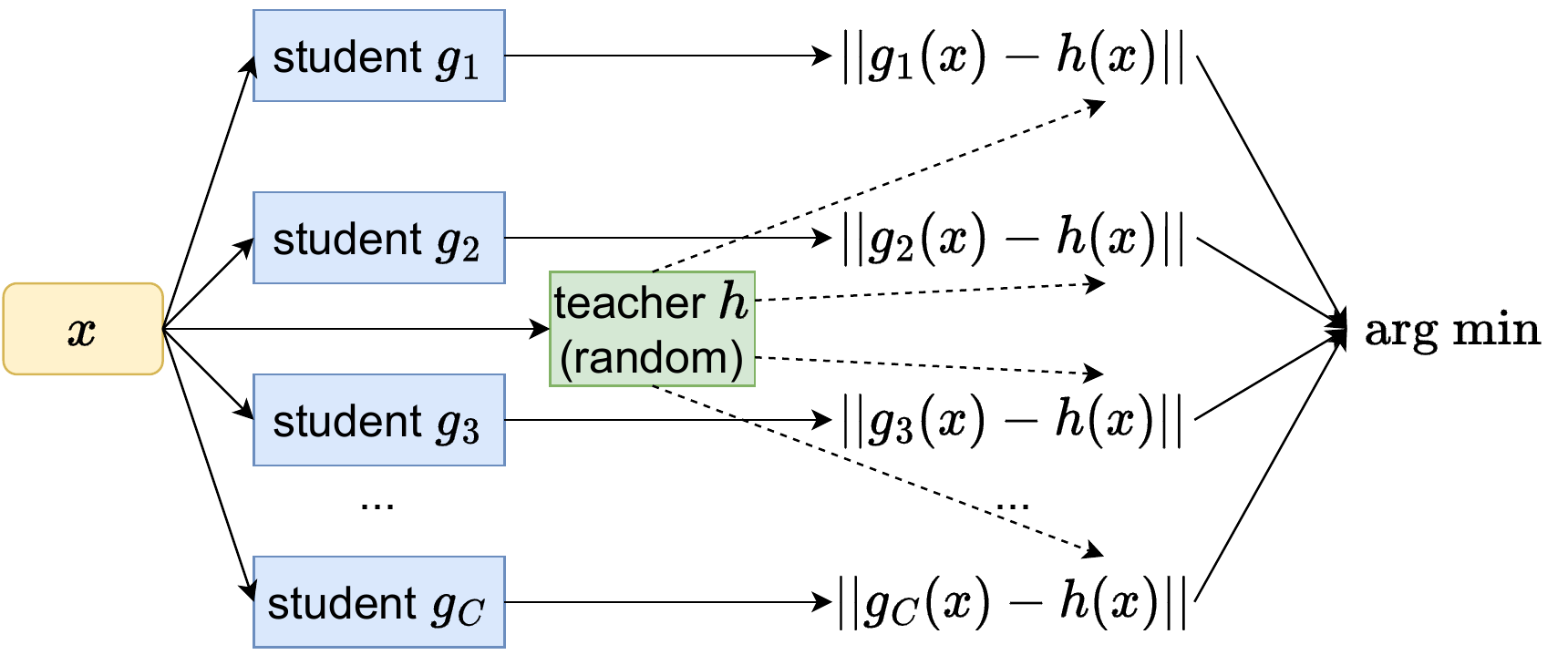}
        \caption{
    \label{fig:explainer_c} In \emph{Prediction Error-based Classification}, $x$ is passed through class-specific student networks and a shared teacher network, which is fixed with randomly initialized weights. The class whose student approximates the teacher most closely is chosen.}
    \end{subfigure}
\vspace{-0.02in}
\caption{Comparison of discriminative classification, generative classification, and Prediction Error-based Classification.}
\label{fig:explainer}
    \vspace{-0.5in}
\end{wrapfigure} 
This approach is inspired by Random Network Distillation~\citep{burda2018exploration}, a technique used for novelty detection in reinforcement learning.
PEC can be viewed as an approximation of a classification rule based on Gaussian Process posterior variance \citep{ciosek2020conservative}, which makes it well-motivated and connected to existing theory.
PEC also offers several important practical 
advantages as a method for CIL. Firstly, it is highly sample efficient, as demonstrated by its strong performance with a limited number of iterations. Secondly, it is a simple and easy-to-tune approach that does not introduce any extra hyperparameters. Thirdly, the absence of forgetting is guaranteed by the separation of modules for different classes. Lastly, PEC can operate even when the data are presented only one class at a time.

Since PEC was designed with efficiency in mind, we focus in our evaluations on the challenging single-pass-through-data, or online, setting \rebuttal{\citep{agem,chaudhry2019tiny,mir,soutif2023comprehensive}}. This setup emphasizes being able to learn from a stream of data, with limited samples, memory, and compute~\citep{agem}. 
We conduct comprehensive experiments on CIL sequences utilizing the MNIST, SVHN, CIFAR-10, CIFAR-100, and miniImageNet datasets. 

Our contributions include:
\begin{itemize}[leftmargin=0.2in]

  \setlength\itemsep{-0.07em}
    \item  We introduce Prediction Error-based Classification (PEC), a novel approach for class-incremental learning. PEC is motivated as an efficient alternative for generative classification, and \rebuttalTwo{can additionally be interpreted} as an approximation of a rule based on Gaussian Process posterior variance.
    \item \looseness-1 We demonstrate the strong performance of PEC in single-pass-through-data CIL. Our method outperforms other rehearsal-free baselines in all cases, and rehearsal-based approaches with moderate replay buffer size in most cases.
\end{itemize}

\vspace{-0.17in}
\begin{itemize}[leftmargin=0.2in]
  \setlength\itemsep{-0.07em}
    \item We provide additional experiments analyzing the crucial factors for PEC's strong performance.
\end{itemize}

\vspace{\topsecgap}
\section{Preliminaries}
\label{sec:preliminaries}
\vspace{\secgap}

\subsection{Class-incremental learning}
\vspace{\subsecgap}
Continual learning is concerned with training machine learning models in an incremental manner, where the training data are not i.i.d.\ but their distribution can shift during the learning process~\citep{de2019continual,DBLP:journals/nn/ParisiKPKW19}. 
Depending on assumptions such as the type of non-stationarity and access to task identity, different types of continual learning have been distinguished~\citep{van2022three}. Arguably, among the most challenging of these is class-incremental learning (CIL)~\citep{masana2022class,zhou2023class,van2022three}.

\looseness-1 For simplicity as well as to be comparable to previous work, in our evaluations we focus on \emph{task-based} class-incremental learning, where the learning process is divided into clearly separated tasks that do not repeat \citep{aljundi2019task,van2021class}. However, it is important to note that PEC is not limited to this setting, as it does not make use of this task structure. In task-based CIL, an algorithm is presented with a sequence of classification tasks $T_1, T_2, \ldots, T_{\tau}.$ During each task $T_i$, a dataset $D^i := \{(x_j^i, y_j^i)\}_j$ is presented, where each $y_j^i$ belongs to a set of possible classes $\rebuttal{\mathcal{C}}^{T_i}$. At test time, the algorithm must decide to which class a given input $x$ belongs, among all possible classes $\bigcup_{i \in \{1, 2, \ldots, \tau \}} \rebuttal{\mathcal{C}}^{T_i}$. Task identity is not provided during testing, meaning that the algorithm must discriminate not only between classes that have been seen together but also between those that have not.

\vspace{\topsecgap}
\subsection{Approaches to class-incremental learning}
\vspace{\subsecgap}
\label{sec:cl_approaches}


\looseness-1 Most approaches to CIL rely on discriminative classification (see Figure~\ref{fig:explainer_a}) together with some mechanism to prevent forgetting. Among those, some methods use a memory buffer in which data from past tasks are stored, while others refrain from it. As storing data might be forbidden in some protocols, it is often considered a desirable property for CIL methods to not rely on a memory buffer.

When stored data from past tasks are available, the typical approach is to revisit that data when training on new data. For instance, \textbf{Experience Replay (ER)}~\citep{chaudhry2019tiny} directly retrains on the buffer data, while \textbf{A-GEM}~\citep{agem} uses the stored data for gradient projections to reduce interference between tasks. \textbf{Dark Experience Replay (DER)}~\citep{der} is a form of ER that combines replay with self-distillation. Further variants of DER include \textbf{DER++} and \textbf{X-DER}~\citep{boschini2022class}. \rebuttal{\textbf{ER-ACE}~\citep{caccia2021new} modifies the loss to only affect the scores corresponding to current classes on non-rehearsal data.}
\textbf{iCaRL}~\citep{rebuffi2017icarl} combines replay with a nearest-mean-of-exemplars classifier in the feature space. \textbf{BiC}~\citep{wu2019large} additionally uses stored data to calibrate logits for classes coming from different tasks, and thus can be classified as a \emph{bias-correcting algorithm}.

Among methods that do not store data, sometimes referred to as \emph{rehearsal-free}, there are
\emph{regularization-based} approaches that add an auxiliary term to the loss function, discouraging large changes to parts of the model important for past tasks. \textbf{EWC} \citep{kirkpatrick2017overcoming} relies on a diagonal approximation of the Fisher matrix to weigh the contributions from different parameters. \rebuttal{\textbf{SI}~\citep{zenke2017continual} maintains and updates per-parameter importance measures in an online manner. \textbf{LwF}~\citep{li2017learning} uses a distillation loss to keep predictions consistent with the ones from a historical model.} Additionally, there are rehearsal-free bias-correcting algorithms such as the \textbf{Labels trick}~\citep{zeno2018task}, which only incorporates classes from the current task in the loss function to prevent negative bias on past tasks. \rebuttal{Since the Labels trick often improves the performance when added on top of rehearsal-free methods, we also consider \textbf{EWC\,+\,LT}, \textbf{SI\,+\,LT}, \textbf{LwF\,+\,LT} variants.}

\looseness-1 Another paradigm for CIL, which also does not require a memory buffer, is generative classification, where class-conditional generative models are leveraged directly for classification, see Figure~\ref{fig:explainer_b}. 
For example, in an approach described in \citet{van2021class}, a separate variational autoencoder (VAE)~\citep{kingma2013auto} is trained for every class; we refer to this method as \textbf{VAE-GC}. However, when only a single pass through the data is allowed, VAE models might be too complex, and generative classifiers based on simpler models might be preferred. One option is \textbf{SLDA}~\citep{hayes2020lifelong}, which can be interpreted as learning a Gaussian distribution for each class with a shared covariance matrix~\citep{van2021class}. This can be simplified further by using an identity covariance matrix, in which case it becomes the \textbf{Nearest mean} classifier.

\vspace{\topsecgap}
\section{Method}
\vspace{\secgap}

\subsection{PEC algorithm}
\vspace{\subsecgap}
\label{sec:pec-algo}
In this section, we describe Prediction Error-based Classification (PEC), which is schematically illustrated in Figure~\ref{fig:explainer_c}. Inspired by Random Network Distillation~\citep{burda2018exploration}, our approach entails creating a distinct \emph{student} neural network $g_{\theta^c}$ for each class $c$, which is trained to replicate the outputs of a frozen random \emph{teacher} network $h_{\phi}: \mathbb{R}^{n} \rightarrow \mathbb{R}^{d}$ for data from that class. Here, $n$ and~$d$ represent the input and output dimensions, respectively, and the output dimension can be chosen arbitrarily. The intuition is that after training, $g_{\theta^c}$ will be competent at mimicking $h_{\phi}$ for the inputs~$x$ that are either from the training inputs set $X_c$, or close to the examples from $X_c$. We leverage here the generalization property of neural networks~\citep{kawaguchi2017generalization,goodfellow2016deep}, which tend to exhibit low error not only on training data but also on unseen test data from the same distribution. If, on the other hand, some out-of-distribution input $\widetilde{x}$ is considered, then the error~$||g_{\theta^c}(\widetilde{x}) - h_{\phi}(\widetilde{x})||$ is unlikely to be small~\citep{ciosek2020conservative}, as the network was not trained in that region.
To classify using PEC, we feed the input $x$ through all networks $g_{\theta^c}$ corresponding to different classes and select the class with the smallest error $||g_{\theta^c}(x) - h_{\phi}(x)||$. The training and inference procedures are outlined in Algorithm~\ref{alg:pec_training} and Algorithm~\ref{alg:pec_inference}, respectively.

\begin{algorithm}[t]

\caption{Training of PEC}
\begin{algorithmic}
\State \textbf{input:} Datasets $X_c$ and student networks $g_{\theta^c}$ for classes $c \in \{1, 2, \ldots, \rebuttal{C}\}$, teacher network $h_{\phi}$, number of optimization steps $S$
\State \textbf{output:} Trained student networks $g_{\theta^c}$, randomly intialized teacher network $h_{\phi}$
\State Initialize $\theta^1, \theta^2, \ldots, \theta^{\rebuttal{C}}$ with the same random values
\State Initialize $\phi$ with random values
\For{class $c \in \{1, 2, \ldots, \rebuttal{C}\}$}
\For{iteration $i \in \{1, 2, \ldots, S\}$}
\State $x \gets$ batch of examples from $X_c$
\State Update $\theta^c$ to minimize $||g_{\theta^c}(x) - h_{\phi}(x)||^2$
\EndFor
\EndFor
\end{algorithmic}
\label{alg:pec_training}
\end{algorithm}

\begin{algorithm}[t]
\caption{Inference using PEC}
\begin{algorithmic}
\State \textbf{input:} Example $x$, student networks $g_{\theta^c}$ for classes $c \in \{1, 2, \ldots, \rebuttal{C}\}$, teacher network $h_{\phi}$
\State \textbf{output:} Predicted class $c_p$

\State \textbf{return} $c_p := \argmin_{c \in \{1, 2, \ldots, \rebuttal{C}\}} ||g_{\theta^c}(x) - h_{\phi}(x)||$

\end{algorithmic}
\label{alg:pec_inference}

\end{algorithm}

\looseness-1 As the architectures for $g$ and $h$, we use shallow neural networks with one hidden layer, either convolutional or linear, followed by \rebuttal{respectively a ReLU or GELU nonlinearity, and then} a final linear layer. Additionally, the teacher network's middle layer is typically wider than the student's one. A detailed description of the used architectures is deferred to Appendix~\ref{app:architectures}. Note that the student networks from different classes can be implemented as a single module in an automatic differentiation framework, jointly computing all class-specific outputs. This means that during inference only two forward passes are needed to obtain the classification scores -- one for such a merged student, and one for the teacher network. 
\rebuttal{We use the Kaiming initialization~\citep{he2015delving}, the default in PyTorch,} \rebuttal{to initialize both the teacher and student networks. We ablate this choice in Appendix~\ref{sec:init} and find that PEC's performance is relatively stable with respect to the initialization scheme.}

This classification paradigm is well-suited for class-incremental learning since each class has its own dedicated model, eliminating interference and preventing forgetting.
Furthermore, the performance of PEC does not depend on task boundaries (e.g., blurry task boundaries are no problem) and the approach works well even if each task contains only one class. 
This is unlike most discriminative classification methods, which struggle in such a setting since they operate by contrasting classes against one another, in consequence requiring more than one class. 
Moreover, as PEC works well with a batch size of $1$, it is also well suited for streaming settings.
Additionally, PEC has the advantage of not introducing any new hyperparameters beyond architecture and learning rate, which are also present in all considered baselines.

\looseness-1 PEC can be motivated from two different angles. One perspective is that PEC is related to the generative classification paradigm, but it replaces the difficult generative task with a much simpler, random task. As will be shown in the experimental section, this leads to significant efficiency improvements. Alternatively, PEC can be seen as an approximation of a principled classification rule based on the posterior variance of a Gaussian Process. We delve into this perspective in the following section.

\vspace{\topsecgap}
\subsection{Theoretical \rebuttalTwo{support for} PEC}
\vspace{\subsecgap}
\label{sec:theory}

In this section, we first explain the classification rule based on Gaussian Process posterior variance, and then we describe how this rule is approximated by our PEC.

\textbf{Gaussian Processes } A Gaussian Process (GP) is a stochastic process, i.e.\ a collection of random variables $\{H(x)\}$ indexed by values $x \in \mathbb{R}^{n}$ for some $n$, such that every finite sub-collection of the random variables follows a Gaussian distribution~\citep{rasmussen2006gaussian}. Typically, a GP is defined in terms of a mean function $\mu : \mathbb{R}^{n} \rightarrow \mathbb{R}$ and a covariance function $k: \mathbb{R}^{n} \times \mathbb{R}^{n} \rightarrow \mathbb{R}$. Given some dataset $(X, y)$, a given prior GP $H$ can be updated to a posterior GP $H_{(X, y)}$. Let us denote by $\sigma(H_X)(x^*)$ the posterior variance of $H_{(X, y)}$ at point $x^*$. A useful property of $\sigma(H_X)(x^*)$ is that it depends only on the inputs~$X$ and not on the targets~$y$~\citep{rasmussen2006gaussian}.
The posterior variance of a GP is often interpreted as a measure of uncertainty~\citep{ciosek2020conservative}.



\looseness-1 \textbf{Classification rule based on GP posterior variance } Let $H$ be a Gaussian Process. Let us assume distributions $\mathcal{X}_1, \ldots, \mathcal{X}_C$ for the data from $C$ different classes, and finite datasets $X_1 \sim \mathcal{X}_1, \ldots, X_C \sim \mathcal{X}_C$. Given some test data point $x^*$, consider the following classification rule:
\begin{equation} \label{eq:gp_rule_main}
\text{class}(x^*):= \argmin_{1 \leq \rebuttal{c} \leq C} \sigma(H_{X_{\rebuttal{c}}})(x^*).
\end{equation}
\looseness-1 Let us informally motivate this rule. Assume that $x^*$ belongs to the class $\rebuttal{\bar{c}}$, for some $1 \leq \rebuttal{\bar{c}} \leq C$. Then, in the dataset $X_{\rebuttal{\bar{c}}}$ for the class $\rebuttal{\bar{c}}$, there are many examples close to $x^*$, which in consequence make the posterior variance $\sigma(H_{X_{\rebuttal{\bar{c}}}})(x^*)$ close to $0$. In contrast, for any $\rebuttal{c} \neq \rebuttal{\bar{c}}$, examples from $X_{\rebuttal{c}}$ are more distant from $x^*$, which means that the reduction from the prior variance $\sigma(H)(x^*)$ to the posterior variance $\sigma(H_{X_{\rebuttal{c}}})(x^*)$ is smaller. Hence, it is intuitively clear that the rule in Equation~\ref{eq:gp_rule_main} should typically choose the correct class~$\rebuttal{\bar{c}}$. In Appendix~\ref{app:theory_details}, we provide formal support for this argument. 


\textbf{PEC approximates GP posterior variance classification rule } Here, we argue that PEC approximates the GP posterior variance classification rule, using tools from~\citet{ciosek2020conservative}. For a given GP~$H$, a dataset~$X$, and a data point~$x^*$, define:
\begin{equation}
    s_B^{H, X}(x^*) := \frac{1}{B} \sum_{i=1}^{B} || g^{h_i, X}(x^*) - h_i(x^*) ||^2, h_i \sim H;
\end{equation}
\begin{equation}
    s_{\infty}^{H, X}(x^*) := \mathbb{E}_{h \sim H} || g^{h, X}(x^*) - h(x^*) ||^2,
\end{equation}
where $B$ is the number of samples from GP $H$ and $g^{h, X}$ is a neural network trained to imitate $h$ on dataset $X$, using some fixed procedure. That is, $s_B^{H, X}(x^*)$, for $B \in \mathbb{Z}_+ \cup \{ \infty \}$, measures the average prediction error at $x^*$ of a neural network trained with dataset $X$ to approximate a function sampled from $H$. Now, let us introduce GP-PEC($B$), a modified version of PEC that we will use in our analyses.
Given classes $1 \leq \rebuttal{c} \leq C$ with datasets $X_1, \ldots, X_C$, GP-PEC($B$), for $B \in \mathbb{Z}_+ \cup \{ \infty \}$, classifies using the following rule:
\begin{equation}
    \label{eq:gp-expected-pec}
    \text{class}(x^*) := \argmin_{1 \leq \rebuttal{c} \leq C} s_B^{H, X_{\rebuttal{c}}}(x^*).
\end{equation}
\looseness-1 We can connect PEC to GP-PEC($B$) by using a sample size of $1$ and by replacing the GP "teacher"~$H$ with a neural network teacher $h$. Note that \rebuttal{our theory does not quantify errors arising from this replacement; however,} the replacement is justified since wide neural networks with random initializations approximate Gaussian Processes~\citep{neal1996priors,yang2019wide}. In our experiments, we use a wide hidden layer for the teacher network, a choice that we ablate in Section~\ref{sec:analyses}. 

\looseness-1 Next, we introduce propositions connecting GP-PEC($B$) with the GP posterior variance classification rule. The proofs for these propositions are provided in Appendix~\ref{app:theory_details}.

\begin{proposition}[Proposition 1 from \citet{ciosek2020conservative}, rephrased]
\label{propo:conservatism_main}
    Let $H$ be a Gaussian Process and $X$ a dataset. Then the following inequality holds:
\begin{equation}
    s_{\infty}^{H, X}(x^*) \geq \sigma(H_{X})(x^*).
\end{equation}
\end{proposition}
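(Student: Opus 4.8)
The plan is to condition on the teacher's values on the training set and reduce the claim to the elementary fact that a constant predictor can do no better, in expected squared error, than the conditional variance of its target. The essential structural observation is that the student $g^{h,X}$ is trained \emph{only} to imitate $h$ on the inputs in $X$, so its output $g^{h,X}(x^*)$ at the test point depends on the sampled function $h$ solely through the observation vector $h(X)$; conditionally on $h(X)$ it is therefore a deterministic (constant) quantity, whereas $h(x^*)$ remains random with conditional distribution given by the GP posterior.

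First, I would record the structural fact above by writing $g^{h,X}(x^*) = G(h(X))$ for a fixed map $G$ encoding the training procedure, so that no information about $h(x^*)$ enters $G$. Second, I would apply the tower property to peel off the conditioning on $h(X)$, turning $s_\infty^{H,X}(x^*)$ into an outer expectation over $h(X)$ of an inner conditional expectation. Third, inside the inner expectation $G(h(X))$ is a fixed vector, so the bias--variance decomposition
\begin{equation}
\mathbb{E}\big[\|G(h(X)) - h(x^*)\|^2 \,\big|\, h(X)\big] = \big\|G(h(X)) - \mathbb{E}[h(x^*)\mid h(X)]\big\|^2 + \Var\big(h(x^*)\mid h(X)\big)
\end{equation}
shows the inner expectation is at least $\Var(h(x^*)\mid h(X))$, with the bias term only hurting. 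Fourth, I would invoke the GP posterior characterization to identify $\Var(h(x^*)\mid h(X)) = \sigma(H_X)(x^*)$ and, crucially, recall from the preliminaries that this posterior variance does not depend on the observed targets $h(X)$. Taking the outer expectation of this constant then gives $s_\infty^{H,X}(x^*) \geq \sigma(H_X)(x^*)$.

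I expect the single nontrivial point to be the first step: establishing that the trained student's prediction at $x^*$ is measurable with respect to $h(X)$ alone, equivalently that $g^{h,X}(x^*)$ is conditionally independent of $h(x^*)$ given $h(X)$. This is exactly what ``trained to imitate $h$ on dataset $X$ using some fixed procedure'' guarantees -- the training loss queries $h$ only at the points of $X$ -- and it also accommodates auxiliary training randomness (initialization, batch order) as long as that randomness is independent of the GP sample. Everything after that is the standard orthogonality argument, and the bound is tight precisely when the student reproduces the posterior mean at $x^*$. For vector-valued teachers ($d > 1$) I would apply the decomposition coordinatewise and sum, which introduces no new difficulty.
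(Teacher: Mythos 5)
Your proof is correct, and it is worth noting how it relates to what the paper actually does: the paper's proof of Proposition~\ref{propo:conservatism_main} contains no argument at all, being a one-line reduction to Proposition~1 of \citet{ciosek2020conservative} via the instantiation $\varphi(X, \hat{h}(X), x^*) := g^{\hat{h}, X}(x^*)$. The only mathematical content the paper supplies is exactly your ``first step'' --- that the trained student's prediction at $x^*$ is a measurable function of $X$ and the teacher's values $h(X)$ alone, which is what licenses plugging $g^{\hat h, X}$ into the cited result. You have effectively re-derived the cited proposition from scratch, and your derivation is the right one: tower property over $h(X)$, the bias--variance identity $\E\bigl[\|G(h(X)) - h(x^*)\|^2 \mid h(X)\bigr] = \|G(h(X)) - \E[h(x^*)\mid h(X)]\|^2 + \Var\bigl(h(x^*)\mid h(X)\bigr)$ with the bias term discarded, and the fact (stated in the paper's Section~\ref{sec:theory}) that the posterior variance depends only on the inputs $X$ and not on the targets, which makes the outer expectation trivial. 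Your treatment of auxiliary training randomness is also sound, since a predictor independent of $h(x^*)$ given $h(X)$ still satisfies the conditional lower bound by the same orthogonality. One small remark on the vector-valued case: if you sum the coordinatewise decompositions for $d$ i.i.d.\ GP output coordinates, the lower bound you obtain is $d \cdot \sigma(H_X)(x^*)$, which is stronger than, and in particular implies, the stated inequality; \citet{ciosek2020conservative} (cf.\ the $1/M$ normalization in Lemma~\ref{lemma:concentration}) average over output dimensions, which is why the clean form $s_{\infty}^{H,X}(x^*) \geq \sigma(H_X)(x^*)$ appears there per dimension. What your route buys over the paper's is a self-contained, elementary proof that makes the measurability hypothesis explicit; what the citation buys is brevity and alignment with the exact technical framing reused later for Proposition~\ref{propo:concentration_main}.
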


\begin{proposition}
\label{propo:concentration_main}
    Let $H$ be a Gaussian Process, $\mathcal{X}$ be a data distribution, and $X := \{ x_1, \ldots, x_N \} \sim \mathcal{X}$ a finite dataset. Assume that our class of approximators is such that for any $h \sim H$, $h$ can be approximated by $g^{h, X}$ so that, \rebuttalTwo{for arbitrarily large training sets,}  the training error is smaller than $\epsilon > 0$ with probability at least $1 - \gamma$, $\gamma > 0$. Then, under technical assumptions, for any $\delta > 0, $ it holds with probability at least $1 - \delta - \gamma$ that
    \begin{equation}
        \mathbb{E}_{x^* \sim \mathcal{X}} \left[ s_1^{H, X}(x^*) \right] \leq \epsilon + \kappa_N,
    \end{equation}
    where $\kappa_N$ goes to $0$ as $N \rightarrow \infty$.
\end{proposition}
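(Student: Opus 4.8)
The plan is to read Proposition~\ref{propo:concentration_main} as a uniform-convergence (generalization) statement: the assumption controls the \emph{training} error of the student $g^{h,X}$, and we must transfer this to the \emph{population} error $\mathbb{E}_{x^* \sim \mathcal{X}}[\,\cdot\,]$ at a price $\kappa_N$ that vanishes as the dataset grows. Fix a single sample $h \sim H$ (recall that with $B=1$ the score is simply $s_1^{H,X}(x^*) = \|g^{h,X}(x^*) - h(x^*)\|^2$), and write the loss $\ell_h(g, x) := \|g(x) - h(x)\|^2$, the empirical risk $\hat{R}(g) := \frac{1}{N}\sum_{j=1}^N \ell_h(g, x_j)$, and the population risk $R(g) := \mathbb{E}_{x^* \sim \mathcal{X}}[\ell_h(g, x^*)]$. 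With this notation the conclusion reads exactly $R(g^{h,X}) \leq \epsilon + \kappa_N$, and the object to control is the generalization gap $R(g^{h,X}) - \hat{R}(g^{h,X})$.

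First I would invoke the assumption to obtain, on an event of probability at least $1-\gamma$, the bound $\hat{R}(g^{h,X}) \leq \epsilon$; the phrase ``for arbitrarily large training sets'' guarantees that this training-error estimate is available for the given (large) $N$. Next, under the technical assumptions --- which I expect to require that the students range over a fixed hypothesis class $\mathcal{G}$ of bounded capacity and that the loss $\ell_h$ is uniformly bounded on the support of $\mathcal{X}$ --- I would apply a standard uniform-convergence bound to the loss class $\{\ell_h(g, \cdot) : g \in \mathcal{G}\}$. This yields, on an event of probability at least $1-\delta$ over $X \sim \mathcal{X}^N$, the estimate $\sup_{g \in \mathcal{G}} |R(g) - \hat{R}(g)| \leq \kappa_N$, where $\kappa_N$ is governed by a capacity measure of $\mathcal{G}$ (e.g.\ a Rademacher-complexity or covering-number term scaling like $O(1/\sqrt{N})$), so that $\kappa_N \to 0$ as $N \to \infty$.

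To finish, since $g^{h,X} \in \mathcal{G}$ by construction, the uniform bound applies to it in particular, giving $R(g^{h,X}) \leq \hat{R}(g^{h,X}) + \kappa_N$. Intersecting the two good events by a union bound (the training-error event fails with probability at most $\gamma$, the uniform-convergence event with probability at most $\delta$), both hold simultaneously with probability at least $1 - \delta - \gamma$, and chaining the inequalities gives $\mathbb{E}_{x^* \sim \mathcal{X}}[s_1^{H,X}(x^*)] = R(g^{h,X}) \leq \epsilon + \kappa_N$, as claimed.

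The main obstacle is the circular dependence of the trained student $g^{h,X}$ on the very dataset $X$ over which we then wish to average: $\hat{R}(g^{h,X})$ is an optimistically biased estimate of $R(g^{h,X})$ precisely because the $x_j$ were used during training. The uniform-convergence step is exactly what defuses this, replacing the data-dependent $g^{h,X}$ by a supremum over all of $\mathcal{G}$ and thereby decoupling the estimate from the training randomness; making this rigorous is where the unstated technical assumptions must do their work, namely bounding the loss and fixing a capacity measure for $\mathcal{G}$ so that $\kappa_N$ genuinely vanishes. A secondary point to handle cleanly is that $h$ is itself random: I would argue conditionally on $h$, noting that the bounds above hold for each fixed $h$ on its good event, so that marginalizing over $h \sim H$ leaves the stated probabilities undisturbed.
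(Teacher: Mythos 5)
Your proof is sound and reaches the stated bound by the same overall bookkeeping as the paper (a good event of probability at least $1-\gamma$ controlling the training error, a good event of probability at least $1-\delta$ controlling the generalization gap, a union bound, and chaining), but the engine you use for the generalization step is genuinely different. The paper does not argue via capacity of a hypothesis class at all: it directly invokes Lemma~4 of \citet{ciosek2020conservative} (restated as Lemma~\ref{lemma:concentration}), whose ``technical assumptions'' are that inputs lie in $\{x : \|x\|_\infty \le 1\}$ with $K \ge 3$, that $\|h\|_\infty \le U$, and that the training procedure outputs $L$-Lipschitz functions. That lemma bounds the population error of the \emph{single} trained network $g^{h,X}$ by its training error plus $LU \cdot O\bigl(N^{-1/K}\sqrt{\log(1/\delta)/N}\bigr)$; the mechanism is concentration of the empirical measure in input space tested against Lipschitz error functions, which defuses the dependence of $g^{h,X}$ on $X$ without ever taking a supremum over a class $\mathcal{G}$. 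Your route instead posits a fixed class $\mathcal{G}$ of bounded Rademacher complexity (or covering numbers) with uniformly bounded loss, and applies textbook uniform convergence, $\sup_{g \in \mathcal{G}} |R(g) - \hat{R}(g)| \le \kappa_N$. Each choice buys something: yours gives a dimension-independent $O(1/\sqrt{N})$ rate but requires controlling the capacity of the set of networks reachable by training, which is a nontrivial assumption; the paper's gives a dimension-cursed $N^{-1/K}$ rate but only asks that the trained network be Lipschitz, with no class-capacity hypothesis. Since the proposition deliberately leaves the ``technical assumptions'' unspecified, your instantiation is a legitimate alternative proof of a statement of this form, just not the one the paper formalizes; your final conditioning on $h$ to handle its randomness matches what the paper does implicitly by fixing the sample $h_1 \sim H$ before applying the lemma (though note your capacity and boundedness hypotheses must then hold uniformly over the relevant $h$, as the paper's $\|h\|_\infty \le U$ does).
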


Proposition~\ref{propo:conservatism_main} states that the scores used by GP-PEC($\infty$) overestimate scores from the GP posterior variance rule. On the other hand, Proposition~\ref{propo:concentration_main} shows that on average, the score assigned by GP-PEC($1$) for the correct class will go to $0$ as $N \rightarrow \infty$, given appropriately strong approximators and technical assumptions listed in Appendix~\ref{app:theory_details}. 
Together, these propositions indicate that GP-PEC($B$), and by extension PEC, approximate the GP posterior variance classification rule in Equation~\ref{eq:gp_rule_main}.
\vspace{\topsecgap}
\section{Experiments}
\label{sec:experiments}
\vspace{\secgap}

 We present an experimental evaluation of PEC on several class-incremental learning benchmarks. Our main result is that PEC performs strongly in single-pass-through-data CIL, outperforming other rehearsal-free methods in all cases and rehearsal-based methods with moderate replay buffer size in most cases. This section is structured as follows: first, we explain the experimental setup in Section~\ref{sec:experimental_setup}. Then, we present the benchmarking results, with Section~\ref{sec:main_performance} focusing on the single-pass-through-data setting, and Section~\ref{sec:performance_regimes} covering a broader spectrum of regimes. In Section~\ref{sec:balance}, we examine the comparability of PEC scores for different classes. Finally, Section~\ref{sec:analyses} investigates the impact of various architecture choices on PEC's performance.

\begin{table*}
  \caption{Empirical evaluation of PEC against a suite of CIL baselines in case of one class per task. The single-pass-through-data setting is used. Final average accuracy (in $\%$) is reported, with mean and standard error from 10~seeds. 
  Parameter counts for all methods are matched, except for those indicated with $^\dagger$, see Section~\ref{sec:experimental_setup}.}
  \vspace{-0.19in}
  \label{tab:main_one_class}
  \begin{center}
  \resizebox{\columnwidth}{!}{%
  \small
  \begin{tabular}{llcp{1.95cm}p{1.95cm}p{1.95cm}p{1.95cm}p{1.85cm}}
    \toprule
    & \!\textbf{Method} & \hspace{-0.0cm}\textbf{Buffer} & \textbf{MNIST} ~~~~~~~~~~~~~~~~(10/1) & \textbf{Balanced~SVHN} (10/1) & \textbf{CIFAR-10} ~~~~~~~~~~~~~~~~(10/1) & \textbf{CIFAR-100} ~~~~~~~~~~~~~~~~(100/1) & \textbf{miniImageNet} (100/1) \\
    \midrule
    & \multicolumn{2}{l}{\!\it Fine-tuning} & 10.09 ($\pm$ 0.00) & 10.00 ($\pm$ 0.00) & ~~9.99 ($\pm$ 0.01) & ~~1.09 ($\pm$ 0.04) & ~~1.02 ($\pm$ 0.02) \\
    & \multicolumn{2}{l}{\!\it Joint, 1 epoch} & 97.16 ($\pm$ 0.04) & 92.69 ($\pm$ 0.08) & 74.16 ($\pm$ 0.13) & 34.87 ($\pm$ 0.17) & 25.29 ($\pm$ 0.15) \\
    & \multicolumn{2}{l}{\!\it Joint, 50 epochs} & 98.26 ($\pm$ 0.03) & 95.88 ($\pm$ 0.03) & 93.30 ($\pm$ 0.08) & 73.68 ($\pm$ 0.08) & 72.26 ($\pm$ 0.12) \\
    \midrule
    \!\!\parbox[t]{0.5mm}{\multirow{12}{*}{\rotatebox[origin=c]{90}{\scriptsize discriminative}}} & \!ER & \hspace{-0.0cm}\checkmark & 84.42 ($\pm$ 0.26) & 74.80 ($\pm$ 0.69) & 40.59 ($\pm$ 1.17) & ~~5.14 ($\pm$ 0.21) & ~~5.72 ($\pm$ 0.24) \\
    & \!A-GEM & \hspace{-0.0cm}\checkmark & 59.84 ($\pm$ 0.82) & ~~9.93 ($\pm$ 0.03) & 10.19 ($\pm$ 0.11) & ~~1.05 ($\pm$ 0.03) & ~~1.11 ($\pm$ 0.07)  \\
    & \!DER & \hspace{-0.0cm}\checkmark & 91.65 ($\pm$ 0.11) & 76.45 ($\pm$ 0.71) & 40.03 ($\pm$ 1.52) & ~~1.05 ($\pm$ 0.07) & ~~0.99 ($\pm$ 0.01) \\
    & \!DER++ & \hspace{-0.0cm}\checkmark & 91.88 ($\pm$ 0.19) &  \bf 80.66 ($\pm$ 0.48) & 35.61 ($\pm$ 2.43) & ~~6.00 ($\pm$ 0.25) & ~~1.40 ($\pm$ 0.09)  \\
    & \!X-DER & \hspace{-0.0cm}\checkmark & 82.97 ($\pm$ 0.14) & 70.65 ($\pm$ 0.61) & 43.16 ($\pm$ 0.46) & 15.57 ($\pm$ 0.27) & ~~8.22 ($\pm$ 0.35)  \\
    & \!ER-ACE & \hspace{-0.0cm}\checkmark & 87.78 ($\pm$ 0.19) & 74.47 ($\pm$ 0.33) & 39.90 ($\pm$ 0.46) & ~~8.21 ($\pm$ 0.21) & ~~5.73 ($\pm$ 0.17) \\
    & \!iCaRL & \hspace{-0.0cm}\checkmark & 83.08 ($\pm$ 0.28) & 57.18 ($\pm$ 0.94) & 37.80 ($\pm$ 0.42) & ~~8.61 ($\pm$ 0.13) & ~~7.52 ($\pm$ 0.14) \\
    & \!BiC & \hspace{-0.0cm}\checkmark & 86.00 ($\pm$ 0.37) & 70.69 ($\pm$ 0.52) & 35.86 ($\pm$ 0.38) & ~~4.63 ($\pm$ 0.20) & ~~1.46 ($\pm$ 0.05)  \\
    & \!Labels trick (LT)\hspace{-0cm} & \hspace{-0.03cm}- & 10.93 ($\pm$ 0.87) & ~~9.93 ($\pm$ 0.08) & 10.00 ($\pm$ 0.22) & ~~1.04 ($\pm$ 0.05) & ~~0.97 ($\pm$ 0.04) \\
    & \!EWC\,+\,LT & \hspace{-0.03cm}- & ~~9.20 ($\pm$ 0.91) & ~~9.90 ($\pm$ 0.09) & 10.54 ($\pm$ 0.17) & ~~1.01 ($\pm$ 0.05) & ~~1.03 ($\pm$ 0.03)   \\
    & \!SI\,+\,LT & \hspace{-0.03cm}- & 10.75 ($\pm$ 1.03) & 10.01 ($\pm$ 0.06) & 10.01 ($\pm$ 0.26) & ~~0.93 ($\pm$ 0.05) & ~~0.94 ($\pm$ 0.04) \\
    & \!LwF\,+\,LT & \hspace{-0.03cm}- & ~~9.35 ($\pm$ 0.72) & ~~9.99 ($\pm$ 0.07) & ~~9.57 ($\pm$ 0.52) & ~~1.00 ($\pm$ 0.09) & ~~0.97 ($\pm$ 0.02)   \\
    \midrule
    \!\parbox[t]{0.5mm}{\multirow{3}{*}{\rotatebox[origin=c]{90}{\scriptsize generative}}} & \!Nearest mean$^\dagger$\hspace{-7cm} & \hspace{-0.03cm}- & 82.03 ($\pm$ 0.00) & 12.46 ($\pm$ 0.00) & 27.70 ($\pm$ 0.00) & ~~9.97 ($\pm$ 0.00) & ~~7.53 ($\pm$ 0.00) \\
    & \!SLDA$^\dagger$ & \hspace{-0.03cm}- & 88.01 ($\pm$ 0.00) & 21.30 ($\pm$ 0.00) & 41.37 ($\pm$ 0.01) & 18.83 ($\pm$ 0.01) & 12.87 ($\pm$ 0.01) \\
    & \!VAE-GC & \hspace{-0.03cm}- & 83.98 ($\pm$ 0.47) & 50.71 ($\pm$ 0.62) & 42.71 ($\pm$ 1.28) & 19.70 ($\pm$ 0.12) & 12.10 ($\pm$ 0.07) \\
    \midrule
    & \!\bf PEC (ours) & \hspace{-0.03cm}- & \bf 92.31 ($\pm$ 0.13) & 68.70 ($\pm$ 0.16) &  \bf 58.94 ($\pm$ 0.09) &  \bf 26.54 ($\pm$ 0.11) &  \bf 14.90 ($\pm$ 0.08) \\
    \bottomrule
  \end{tabular}%
  }
  \end{center}
  \vspace{-0.18in}
\end{table*}

\vspace{\topsecgap}
\subsection{Experimental setup}
\vspace{\subsecgap}
\label{sec:experimental_setup}

All experiments follow the class-incremental learning scenario, and, unless otherwise noted, for each benchmark we only allow a single pass through the data, i.e.\ each training sample is seen only once except if it is stored in the replay buffer. Performance is evaluated as the final test accuracy after training on all classes. We adopt Adam~\citep{kingma2014adam} as the optimizer. We provide additional details about the experimental setup in Appendix~\ref{app:details}.

 \textbf{Tasks and datasets } To construct sequences of tasks, we split up popular image classification datasets such that each task contains data from a subset of classes and those subsets are disjoint. Following~\citet{masana2022class}, if a dataset $D$ is split into $T$ tasks with $C$ classes each, the resulting sequence is denoted $D (T/C)$. For each dataset, we consider two splits: a standard one, with multiple classes in each task, and a more challenging one, with only one class per task. For \textbf{MNIST}~\citep{deng2012mnist}, \textbf{Balanced~SVHN}\footnote{\looseness-1 For the main experiments (Tables~\ref{tab:main_one_class} and~\ref{tab:main_many_classes}), we use Balanced~SVHN, a version of SVHN with equalized numbers of samples per class. We do this to isolate the issue of data imbalance, discussed separately in Section~\ref{sec:balance}.}~\citep{netzer2011reading}, and \textbf{CIFAR-10}~\citep{krizhevsky2009learning}, we use $(5/2)$ and $(10/1)$ splits. For \textbf{CIFAR-100}~\citep{krizhevsky2009learning}, we use $(10/10)$ and $(100/1)$ splits. For \textbf{miniImageNet}~\citep{vinyals2016matching}, we use $(20/5)$ and $(100/1)$ splits. 

\textbf{Methods } We compare PEC against a comprehensive list of CIL baselines, see Section~\ref{sec:cl_approaches}.
Additionally, we include \textbf{Fine-tuning}, which trains a discriminative classifier on a sequence of tasks without employing any continual learning strategy, and \textbf{Joint}, which can be seen as an upper target that jointly trains on all the classes at the same time. \rebuttal{For the non-rehearsal approaches, we report the variants with added Labels trick in the main text; we provide vanilla results in Appendix~\ref{app:rehearsal-no-lt}.}

\looseness-1 \textbf{Architectures } Total parameter counts used by models for each method are matched, except for Nearest mean and SLDA\footnote{As these methods rely on the estimation of data means and covariances, it is not directly possible to manipulate the number of parameters used. We report numbers of parameters for all the methods in Appendix~\ref{app:details_methods}.}. For methods based on discriminative classification, we use standard architectures: for MNIST, an MLP with two hidden layers, 100 neurons each; for other datasets, \rebuttal{a variant of} ResNet18 \citep{he2016deep} \rebuttal{with the initial downscaling removed, as in \citet{der}}. For VAE-GC and PEC, which maintain separate modules per class, we use small and shallow neural networks to match their total parameter count with other approaches; in particular, we do not use ResNets for those methods. We describe details in Appendix~\ref{app:architectures}. 

\textbf{Replay buffers } Although PEC does not use a replay buffer, we also compare against rehearsal-based baselines. We grant these a replay buffer of 500 examples~\citep{der,boschini2022class}. This is additional memory that these methods can use; the rehearsal-free methods are not compensated for this. For each batch of data from the current task containing $b$ examples, a rehearsal-based method can access the replay buffer and use $b$ extra examples from it. 

\textbf{Hyperparameters } For each combination of method, dataset, and task split, we perform a separate grid search to select the learning rate, the batch size, whether to use data augmentation, whether to use linear learning rate decay, and method-dependent hyperparameters. Note that PEC does not introduce the latter. Details on the selected values and search grids are deferred to Appendix~\ref{app:hyperparameters}.

\vspace{\topsecgap}
\subsection{Performance of PEC}
\vspace{\subsecgap}
\label{sec:main_performance}

Here we present the main benchmarking results. Evaluations for the cases of one class per task and multiple classes per task are presented in Table~\ref{tab:main_one_class} and Table~\ref{tab:main_many_classes}, respectively. PEC performs strongly in both cases, achieving the highest score in 7 out of 10 considered settings.

\begin{table}
  \caption{Empirical evaluation of PEC against a suite of CIL baselines in case of multiple classes per task. The single-pass-through-data setting is used. Final average accuracy (in $\%$) is reported, with mean and standard error from 10 seeds. For joint training and generative classifiers, results are the same as with one class per task (see Table~\ref{tab:main_one_class}) and omitted. Results for PEC are also the same, but included to ease comparison. \rebuttal{Appendix~\ref{app:details_methods} explains why these results are the same.} Parameter counts for all methods are matched.}
  \vspace{-0.19in}
  \label{tab:main_many_classes}
  \begin{center}
  \resizebox{\columnwidth}{!}{%
  \small
  \begin{tabular}{llcp{1.95cm}p{1.95cm}p{1.95cm}p{1.95cm}p{1.85cm}}
    \toprule
    & \!\textbf{Method} & \textbf{Buffer} & \textbf{MNIST} ~~~~~~~~~~~~~~~~(5/2) & \textbf{Balanced~SVHN} (5/2) & \textbf{CIFAR-10} ~~~~~~~~~~~~~~~~(5/2) & \textbf{CIFAR-100} ~~~~~~~~~~~~~~~~(10/10) & \textbf{miniImageNet} (20/5) \\
    \midrule
    & \multicolumn{2}{l}{\!\it Fine-tuning} & 19.57 ($\pm$ 0.01) & 19.14 ($\pm$ 0.03) & 18.54 ($\pm$ 0.17) & ~~6.25 ($\pm$ 0.08) & ~~3.51 ($\pm$ 0.04) \\
    \midrule
    \!\!\parbox[t]{0.5mm}{\multirow{12}{*}{\rotatebox[origin=c]{90}{\scriptsize discriminative}}} & \!ER & \checkmark & 86.55 ($\pm$ 0.17) & 80.00 ($\pm$ 0.54) & 57.49 ($\pm$ 0.54) & 11.41 ($\pm$ 0.22) & ~~6.67 ($\pm$ 0.21) \\
    & \!A-GEM & \checkmark & 50.82 ($\pm$ 1.36) & 19.71 ($\pm$ 0.49) & 18.21 ($\pm$ 0.14) & ~~5.25 ($\pm$ 0.14) & ~~2.47 ($\pm$ 0.18) \\
    & \!DER & \checkmark & 91.64 ($\pm$ 0.11) & 78.22 ($\pm$ 0.94) & 43.98 ($\pm$ 1.20) & ~~7.39 ($\pm$ 0.06) & ~~3.86 ($\pm$ 0.04) \\
    & \!DER++ & \checkmark & \bf 92.30 ($\pm$ 0.14) & \bf 86.50 ($\pm$ 0.43) & 50.61 ($\pm$ 0.80) & 19.99 ($\pm$ 0.28) & 10.50 ($\pm$ 0.42) \\
    & \!X-DER & \checkmark & 81.91 ($\pm$ 0.53) & 73.88 ($\pm$ 1.59) & 54.15 ($\pm$ 1.92) & \bf 28.64 ($\pm$ 0.22) & \bf 14.65 ($\pm$ 0.40)  \\
    & \!ER-ACE & \hspace{-0.0cm}\checkmark & 89.97 ($\pm$ 0.29) & 83.30 ($\pm$ 0.33) & 53.25 ($\pm$ 0.69) & 21.02 ($\pm$ 0.32) & ~~8.95 ($\pm$ 0.35) \\
    & \!iCaRL & \checkmark & 80.41 ($\pm$ 0.45) & 73.65 ($\pm$ 0.57) & 54.33 ($\pm$ 0.33) & 21.60 ($\pm$ 0.20) & 13.46 ($\pm$ 0.20) \\
    & \!BiC & \checkmark & 88.08 ($\pm$ 0.81) & 77.65 ($\pm$ 0.45) & 46.74 ($\pm$ 0.83) & 13.16 ($\pm$ 0.28) & ~~6.90 ($\pm$ 0.19) \\
    & \!Labels trick (LT) & \hspace{-0.03cm}- & 38.34 ($\pm$ 2.46) & 31.86 ($\pm$ 0.81) & 26.19 ($\pm$ 0.38) & ~~7.84 ($\pm$ 0.15) & ~~4.44 ($\pm$ 0.08) \\
        & \!EWC\,+\,LT & \hspace{-0.03cm}- & 46.41 ($\pm$ 1.04) & 26.04 ($\pm$ 1.56) & 26.51 ($\pm$ 1.10) & ~~8.64 ($\pm$ 0.26) & ~~3.64 ($\pm$ 0.16)   \\
    & \!SI\,+\,LT & \hspace{-0.03cm}- & 52.45 ($\pm$ 1.84) & 30.81 ($\pm$ 0.86) & 25.80 ($\pm$ 0.78) & ~~8.45 ($\pm$ 0.14) & ~~4.76 ($\pm$ 0.06) \\
    & \!LwF\,+\,LT & \hspace{-0.03cm}- & 55.38 ($\pm$ 1.27) & 52.21 ($\pm$ 1.22) & 40.40 ($\pm$ 0.91) & 14.38 ($\pm$ 0.17) & ~~3.74 ($\pm$ 0.33)   \\
    \midrule
    & \!\bf PEC (ours) & \hspace{-0.03cm}- & \bf 92.31 ($\pm$ 0.13) & 68.70 ($\pm$ 0.16) & \bf 58.94 ($\pm$ 0.09) & 26.54 ($\pm$ 0.11) & \bf 14.90 ($\pm$ 0.08) \\

    \bottomrule
  \end{tabular}%
  }
  \end{center}
  \vspace{-0.165in}
\end{table}

The performance of PEC is especially good in the challenging case of one class per task, see Table~\ref{tab:main_one_class}. Discriminative classification-based approaches struggle in this setting, whereas PEC does not rely on the task structure and achieves the same results for both task splits. On most datasets, PEC substantially outperforms even recently proposed replay-based methods with buffers of 500 examples.

\looseness-1 When only comparing against rehearsal-free methods, PEC achieves the strongest performance in all considered settings. In particular, we see that PEC outperforms VAE-GC, a strong rehearsal-free baseline, as well as the other included generative classifiers. Below we also report that PEC reaches a given performance in a smaller number of iterations and with less model parameters compared to VAE-GC, see Figure~\ref{fig:varying}. This fulfills our original goal to develop a method inspired by generative classification but more efficient. We provide additional experiments in Appendix~\ref{app:more_vae_gc}, where we use the experimental setups from \citet{van2021class}, including multi-epoch learning and parameter counts more suitable for VAE-GC, as well as the usage of pre-trained feature extractors in some cases. We show that PEC prevails also in those setups, which further corroborates its robust performance. 

\begin{figure}[b!]
\vspace{-0.15in}
    \centering
    \includegraphics[width=0.39\textwidth]{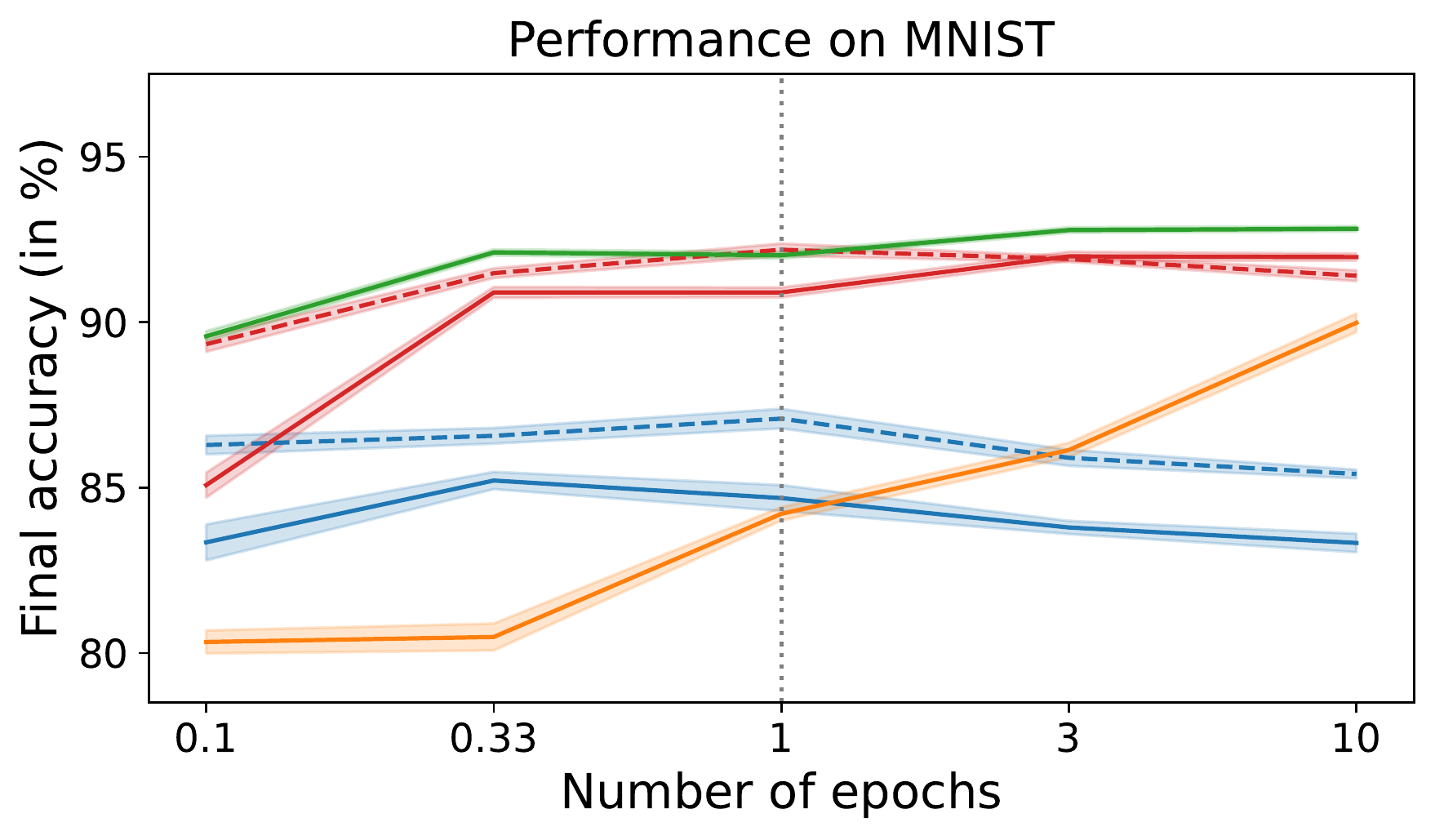}
    \hspace{0.2in}
    \includegraphics[width=0.38\textwidth]{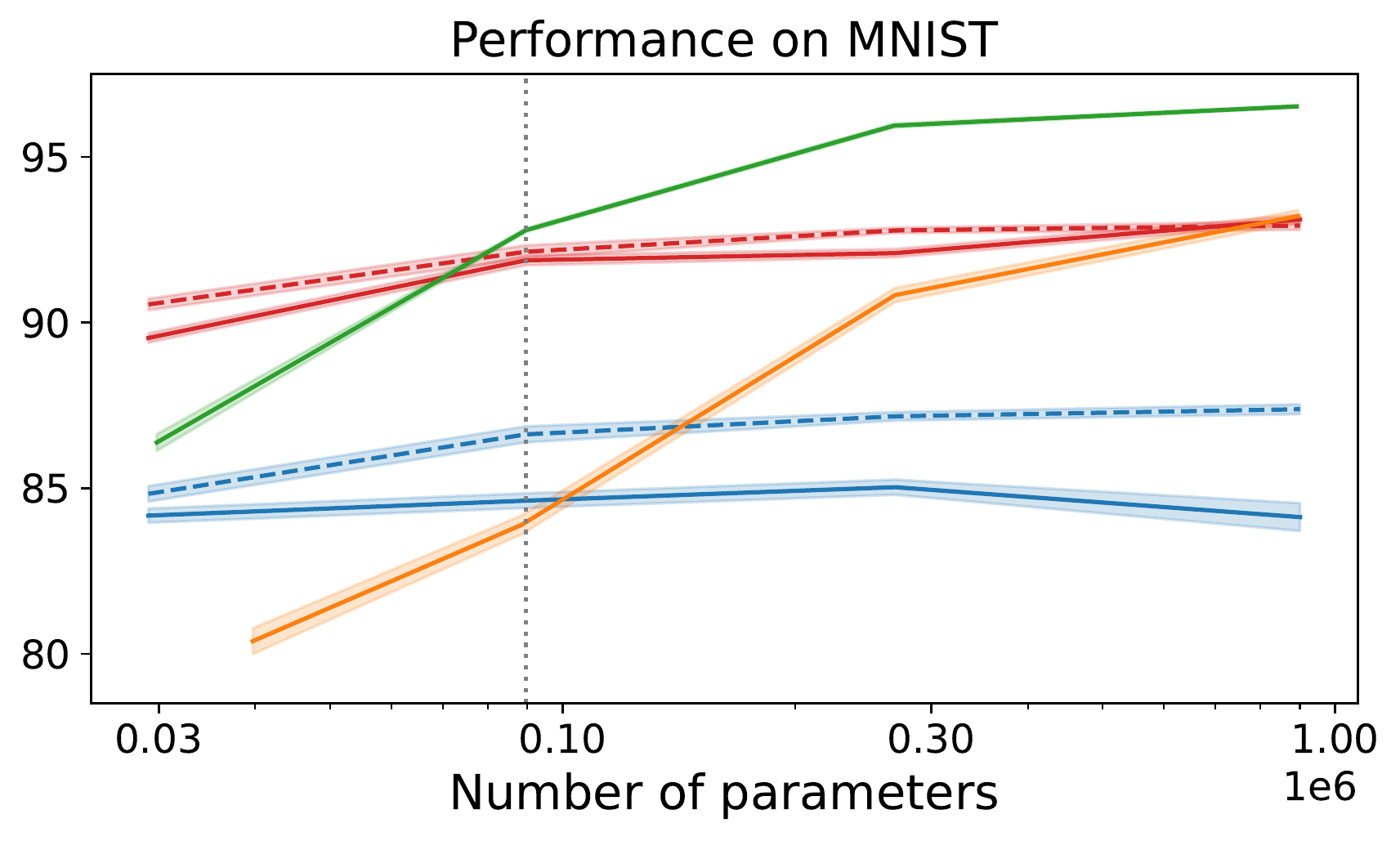} \\
    \includegraphics[width=0.39\textwidth]{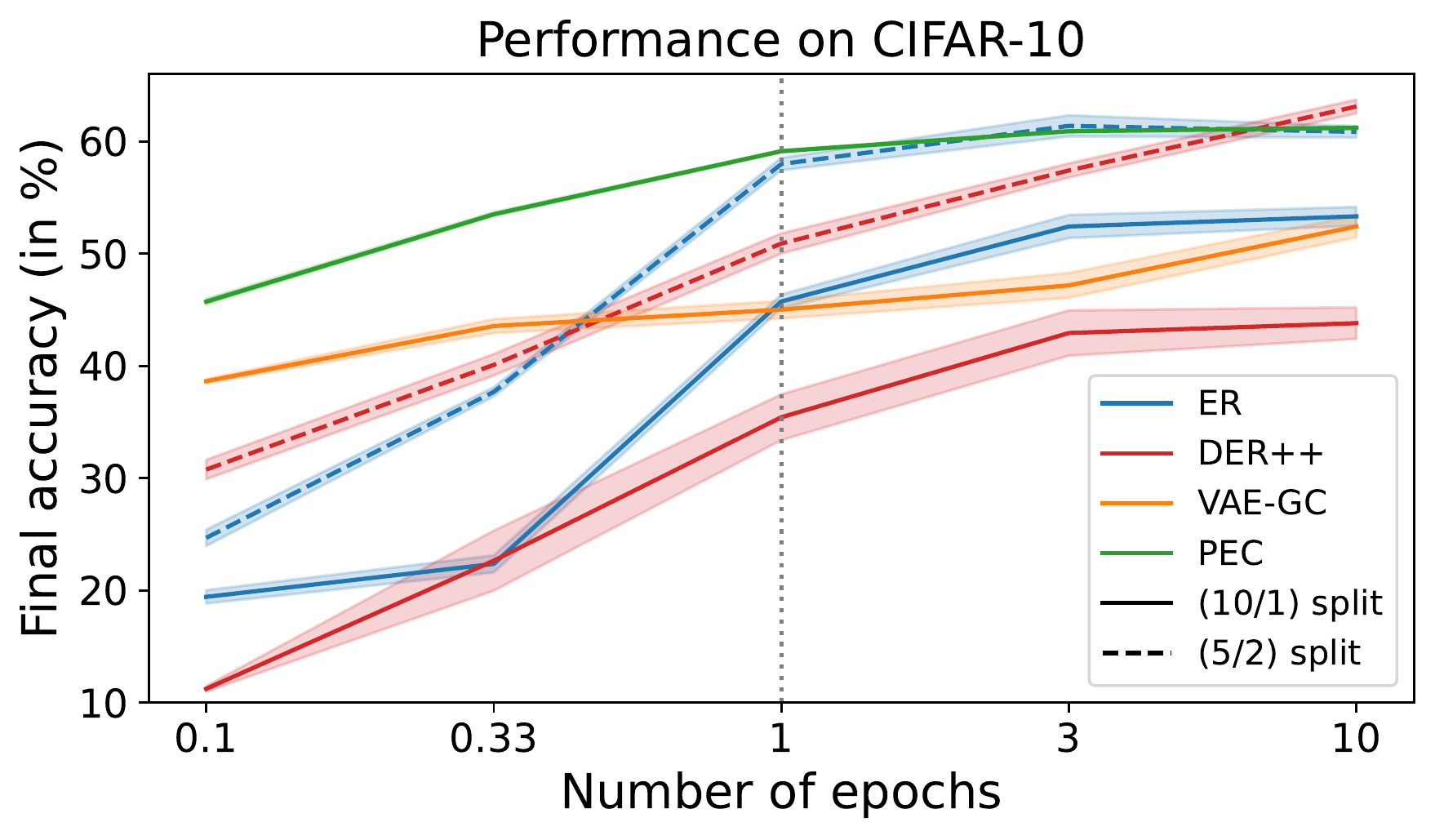}
    \hspace{0.2in}
    \includegraphics[width=0.38\textwidth]{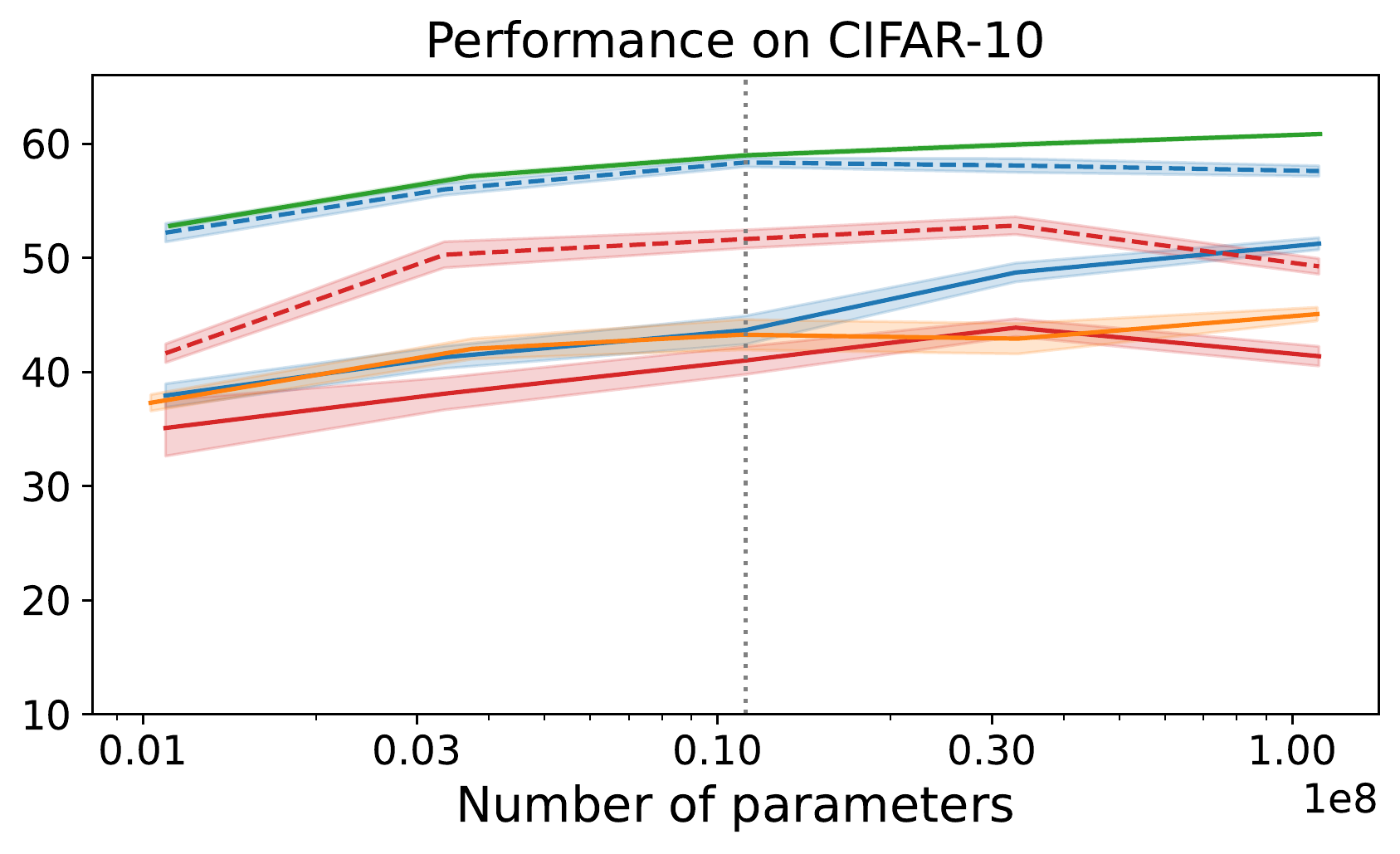}
    \vspace{-0.05in}
    \caption{
\label{fig:varying} CIL performance for varying number of epochs (left) and number of model parameters (right). 
    Plotted are the means (solid or dashed lines) and standard errors (shaded areas) from $10$ seeds.
    For PEC and VAE-GC, results are the same for both task splits \rebuttal{(see Appendix~\ref{app:details_methods})}, and hence single curves are shown.
    Vertical dotted lines indicate the settings used for the experiments in Tables~\ref{tab:main_one_class} and~\ref{tab:main_many_classes}.}
\end{figure}

\vspace{\topsecgap}
\subsection{Performance in varying regimes}
\vspace{\subsecgap}
\label{sec:performance_regimes}

\looseness-1 In this section, we present a comparison of the performance of a representative set of approaches in a wider range of experimental conditions. Specifically, we evaluate ER, DER++, VAE-GC, and PEC while varying two factors: the number of epochs (Figure~\ref{fig:varying}, left) and the number of model parameters (Figure~\ref{fig:varying}, right). We keep the rest of the experimental setup the same as before. We use the hyperparameters as in Section~\ref{sec:main_performance}, except for the learning rates, which we tune for every setting separately.

\looseness-1 We observe good performance of PEC across the board, confirming its robustness to varying experimental conditions. PEC outperforms ER and VAE-GC in all considered cases, and DER++ in most cases. Increasing the number of epochs generally benefits all methods, especially for CIFAR-10. Conversely, scaling the number of parameters beyond the levels used in the main experiments improves the performance of PEC and VAE-GC, but less so for the discriminative approaches, ER and DER++.

\vspace{\topsecgap}
\subsection{Comparability of PEC class scores}
\vspace{\subsecgap}
\label{sec:balance}

\begin{wraptable}{r}{3.7in}
\vspace{-0.135in}
  \caption{Performance of PEC under balanced and imbalanced data, as well as with different mitigation strategies. Reported is the final average accuracy (in \%), with mean and standard error from $10$ seeds.}
  \label{tab:balancing}
  \begin{center}
  \begin{small}
  \vspace{-0.11in}
  \resizebox{3.65in}{!}{%
  \begin{tabular}{lp{1.85cm}p{1.85cm}p{1.85cm}}
  
    \toprule
    \textbf{Method}  & \textbf{CIFAR-10} & \textbf{Imbalanced CIFAR-10} & \textbf{SVHN} \\
    \midrule
    PEC & 58.94 ($\pm$ 0.09) & 28.96 ($\pm$ 0.21) & 21.73 ($\pm$ 0.14) \\
    PEC, Equal budgets & 58.94 ($\pm$ 0.09) & 55.84 ($\pm$ 0.14) & 68.65 ($\pm$ 0.21) \\
    \midrule
    PEC, Oracle balancing & 61.01 ($\pm$ 0.11) & 59.13 ($\pm$ 0.47) & 68.55 ($\pm$ 0.09) \\
    PEC, Buffer balancing & 59.02 ($\pm$ 0.25) & 57.31 ($\pm$ 0.60) & 66.82 ($\pm$ 0.25) \\
    \bottomrule
  \end{tabular}%
  }
  \end{small}
  \end{center}
  \vskip -0.2in
\end{wraptable}

\looseness-1 In this section, we investigate whether the magnitudes of the scores computed by PEC for different classes are comparable.
In Section~\ref{sec:theory}, we \rebuttalTwo{provided support for} the use of PEC scores in the limit of data and ensemble members. However, in practice, we use a limited dataset and a single student-teacher pair. Thus, it is not clear \emph{a priori} whether the obtained scores for different classes would be comparable out of the box.

\looseness-1 To address this question, we examine if the performance of PEC on the CIFAR-10 dataset can be improved by re-scaling the obtained class scores using per-class scalars. For this purpose, we do the following. After the training of PEC, we use the CMA-ES~\citep{hansen2001completely} evolutionary optimization algorithm (see Appendix~\ref{app:balancing_details} for details) to find a scalar coefficient $s_{\rebuttal{c}}$ for each class~$\rebuttal{c}$ to scale its class score; then, $s_{\rebuttal{c}}||g_{\theta^c}(x) - h_{\phi}(x)||^2$ is used as a score for class~$\rebuttal{c}$.
We consider two procedures. \textbf{Oracle balancing} uses the full test set to find the optimal $s_{\rebuttal{c}}$ coefficients, and \textbf{Buffer balancing} uses the replay buffer of $500$ samples from the training set to do so.
We observe in Table~\ref{tab:balancing} that Oracle balancing is able to improve upon the baseline version of PEC, although not by a wide margin, while Buffer balancing does not bring a significant improvement. This suggests good out-of-the-box comparability \rebuttal{of the PEC class scores for a balanced dataset like CIFAR-10}.



\rebuttal{However, in} initial experiments \rebuttal{we found} that an imbalanced training dataset, where classes have different numbers of samples, can severely impair the performance of our method. 
\rebuttal{To} study the impact of dataset imbalance\rebuttal{, we compare} the performance of PEC on CIFAR-10 and Imbalanced~CIFAR-10. The latter is a modified version of CIFAR-10, where the number of examples per class is doubled for class $0$ and halved for class $1$, causing dataset imbalance; the test dataset stays the same. We also study SVHN, which is imbalanced in its original form; in Section~\ref{sec:main_performance}, we reported results for the balanced variant. We observe that in the case of imbalanced datasets, the performance of vanilla PEC drops substantially compared to balanced counterparts, see Table~\ref{tab:balancing}. At the same time, \rebuttal{we find that this deterioration is mostly mitigated by} the balancing strategies \rebuttal{that were introduced above}, \rebuttal{which suggests} that dataset imbalance \rebuttal{affects the} comparability \rebuttal{of the PEC class scores}. 

\looseness-1 Note that Buffer balancing is a valid strategy that could be used in practice if the replay buffer is allowed. For the setting without buffers, we additionally propose 
\textbf{Equal budgets}, which is a simple strategy that equalizes the numbers of training iterations for different classes, see Appendix~\ref{app:balancing_details} for details. We observe that Equal budgets also effectively mitigates the effect of dataset imbalance, see Table~\ref{tab:balancing}.


\vspace{\topsecgap}
\subsection{Impact of architectural choices}
\vspace{\subsecgap}
\label{sec:analyses}

In this section, we examine various architectural choices for PEC and their impact on performance. Figure~\ref{fig:ablations} shows our analysis of the following factors: number of hidden layers (for both the student and the teacher network), width of the hidden layer of the student network, width of the hidden layer of the teacher network, and output dimension. 
For every considered setting, the learning rate is tuned.

We observe that increased depth does not seem to enhance PEC's performance. On the other hand, scaling up the other considered variables leads to consistent improvements for both MNIST and CIFAR-10; note that these correspond to different ways of scaling width. We see that the output dimensions and the width of the teacher's hidden layer seem particularly important. The impact of the latter can be understood through the lens of the GP-based classification rule, where the teacher is treated as approximating a sample from a Gaussian Process, see Section~\ref{sec:theory}; it is known that neural networks approximate Gaussian Processes as their width increases~\citep{neal1996priors}.

\begin{figure}
    \centering
    \includegraphics[width=0.253\textwidth]{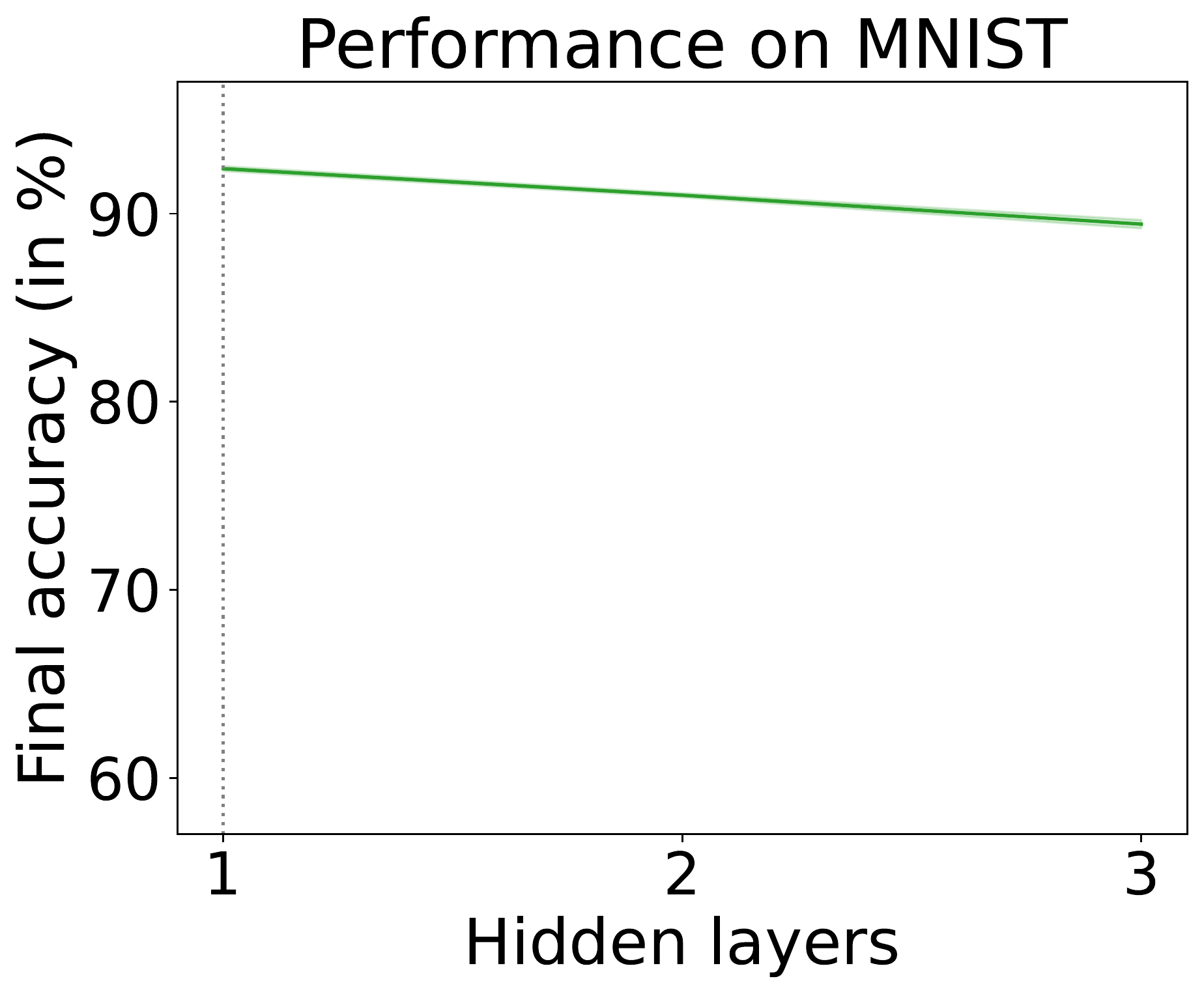}
    \includegraphics[width=0.24\textwidth]{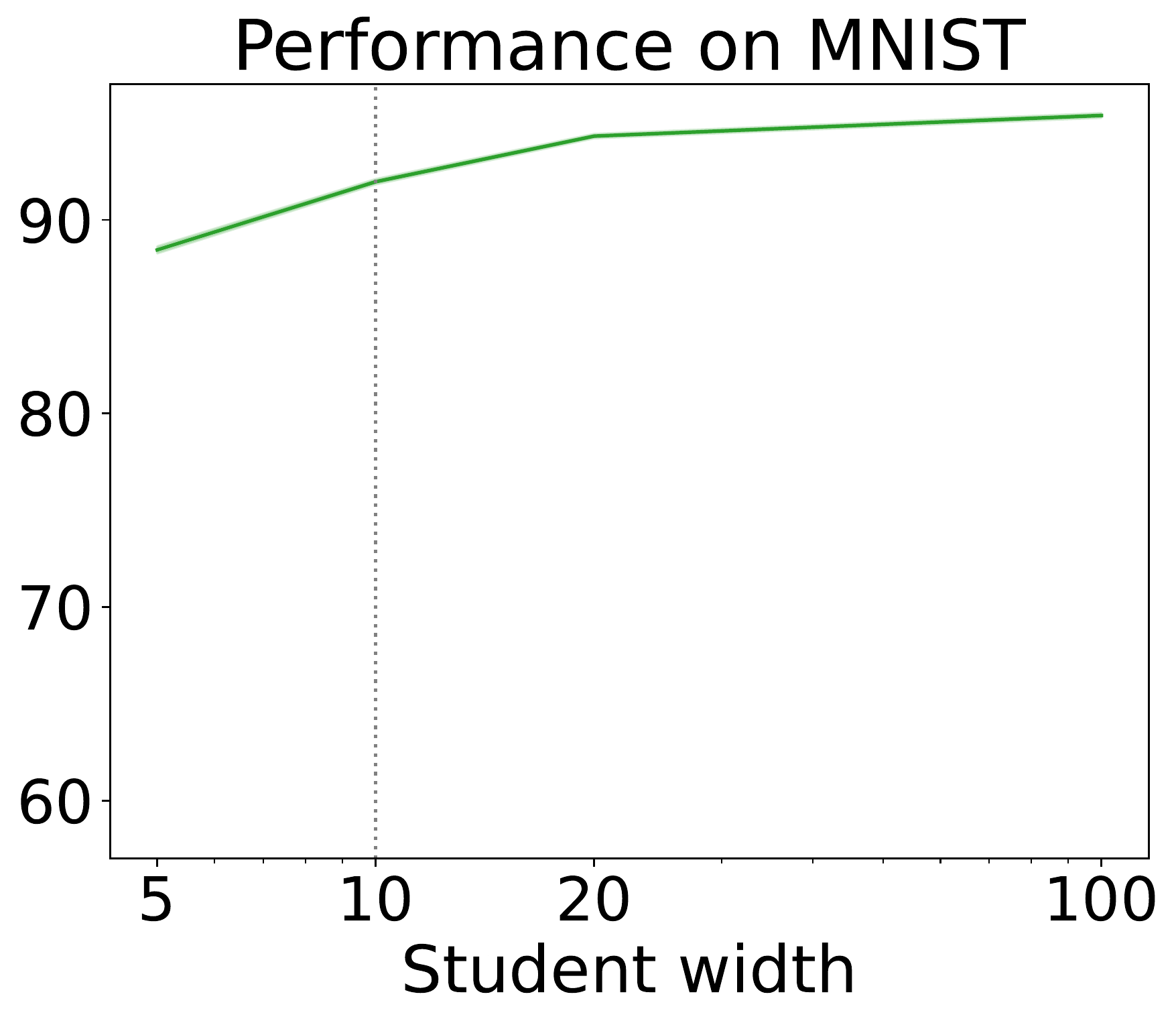}
    \includegraphics[width=0.24\textwidth]{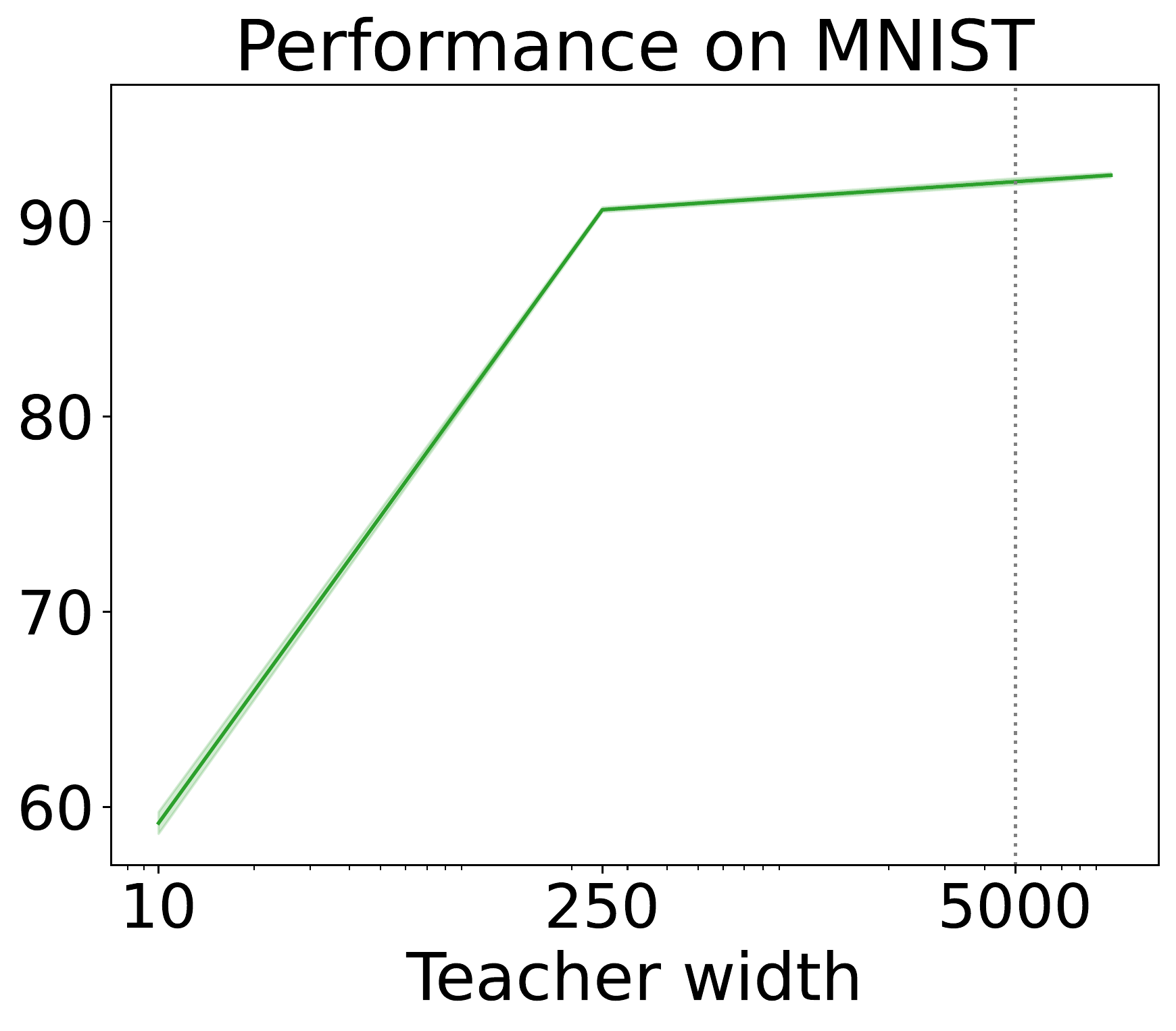}
    \includegraphics[width=0.24\textwidth]{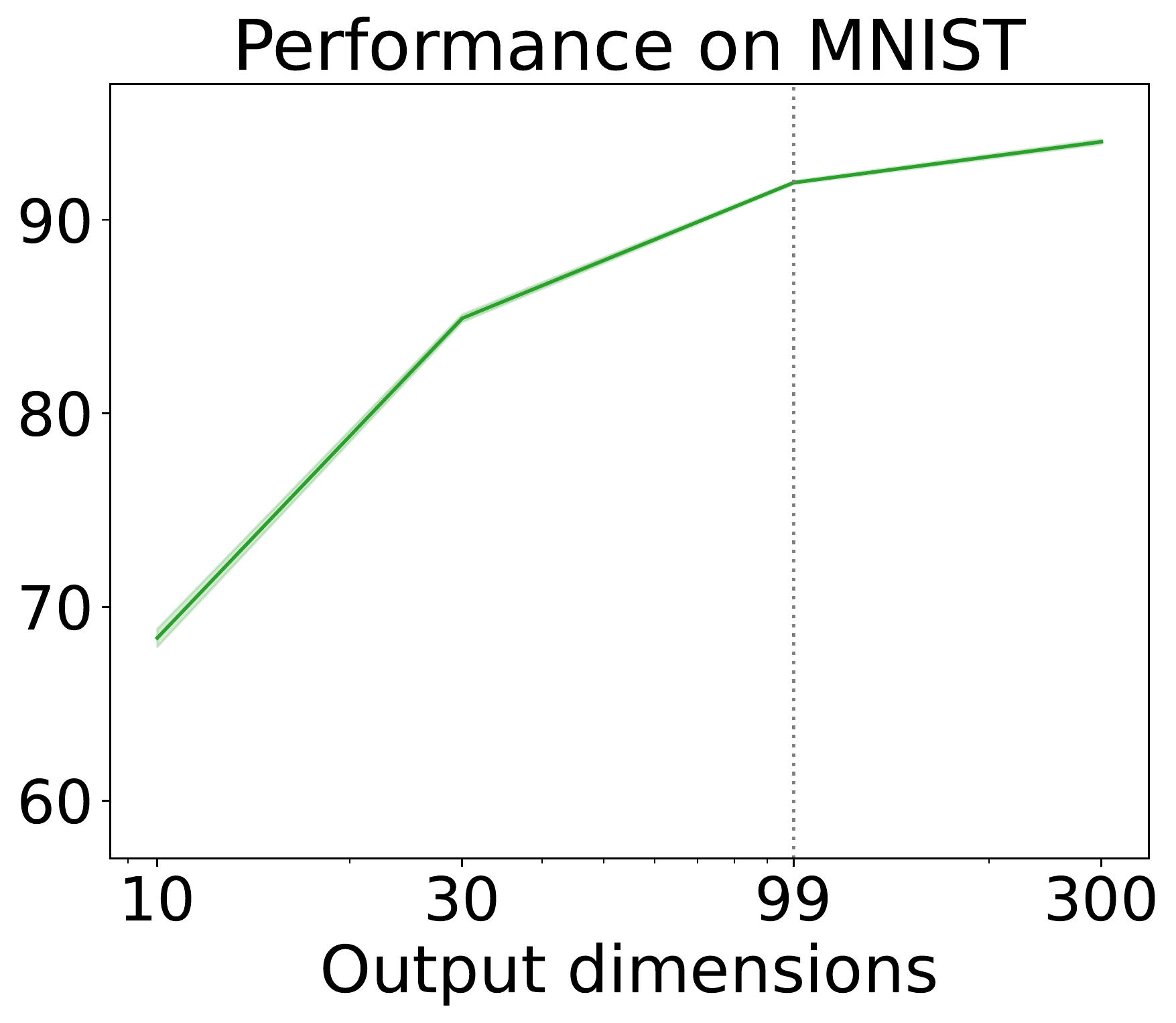} \\
    \includegraphics[width=0.253\textwidth]{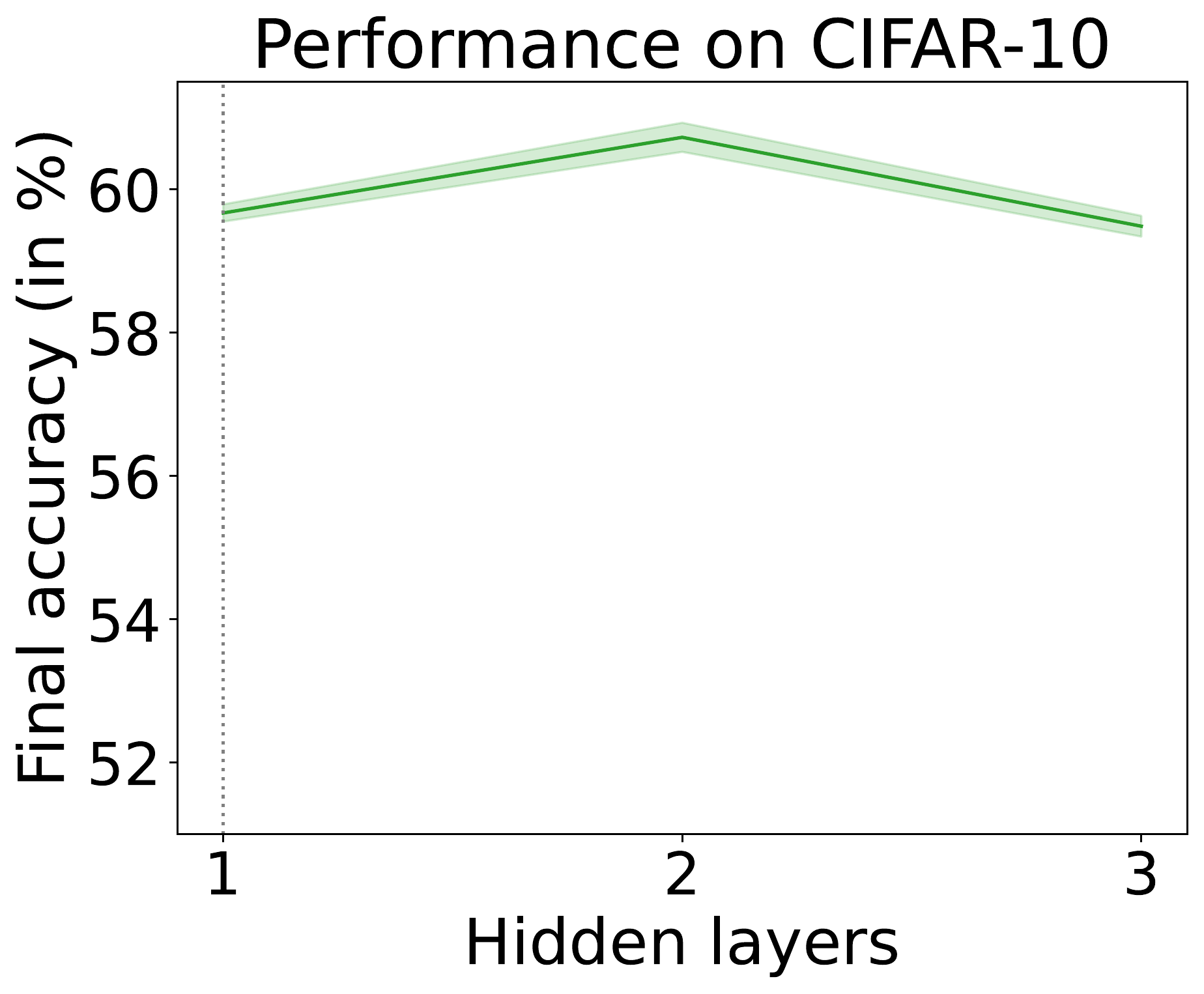}
    \includegraphics[width=0.24\textwidth]{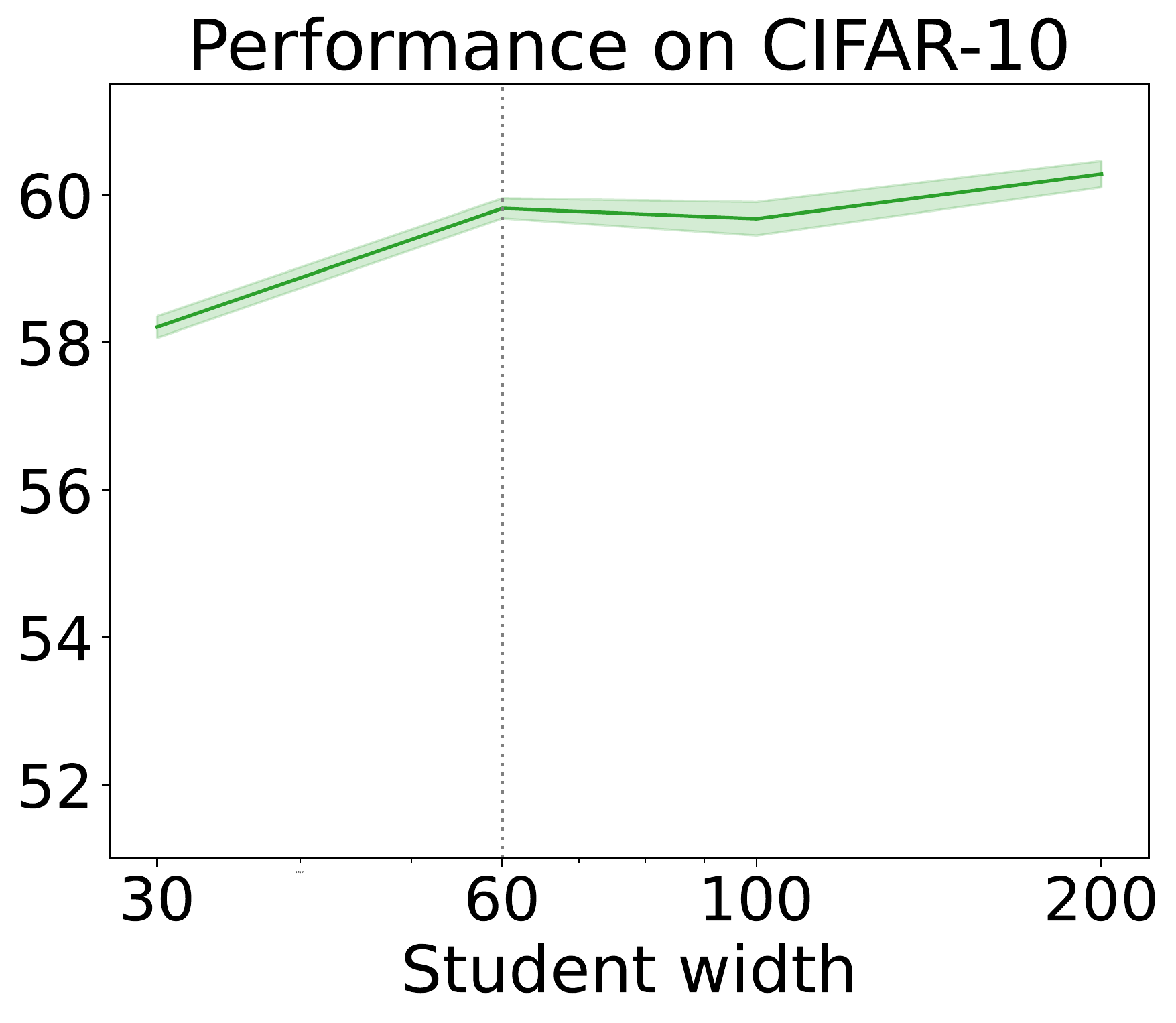}
    \includegraphics[width=0.248\textwidth]{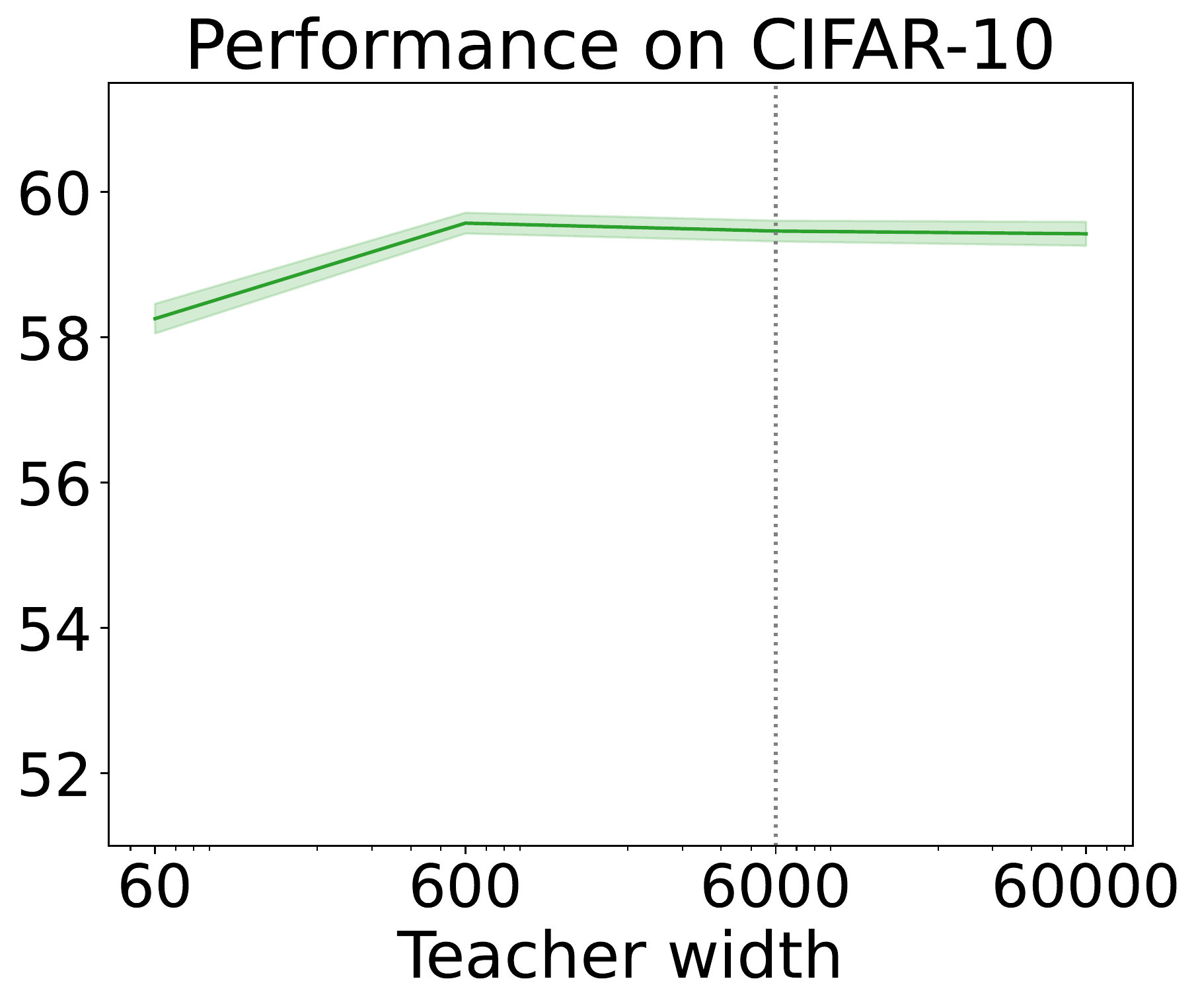}
    \includegraphics[width=0.24\textwidth]{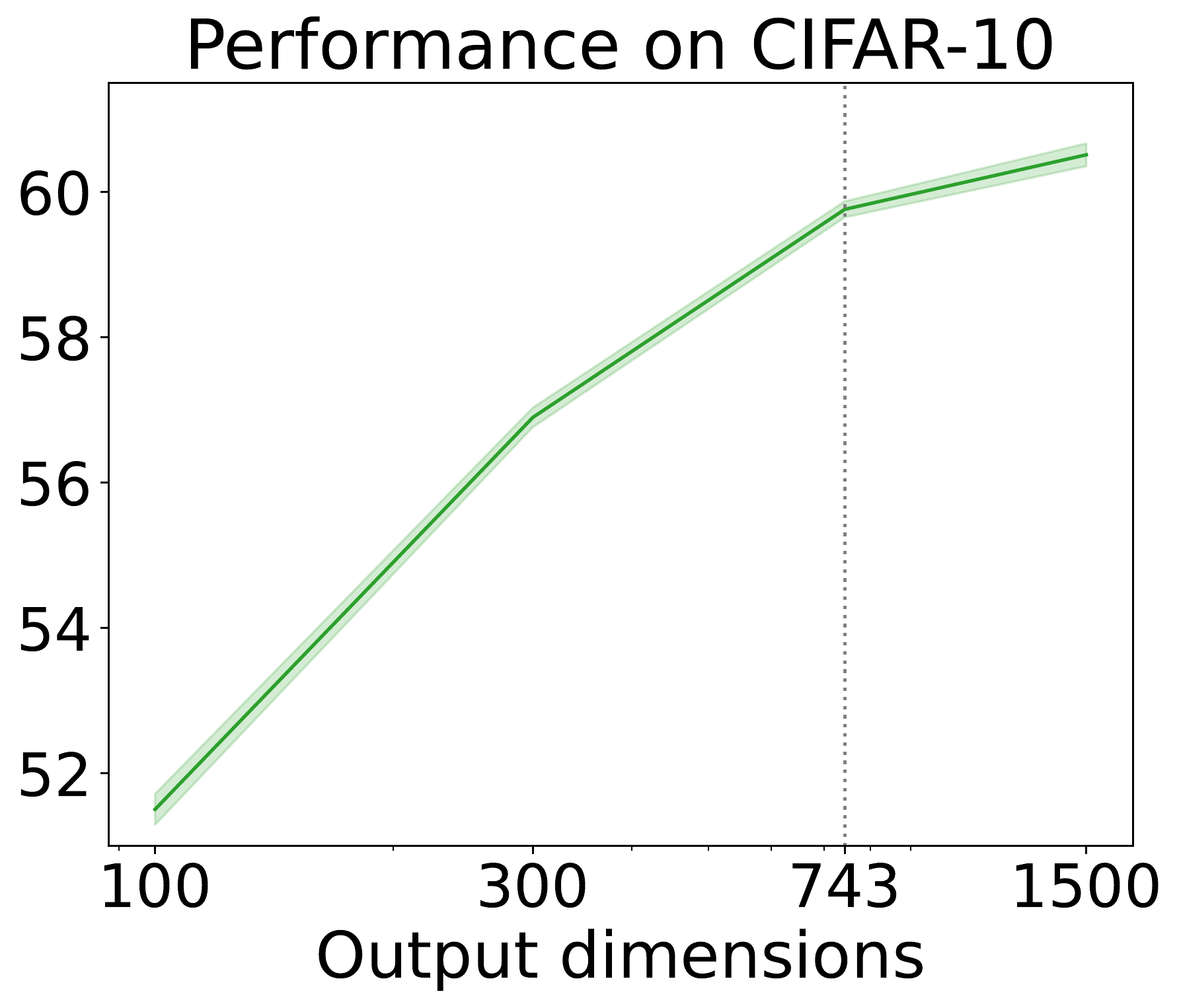}
    \vspace{-0.25in}
    \caption{Impact of architectural choices on PEC's performance. Plotted are the means (solid lines) and standard errors (shaded areas) from $10$ seeds. Vertical dotted lines mark settings used for experiments in Tables~\ref{tab:main_one_class} and~\ref{tab:main_many_classes}.}
    \label{fig:ablations}
    \vspace{-0.18in}
\end{figure}

\vspace{\topsecgap}
\section{Related work}
\vspace{\secgap}


We discuss relevant literature on continual learning and class-incremental learning in Section~\ref{sec:preliminaries}.

\looseness-1 \textbf{Random Network Distillation } An important foundational work to our study is \citet{burda2018exploration}, where Random Network Distillation is proposed. This method consists of cloning a random frozen neural network into a trained one and then utilizing prediction errors to detect novel states in a reinforcement learning environment. While we reuse this principle in our PEC, we note that to the best of our knowledge, this approach has not previously been used for class-incremental learning, or even classification in general. Another work on which we build is \citet{ciosek2020conservative}, which reinterprets Random Network Distillation as a principled way to perform uncertainty estimation for neural networks; we use ideas from this work to build our theoretical \rebuttalTwo{support} for PEC.

\textbf{OOD detection for task inference } The works of \citet{kim2022continual,kim2022theoretical} decompose CIL into task-incremental learning and task inference, and then tackle the latter with out-of-distribution (OOD) detection methods. Our PEC scores could be interpreted through a similar lens. However, the approaches used in the mentioned works are not suitable in cases of single-pass-through-data 
or one-class-per-task. 
Another related work is \citet{henning2021posterior}, which uses uncertainty-based OOD detection for task recognition.

\vspace{\topsecgap}
\section{Limitations and future work}
\vspace{\secgap}


\looseness-1 Although we have shown the effectiveness of PEC, we are also mindful of some of its limitations:
\begin{itemize}[nosep, leftmargin=0.2in]
    \item The current version of PEC completely separates information coming from different classes. While this eliminates forgetting, it also prevents the transfer of information.
    \item In its current form, PEC uses a separate student model for each class, resulting in a linear increase in the number of parameters with the number of classes. However, as PEC uses small and shallow networks, the severity of this issue is limited in practice.
    \item As indicated in Section~\ref{sec:balance}, the performance of PEC drops in the case of data imbalance. However, this deterioration is mostly eliminated with the proposed mitigation strategies.
\end{itemize}

An interesting, and perhaps surprising, design choice of PEC is the use of a random shallow neural network as a teacher. While this choice has been empirically justified by its strong performance, and there is theoretical support for this choice through the link with Gaussian Processes, it is possible that the performance of PEC could be improved further by considering different teacher functions. Options might be to use (pre-)trained neural networks or perceptual hashes~\citep{zauner2010implementation}. We leave exploring the effectiveness of such alternative forms of the teacher function for future work.

\vspace{\topsecgap}
\section{Conclusions}
\vspace{\secgap}

In this study, we proposed Prediction Error-based Classification (PEC). This novel approach for class-incremental learning uses the prediction error of networks trained to mimic a frozen random network to generate a score for each class. We demonstrated the strong empirical performance of PEC in single-pass-through-data CIL, outperforming rehearsal-free methods in all cases and rehearsal-based methods with moderate replay buffer size in most cases.

\section*{Acknowledgements}

Michał Zając thanks Marcin Sendera and Tomasz Kuśmierczyk for helpful discussions. This paper is part of a project that has received funding from the European Union under the Horizon 2020 research
and innovation program (ERC project KeepOnLearning, grant agreement No.~101021347) and under Horizon Europe
(Marie Skłodowska-Curie fellowship, grant agreement No.~101067759). This research was supported by the PL-Grid Infrastructure. The computations were partially performed in the Poznań Supercomputing and Networking Center. The work of Michał Zając was supported by the National Centre of Science (Poland) Grant No.~2021/43/B/ST6/01456. Michał Zając acknowledges support from Warsaw University of Technology within the Excellence Initiative: Research University (IDUB) program.

\bibliography{iclr2024_conference}
\bibliographystyle{iclr2024_conference}

\newpage
\appendix
\section{Authors contributions}

Michał came up with the idea for PEC, implemented all experiments,
developed the theoretical argument, wrote the initial draft of the
paper, contributed to the discussions, and proposed some of the
experiments.

Tinne co-supervised the project, provided feedback on the experiments
and the draft, contributed to the discussions, and proposed some of
the experiments.

Gido supervised the project, suggested Michał to work on generative
classification, provided feedback on the experiments and the draft,
contributed to writing, contributed to the discussions, and proposed
some of the experiments.

\section{Details on experimental setup}
\label{app:details}

We implement our experiments using PyTorch~\citep{paszke2019pytorch} and Mammoth\footnote{GitHub repository: \url{https://github.com/aimagelab/mammoth}.}~\citep{der,boschini2022class}, except for the experiments in Appendix~\ref{app:more_vae_gc}, which are implemented on top of the codebase from \citet{van2021class}\footnote{GitHub repository: \url{https://github.com/GMvandeVen/class-incremental-learning}.}.
Note that in PEC, we use default initialization from PyTorch (i.e., Kaiming initialization~\citep{he2015delving}) for both student and teacher networks.

We present details on architectures in Appendix~\ref{app:architectures}, on method implementations in Appendix~\ref{app:details_methods}, on hyperparameters in Appendix~\ref{app:hyperparameters}, on balancing strategies in Appendix~\ref{app:balancing_details}, and on data augmentations in Appendix~\ref{app:data_augmentations}.

\subsection{Architectures}
\label{app:architectures}

We outline neural network architectures used in PEC experiments. MLP (used for MNIST~\citep{deng2012mnist}) is described in Table~\ref{tab:arch_mlp} and convolutional architecture (used for SVHN~\citep{netzer2011reading}, CIFAR-10~\citep{krizhevsky2009learning}, CIFAR-100~\citep{krizhevsky2009learning}, and miniImageNet~\citep{vinyals2016matching}) is presented in Table~\ref{tab:arch_conv}.

Both architectures have a width parameter $w$ for the middle layer. As discussed in Section~\ref{sec:analyses}, PEC benefits from having the teacher network wider than the students, and consequently, we use separate values, $w_g$ and $w_h$, for student and teacher width, respectively. Additionally, convolutional architecture has a parameter $r$ that describes resolution after average pooling.

\begin{table}
\centering
\caption{MLP architecture used in PEC experiments.}
\begin{tabular}{ c }
\toprule
linear hidden layer, $w$ neurons \\
layer normalization \\
GELU activation \\
linear output layer, $d$ neurons \\
\bottomrule
\end{tabular}
\label{tab:arch_mlp}

\vspace{0.3in}

\end{table}
\begin{table}
\centering
\caption{Convolutional architecture used in PEC experiments.}
\begin{tabular}{ c }
\toprule
convolutional hidden layer, $w$ channels, $3 \times 3$ filter, padding $= 1$ \\
instance normalization \\
ReLU activation \\
average pooling, $r_{input} \times r_{input} \to r \times r$ \\
flattening layer \\
linear output layer, $d$ neurons \\
\bottomrule
\end{tabular}
\label{tab:arch_conv}
\end{table}

\vspace{0.1in}

Below we list settings used for different datasets:

\begin{itemize}
    \item MNIST: MLP network, $w_g = 10$, $w_h = 5000$, $d = 99$.
    \item Balanced SVHN and CIFAR-10: convolutional network, $w_g = 60$, $w_h = 6000$, $d = 743$, $r = 5$.
    \item CIFAR-100 and miniImageNet: convolutional network, $w_g = 40$, $w_h = 4000$, $d = 172$, $r = 4$.
\end{itemize}

These values vary mostly to accommodate different parameter budgets and input sizes. In particular, values of $d$ not being round numbers is a consequence of matching a given parameter budget. In our preliminary experiments, we did some tuning of the architectures while looking at validation accuracy. However, this tuning was not performed for all datasets, and we directly reuse architectures from CIFAR-10 to Balanced SVHN, and from CIFAR-100 to miniImageNet. In the latter case, we transfer the architecture despite different input sizes, which is possible because of the use of the convolutional layer and average pooling. Moreover, given the form of the architectures from Tables \ref{tab:arch_mlp} and \ref{tab:arch_conv} and some parameter budget, there is a very limited search space for the concrete architecture. Hence, we expect that the architectures we found would also perform well for other datasets with compatible input sizes.

\subsection{Details on the implementation of methods}
\label{app:details_methods}

\textbf{Parameter counts} We report total parameter counts in Table~\ref{tab:param_counts}. For all methods except for Nearest mean and SLDA~\citep{hayes2020lifelong}, these are matched. Note that for PEC, we do not count the parameters of the frozen random teacher $h$ towards the limit; since these parameters are fixed, the algorithm does not have to remember them, and can instead recreate them using only the remembered random seed.

\begin{table}
\caption{ \label{tab:param_counts} Parameter counts used by the methods.}
\resizebox{\columnwidth}{!}{
\begin{tabular}{lp{1.3cm}p{2.4cm}p{1.7cm}p{1.8cm}p{2.2cm}}
\toprule
\textbf{Method} & \textbf{MNIST} & \textbf{Balanced SVHN} & \textbf{CIFAR-10} & \textbf{CIFAR-100} & \textbf{miniImageNet} \\ \midrule
Nearest mean & ~~~~$7.8K$ & ~~~~~~~~$30.7K$ & ~~~~$30.7K$ & ~~~~$307.0K$ & ~~~~~~$2.1M$ \\ 
SLDA & $622.0K$ & ~~~~~~~~~~$9.5M$ & ~~~~~~$9.5M$ & ~~~~~~~~$9.7M$ & ~~~~~~$9.7M$ \\ 
All other & ~~$89.6K$ & ~~~~~~~~$11.2M$ & ~~~~$11.2M$ & ~~~~~~$11.2M$ & ~~~~$11.2M$ \\
\bottomrule
\end{tabular}
}
\end{table}

\vspace{0.07in}

\rebuttal{\textbf{Number of operations needed} We compare the number of multiply-accumulate operations (MACs) needed for a single inference forward pass, comparing our PEC versus the discriminative approaches. Results in Table~\ref{tab:macs} show that PEC typically uses fewer operations. Intuitively, this is because PEC does a substantial part of its computation in a fully connected layer which has a much smaller compute-to-parameters density compared to convolutional layers that mostly constitute the ResNet architecture utilized in discriminative approaches.}

\begin{table}
\caption{ \label{tab:macs} \rebuttal{Multiply-accumulate operations (MACs) used in a single inference forward pass.}}
\resizebox{\columnwidth}{!}{
\begin{tabular}{lp{1.3cm}p{2.4cm}p{1.7cm}p{1.8cm}p{2.2cm}}
\toprule
\textbf{Method} & \textbf{MNIST} & \textbf{Balanced SVHN} & \textbf{CIFAR-10} & \textbf{CIFAR-100} & \textbf{miniImageNet} \\ \midrule
 PEC & ~~$4.5M$ & ~~~~~~~~$352M$ & ~~~~$352M$ & ~~~~$300M$ & ~~~~~$1.9G$ \\
 discriminative & $90.0K$ & ~~~~~~~~$556M$ & ~~~~$556M$ & ~~~~$556M$ & ~~~~~$3.9G$ \\
\bottomrule
\end{tabular}
}
\end{table}

\vspace{0.07in}

\textbf{Replay-based methods} For the replay-based methods, we allow taking an extra $b$ examples from the buffer for each batch of the current data containing $b$ examples. Some of the methods (namely, DER++~\citep{der} and X-DER~\citep{boschini2022class}) sample from the replay buffer multiple times for each batch of current data. We modified implementations so that the algorithm only samples once from the replay buffer, and then reuses those samples to compute the various losses defined in those methods.

\vspace{0.07in}

\textbf{A-GEM} We use the implementation of A-GEM~\citep{agem} with reservoir replay buffer from Mammoth.

\vspace{0.07in}

\textbf{SLDA} To make this algorithm comply with the single-pass-through-data setting, we do not start with a batch-wise computation of the covariance matrix on the first task and only rely on online updates, using an identity covariance matrix as a starting point. SLDA uses \emph{shrinkage parameter} $\epsilon$, which is a coefficient for the added identity matrix in the precision matrix computation. 
While originally it was proposed to be a constant equal to $0.0001$,
we observed in our experiments that setting much higher values for this parameter can act as a useful regularizer and yield improved results. We describe search space and selected values for this parameter in Appendix~\ref{app:hyper_spaces_and_values}. Additionally, for SLDA experiments on miniImageNet, we rescale the inputs from $84 \times 84$ to $32 \times 32$ to make the covariance matrix computation feasible.

\vspace{0.04in}

\textbf{VAE-GC} 
For VAE-GC we use architectures from~\citet{van2021class} with the widths adjusted to match parameter counts from other methods. We use $100$ importance samples to estimate likelihoods. Furthermore, for VAE-GC experiments on miniImageNet, we rescale the inputs from $84 \times 84$ to $32 \times 32$ to make the generative task simpler; we found that this resolution works better than the original one.

Finally, note that for PEC and the considered generative classifiers (Nearest mean, SLDA and VAE-GC), we only report results for the one-class-per-task splits. The reason is that the performance of these methods should not depend on the task split, because the class-specific modules of these methods are only updated on examples from one class anyway. A slight subtlety is that when using a batch size larger than one, the mini-batches in the multiple-classes-per-task setting contain samples from multiple classes, thus reducing the effective per-class batch size (and making it variable, as each mini-batch does not need to contain an equal number of samples from each class in the task). For simplicity, and because for PEC we always use batch size $= 1$, we ignore this subtlety.

\subsection{Hyperparameters}
\label{app:hyperparameters}

Here, we first explain how the hyperparameter searches were performed. Then, in Appendix~\ref{app:hyper_spaces_and_values} we provide the search space and selected values for every method. Finally, in Appendix~\ref{sec:app_sensitivity} we study the sensitivity of PEC and other methods to hyperparameters.

To select hyperparameters for our benchmarking experiments presented in Tables \ref{tab:main_one_class} and ~\ref{tab:main_many_classes}, for every combination of method, dataset, and task split, we perform a separate grid search. For every considered set of hyperparameters, we perform one experiment with a single random seed. We then select the hyperparameter values that yield the highest final validation set accuracy, and we use those values for our benchmarking experiments with ten different random seeds.

\looseness-1 While it has been argued that in continual learning, extensive hyperparameter searches are not desired and instead hyperparameters should be decided on the fly~\citep{agem}, we perform the searches to ensure that the baseline methods get as good performance as possible, even if in practice they could not be tuned so well. We note that PEC does not introduce any method-specific hyperparameters. Moreover, for all PEC experiments in Section~\ref{sec:experiments} we use batch size $= 1$, do not use data augmentations, and use linear learning decay. Only the learning rate varies between experiments. However, below we will show that even if the learning rate is not tuned and the default value of $0.001$ is used for all datasets, the performance of PEC is still strong. This indicates that our method can work well off the shelf.

\begin{table}
\caption{ \label{tab:hyper_possible} Method-specific hyperparameters together with associated search spaces.}
\resizebox{\columnwidth}{!}{
\begin{tabular}{l|l}
\toprule
Method & Additional hyperparameters \\ \hline
ER & --- \\ \hline
A-GEM & --- \\ \hline
DER & distillation loss coef. $\alpha \in \{ 0.1, 0.2, 0.5, 1 \}$ \\ \hline
DER++ & distillation loss coef. $\alpha \in \{ 0.1, 0.2, 0.5, 1 \}$ \\
 & regular replay loss coef. $\beta \in \{ 0.1, 0.2, 0.5, 1 \}$ \\ \hline
X-DER & distillation loss coef. $\alpha \in \{ 0.3, 0.6 \}$ \\
 & regular replay loss coef. $\beta \in \{ 0.8, 0.9 \}$ \\
 & consistency loss coef. $\lambda \in \{ 0.05, 0.1 \}$ \\
 & past/future logits constraint coef. $\eta \in \{ 0.001, 0.01 \}$ \\
 & constant for past/future logits constraint $m \in \{ 0.3, 0.7 \}$ \\ \hline
 ER-ACE & --- \\ \hline
iCaRL & --- \\ \hline
BiC & --- \\ \hline
EWC & regularization coef. $\lambda \in \{ 0.1, 1, 10, 100, 1000 \}$ \\ \hline
 SI & regularization coef. $c \in \{ 0.5, 1, 2, 5 \}$ \\
 & damping coefficient $\xi \in \{ 0.9, 1 \}$ \\ \hline
 LwF & regularization coef. $\alpha \in \{ 0.5, 1, 2, 5, 10 \}$ \\ \hline
Labels trick & --- \\ \hline
Nearest mean & --- \\ \hline
SLDA & identity matrix mixing coef. $\epsilon \in \{  0.0001, 0.0003, 0.001, 0.003, 0.01, 0.03, 0.1, 0.3, 0.5, 0.8 \}$ \\ \hline
VAE-GC & --- \\ \hline
PEC & --- \\
\bottomrule
\end{tabular}
}

\vspace{0.2in}
\end{table}

\subsubsection{Search spaces and selected values}
\label{app:hyper_spaces_and_values}

For all methods except SLDA and Nearest mean, we tune the following hyperparameters:
\begin{itemize}[nosep]
    \item learning rate (lr) $\in \{ 0.00003, 0.0001, 0.0003, 0.001, 0.003, 0.01, 0.03 \}$,
    \item batch size (bs) $\in \{ 1, 10, 32 \},$
    \item whether to use data augmentation (aug) $\in \{ false, true \}$,
    \item whether to decay LR from the initial value to $0$ linearly (decay) $\in \{ false, true \}$.
\end{itemize}

Additional tuned hyperparameters for all methods, together with search spaces, are listed in Table~\ref{tab:hyper_possible}.

We provide selected hyperparameter values for all methods in Table~\ref{tab:hyper_mnist} for MNIST, in Table~\ref{tab:hyper_svhn} for Balanced SVHN, in Table~\ref{tab:hyper_cifar10} for CIFAR-10, in Table~\ref{tab:hyper_cifar100} for CIFAR-100, and in Table~\ref{tab:hyper_miniimg} for miniImageNet.

\subsubsection{Sensitivity to hyperparameter values}
\label{sec:app_sensitivity}

\looseness-1 First, we consider a setting where the learning rate cannot be tuned and we run PEC with the default learning rate value of $0.001$ for all datasets. Note that this means that only the architecture is varied between experiments (in order to match the proper parameter counts), and hence in such a case PEC can be treated as an off-the-shelf algorithm. We observe in Table~\ref{tab:pec_no_tuning} that PEC retains the strong performance in this case, and in particular it still outperforms other rehearsal-free methods, whose hyperparameters were tuned with a grid search, by wide margins (compare to Tables~\ref{tab:main_one_class} and~\ref{tab:main_many_classes}).

Additionally, we look at how the performance of ER, DER++, VAE-GC, and PEC varies when we manipulate learning rates; the rest of the hyperparameters remain as selected originally. The results in Figure~\ref{fig:varying_lr} indicate that the dependence of PEC on learning rate tuning is comparable to that of other methods.


\begin{table}
  \caption{Performance comparison of PEC with tuned learning rates versus PEC using default learning rate of $0.001$. Reported is the final average accuracy (in \%), with mean and standard error from $10$ seeds.}
  \label{tab:pec_no_tuning}
  \begin{center}
  \resizebox{\columnwidth}{!}{%
  \small
  \begin{tabular}{llp{1.95cm}p{1.95cm}p{1.95cm}p{1.95cm}p{1.85cm}}
    \toprule
    & \!\textbf{Method} & \textbf{MNIST} & \textbf{Balanced~SVHN} & \textbf{CIFAR-10} & \textbf{CIFAR-100} & \textbf{miniImageNet} \\
    \midrule
    & \!PEC & 92.31 ($\pm$ 0.13) & 68.70 ($\pm$ 0.16) &  58.94 ($\pm$ 0.09) &  26.54 ($\pm$ 0.11) &  14.90 ($\pm$ 0.08) \\
    & \!PEC, $lr = 0.001$ & 90.51 ($\pm$ 0.16) & 64.25 ($\pm$ 0.18) & 57.38 ($\pm$ 0.12) &  26.54 ($\pm$ 0.11) &  14.90 ($\pm$ 0.08) \\
    \bottomrule
  \end{tabular}%
  }
  \end{center}
\end{table}

\begin{figure}
    \centering
    \includegraphics[width=0.49\textwidth]{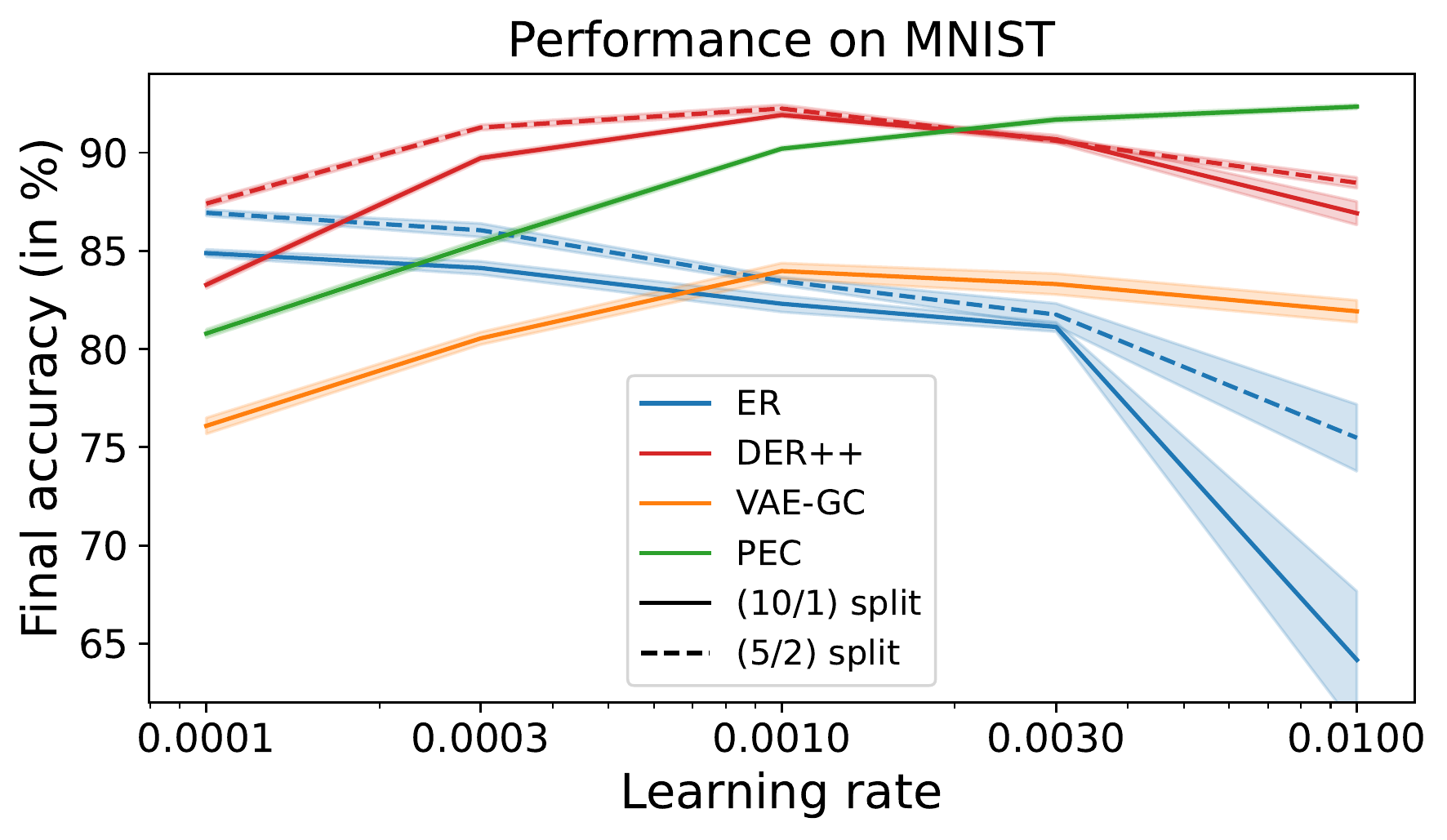}
    \includegraphics[width=0.49\textwidth]{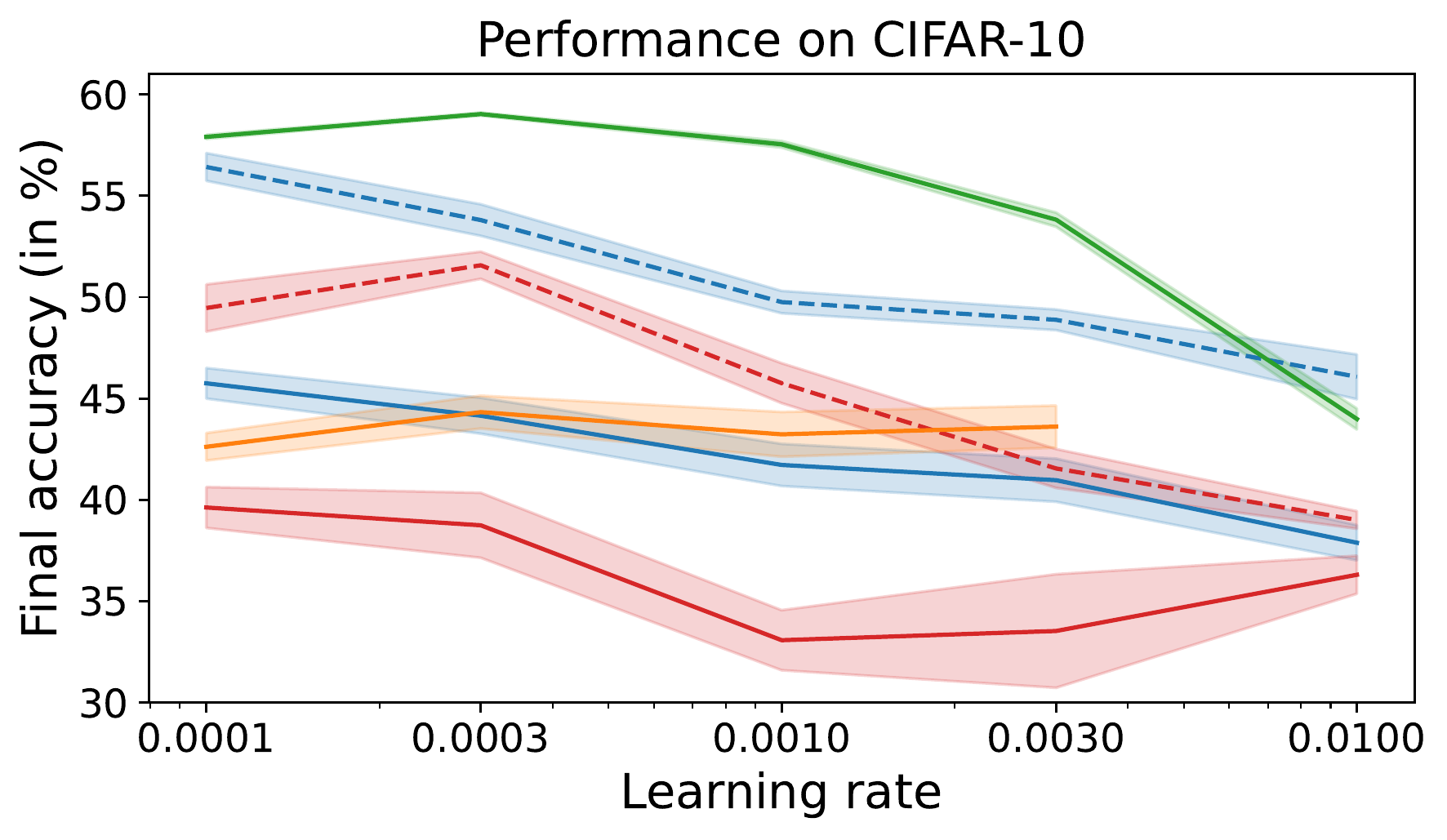}
    \caption{
\label{fig:varying_lr} CIL performance for varying learning rate. 
    Plotted are the means (solid or dashed lines) and standard errors (shaded areas) from $10$ seeds.
    For PEC and VAE-GC, results are the same for both task splits, and hence single curves are shown. In case of VAE-GC experiment on CIFAR-10 with learning rate $0.01$, runs failed because of numerical instabilities.}
    
\vspace{0.2in}
\end{figure}


\begin{table}
\caption{ \label{tab:hyper_mnist} Selected hyperparameters for MNIST dataset.}
\begin{tabular}{l|p{5.2cm}|p{5.2cm}}
\toprule
Method & MNIST (10/1) & MNIST (5/2) \\ \hline
ER & $\text{lr}=0.0003$, $\text{bs}=1$, $\text{decay}=true$ & $\text{lr}=0.0001$, $\text{bs}=1$, $\text{decay}=true$ \\ \hline
A-GEM & $\text{lr}=0.0003$, $\text{bs}=32$, $\text{decay}=true$ & $\text{lr}=0.001$, $\text{bs}=10$, $\text{decay}=false$ \\ \hline
DER & $\text{lr}=0.001, \text{bs}=10$, $\text{decay}=true$, $\alpha=1$ & $\text{lr}=0.001$, $\text{bs}=10$, $\text{decay}=true$, $\alpha=1$ \\ \hline
DER++ & $\text{lr}=0.001$, $\text{bs}=10$, $\text{decay}=true$, $\alpha=1$, $\beta=1$ & $\text{lr}=0.001$, $\text{bs}=10$, $\text{decay}=true$, $\alpha=0.5$, $\beta=1$ \\ \hline
X-DER & $\text{lr}=0.001$, $\text{bs}=10$, $\text{decay}=false$, $\alpha=0.3$, $\beta=0.8$, $\lambda=0.05$, $\eta=0.01$, $m=0.3$ & $\text{lr}=0.003$, $bs=10$, $\text{decay}=false$, $\alpha=0.3$, $\beta=0.9$, $\lambda=0.05$, $\eta=0.01$, $m=0.7$ \\ \hline
ER-ACE & $\text{lr}=0.001, \text{bs}=10, \text{decay}=false$ & $\text{lr}=0.001, \text{bs}=10, \text{decay}=false$ \\ \hline
iCaRL & $\text{lr}=0.003, \text{bs}=10, \text{decay}=true$ & $\text{lr}=0.003, \text{bs}=10, \text{decay}=false$ \\ \hline
BiC & $\text{lr}=0.001, \text{bs}=10, \text{decay}=false$ & $\text{lr}=0.001, \text{bs}=10, \text{decay}=false$ \\ \hline
Labels trick & $\text{lr}=0.001$, $\text{bs}=1$, $\text{decay}=false$ & $\text{lr}=0.0001$, $\text{bs}=32$, $\text{decay}=false$ \\ \hline
EWC + LT & $\text{lr}=0.0003$, $\text{bs}=32$, $\text{decay}=true$, $\lambda=1000$ & $\text{lr}=0.0003$, $\text{bs}=32$, $\text{decay}=false$, $\lambda=1000$ \\ \hline
SI + LT & $\text{lr}=0.003$, $\text{bs}=10$, $\text{decay}=true$, $c=1$, $\xi=0.9$ & $\text{lr}=0.003$, $\text{bs}=1$, $\text{decay}=true$, $c=0.5$, $\xi=1$ \\ \hline
LwF + LT & $\text{lr}=0.003$, $\text{bs}=32$, $\text{decay}=true$, $\alpha=5$ & $\text{lr}=0.001$, $\text{bs}=10$, $\text{decay}=true$, $\alpha=5$ \\ \hline
Nearest mean & \multicolumn{2}{l}{---} \\ \hline
SLDA & \multicolumn{2}{l}{$\epsilon=0.1$} \\ \hline
VAE-GC & \multicolumn{2}{l}{$\text{lr}=0.003, \text{bs}=32, \text{decay}=false$} \\ \hline
PEC & \multicolumn{2}{l}{$\text{lr}=0.01, \text{bs}=1, \text{decay}=true$} \\
\bottomrule
\end{tabular}
\end{table}

\begin{table}
\caption{ \label{tab:hyper_svhn} Selected hyperparameters for Balanced SVHN dataset.}
\begin{tabular}{l|p{5.2cm}|p{5.2cm}}
\toprule
Method & Balanced SVHN (10/1) & Balanced SVHN (5/2) \\ \hline
ER & $\text{lr}=0.003$, $\text{bs}=1$, $\text{aug}=true$, $\text{decay}=true$ & $\text{lr}=0.0001$, $\text{bs}=1$, $\text{aug}=true$, $\text{decay}=true$ \\ \hline
A-GEM & $\text{lr}=0.001$, $\text{bs}=32$, $\text{aug}=true$, $\text{decay}=true$ & $\text{lr}=0.01$, $\text{bs}=10$, $\text{aug}=false$, $\text{decay}=false$ \\ \hline
DER & $\text{lr}=0.0003$, $\text{bs}=10$, $\text{aug}=true$, $\text{decay}=true$, $\alpha=0.5$ & $\text{lr}=0.0003$, $\text{bs}=10$, $\text{aug}=false$, $\text{decay}=false$, $\alpha=1$ \\ \hline
DER++ & $\text{lr}=0.0003$, $\text{bs}=10$, $\text{aug}=true$, $\text{decay}=true$, $\alpha=0.2$, $\beta=1$ & $\text{lr}=0.0003$, $\text{bs}=10$, $\text{aug}=true$, $\text{decay}=true$, $\alpha=0.5$, $\beta=1$ \\ \hline
X-DER & $\text{lr}=0.0001$, $\text{bs}=10$, $\text{aug}=false$, $\text{decay}=false$, $\alpha=0.3$, $\beta=0.8$, $\lambda=0.1$, $\eta=0.01$, $m=0.7$ & $\text{lr}=0.0001$, $\text{bs}=10$, $\text{aug}=false$, $\text{decay}=true$, $\alpha=0.6$, $\beta=0.8$, $\lambda=0.05$, $\eta=0.01$, $m=0.3$ \\ \hline
ER-ACE & $\text{lr}=0.0001$, $\text{bs}=10$, $\text{aug}=true$, $\text{decay}=true$ & $\text{lr}=0.0003$, $\text{bs}=10$, $\text{aug}=false$, $\text{decay}=true$ \\ \hline
iCaRL & $\text{lr}=0.0001$, $\text{bs}=10$, $\text{aug}=false$, $\text{decay}=false$ & $\text{lr}=0.0001$, $\text{bs}=10$, $\text{aug}=false$, $\text{decay}=false$ \\ \hline
BiC & $\text{lr}=0.0001$, $\text{bs}=10$, $\text{aug}=false$, $\text{decay}=true$ & $\text{lr}=0.0003$, $\text{bs}=10$, $\text{aug}=false$, $\text{decay}=true$ \\ \hline
Labels trick & $\text{lr}=0.003$, $\text{bs}=32$, $\text{aug}=false$, $\text{decay}=true$ & $\text{lr}=0.00003$, $\text{bs}=32$, $\text{aug}=false$, $\text{decay}=true$ \\ \hline
EWC + LT & $\text{lr}=0.001$, $\text{bs}=32$, $\text{aug}=false$, $\text{decay}=false$, $\lambda=1000$ & $\text{lr}=0.0001$, $\text{bs}=32$, $\text{aug}=false$, $\text{decay}=false$, $\lambda=0.1$ \\ \hline
SI + LT & $\text{lr}=0.01$, $\text{bs}=32$, $\text{aug}=false$, $\text{decay}=false$, $c=1$, $\xi=1$ & $\text{lr}=3e-05$, $\text{bs}=10$, $\text{aug}=false$, $\text{decay}=true$, $c=2$, $\xi=1$ \\ \hline
LwF + LT & $\text{lr}=0.003$, $\text{bs}=10$, $\text{aug}=false$, $\text{decay}=true$, $\alpha=1$ & $\text{lr}=0.0003$, $\text{bs}=10$, $\text{aug}=false$, $\text{decay}=true$, $\alpha=10$ \\ \hline
Nearest mean & \multicolumn{2}{l}{---} \\ \hline
SLDA & \multicolumn{2}{l}{$\epsilon=0.1$} \\ \hline
VAE-GC & \multicolumn{2}{l}{$\text{lr}=0.0003, \text{bs}=32, \text{aug}=false, \text{decay}=false$} \\ \hline
PEC & \multicolumn{2}{l}{$\text{lr}=0.0001, \text{bs}=1, \text{aug}=false, \text{decay}=true$} \\
\bottomrule
\end{tabular}
\end{table}

\begin{table}
\caption{ \label{tab:hyper_cifar10} Selected hyperparameters for CIFAR-10 dataset.}
\begin{tabular}{l|p{5.2cm}|p{5.2cm}}
\toprule
Method & CIFAR-10 (10/1) & CIFAR-10 (5/2) \\ \hline
ER & $\text{lr}=0.003$, $\text{bs}=1$, $\text{aug}=true$, $\text{decay}=true$ & $\text{lr}=0.0001$, $\text{bs}=1$, $\text{aug}=true$, $\text{decay}=false$ \\ \hline
A-GEM & $\text{lr}=0.003$, $\text{bs}=32$, $\text{aug}=false$, $\text{decay}=false$ & $\text{lr}=0.0003$, $\text{bs}=10$, $\text{aug}=false$, $\text{decay}=false$ \\ \hline
DER & $\text{lr}=0.0001$, $\text{bs}=10$, $\text{aug}=false$, $\text{decay}=true$, $\alpha=0.5$ & $\text{lr}=0.0001$, $\text{bs}=10$, $\text{aug}=true$, $\text{decay}=false$, $\alpha=0.5$ \\ \hline
DER++ & $\text{lr}=0.0001$, $\text{bs}=10$, $\text{aug}=false$, $\text{decay}=false$, $\alpha=0.1$, $\beta=0.5$ & $\text{lr}=0.0003$, $\text{bs}=10$, $\text{aug}=false$, $\text{decay}=false$, $\alpha=1$, $\beta=1$ \\ \hline
X-DER & $\text{lr}=0.0001$, $\text{bs}=10$, $\text{aug}=true$, $\text{decay}=false$, $\alpha=0.3$, $\beta=0.8$, $\lambda=0.1$, $\eta=0.01$, $m=0.3$ & $\text{lr}=0.0001$, $\text{bs}=10$, $\text{aug}=false$, $\text{decay}=false$, $\alpha=0.6$, $\beta=0.8$, $\lambda=0.05$, $\eta=0.001$, $m=0.3$ \\ \hline
ER-ACE & $\text{lr}=0.001$, $\text{bs}=10$, $\text{aug}=false$, $\text{decay}=false$ & $\text{lr}=0.0003$, $\text{bs}=10$, $\text{aug}=true$, $\text{decay}=true$ \\ \hline
iCaRL & $\text{lr}=0.0001$, $\text{bs}=10$, $\text{aug}=false$, $\text{decay}=true$ & $\text{lr}=0.0003$, $\text{bs}=10$, $\text{aug}=false$, $\text{decay}=true$ \\ \hline
BiC & $\text{lr}=0.0001$, $\text{bs}=10$, $\text{aug}=false$, $\text{decay}=true$ & $\text{lr}=0.0001$, $\text{bs}=10$, $\text{aug}=false$, $\text{decay}=false$ \\ \hline
Labels trick & $\text{lr}=0.03$, $\text{bs}=1$, $\text{aug}=false$, $\text{decay}=false$ & $\text{lr}=0.00003$, $\text{bs}=10$, $\text{aug}=false$, $\text{decay}=true$ \\ \hline
EWC + LT & $\text{lr}=0.0003$, $\text{bs}=32$, $\text{aug}=false$, $\text{decay}=true$, $\lambda=1000$ & $\text{lr}=0.0001$, $\text{bs}=10$, $\text{aug}=false$, $\text{decay}=true$, $\lambda=1000$ \\ \hline
SI + LT & $\text{lr}=0.0003$, $\text{bs}=32$, $\text{aug}=false$, $\text{decay}=false$, $c=1$, $\xi=1$ & $\text{lr}=3e-05$, $\text{bs}=32$, $\text{aug}=false$, $\text{decay}=false$, $c=2$, $\xi=1$ \\ \hline
LwF + LT & $\text{lr}=0.003$, $\text{bs}=32$, $\text{aug}=false$, $\text{decay}=false$, $\alpha=10$ & $\text{lr}=0.0003$, $\text{bs}=10$, $\text{aug}=false$, $\text{decay}=true$, $\alpha=10$ \\ \hline
Nearest mean & \multicolumn{2}{l}{---} \\ \hline
SLDA & \multicolumn{2}{l}{$\epsilon=0.3$} \\ \hline
VAE-GC & \multicolumn{2}{l}{$\text{lr}=0.001, \text{bs}=32, \text{aug}=false, \text{decay}=true$} \\ \hline
PEC & \multicolumn{2}{l}{$\text{lr}=0.0003, \text{bs}=1, \text{aug}=false, \text{decay}=true$} \\
\bottomrule
\end{tabular}
\end{table}

\begin{table}
\caption{ \label{tab:hyper_cifar100} Selected hyperparameters for CIFAR-100 dataset.}
\begin{tabular}{l|p{5.2cm}|p{5.2cm}}
\toprule
Method & CIFAR-100 (100/1) & CIFAR-100 (10/10) \\ \hline
ER & $\text{lr}=0.003$, $\text{bs}=1$, $\text{aug}=true$, $\text{decay}=false$ & $\text{lr}=0.003$, $\text{bs}=1$, $\text{aug}=true$, $\text{decay}=false$ \\ \hline
A-GEM & $\text{lr}=0.00003$, $\text{bs}=32$, $\text{aug}=false$, $\text{decay}=false$ & $\text{lr}=0.01$, $\text{bs}=10$, $\text{aug}=false$, $\text{decay}=true$ \\ \hline
DER & $\text{lr}=0.00003$, $\text{bs}=10$, $\text{aug}=true$, $\text{decay}=false$, $\alpha=0.1$ & $\text{lr}=0.01$, $\text{bs}=10$, $\text{aug}=false$, $\text{decay}=false$, $\alpha=1$ \\ \hline
DER++ & $\text{lr}=0.01$, $\text{bs}=10$, $\text{aug}=true$, $\text{decay}=false$, $\alpha=0.1$, $\beta=1$ & $\text{lr}=0.01$, $\text{bs}=10$, $\text{aug}=true$, $\text{decay}=true$, $\alpha=1$, $\beta=1$ \\ \hline
X-DER & $\text{lr}=0.01$, $\text{bs}=10$, $\text{aug}=true$, $\text{decay}=false$, $\alpha=0.3$, $\beta=0.9$, $\lambda=0.1$, $\eta=0.01$, $m=0.7$ & $\text{lr}=0.01$, $\text{bs}=10$, $\text{aug}=false$, $\text{decay}=false$, $\alpha=0.6$, $\beta=0.9$, $\lambda=0.05$, $\eta=0.01$, $m=0.3$ \\ \hline
ER-ACE & $\text{lr}=3e-05$, $\text{bs}=32$, $\text{aug}=true$, $\text{decay}=true$ & $\text{lr}=0.0003$, $\text{bs}=10$, $\text{aug}=false$, $\text{decay}=false$ \\ \hline
iCaRL & $\text{lr}=0.0001$, $\text{bs}=10$, $\text{aug}=true$, $\text{decay}=true$ & $\text{lr}=0.03$, $\text{bs}=10$, $\text{aug}=false$, $\text{decay}=false$ \\ \hline
BiC & $\text{lr}=0.01$, $\text{bs}=10$, $\text{aug}=true$, $\text{decay}=false$ & $\text{lr}=0.03$, $\text{bs}=10$, $\text{aug}=false$, $\text{decay}=true$ \\ \hline
Labels trick & $\text{lr}=0.0001$, $\text{bs}=1$, $\text{aug}=true$, $\text{decay}=false$ & $\text{lr}=0.03$, $\text{bs}=32$, $\text{aug}=false$, $\text{decay}=true$ \\ \hline
EWC + LT & $\text{lr}=0.001$, $\text{bs}=32$, $\text{aug}=false$, $\text{decay}=false$, $\lambda=1$ & $\text{lr}=0.01$, $\text{bs}=10$, $\text{aug}=false$, $\text{decay}=false$, $\lambda=10$ \\ \hline
SI + LT &  $\text{lr}=0.001$, $\text{bs}=10$, $\text{aug}=false$, $\text{decay}=false$, $c=2$, $\xi=1$ & $\text{lr}=3e-05$, $\text{bs}=32$, $\text{aug}=false$, $\text{decay}=false$, $c=5$, $\xi=0.9$ \\ \hline
LwF + LT &  $\text{lr}=3e-05$, $\text{bs}=32$, $\text{aug}=false$, $\text{decay}=true$, $\alpha=0.5$ & $\text{lr}=0.0003$, $\text{bs}=32$, $\text{aug}=false$, $\text{decay}=true$, $\alpha=10$ \\ \hline
Nearest mean & \multicolumn{2}{l}{---} \\ \hline
SLDA & \multicolumn{2}{l}{$\epsilon=0.5$} \\ \hline
VAE-GC & \multicolumn{2}{l}{$\text{lr}=0.003, \text{bs}=32, \text{aug}=false, \text{decay}=true$} \\ \hline
PEC & \multicolumn{2}{l}{$\text{lr}=0.001, \text{bs}=1, \text{aug}=false, \text{decay}=true$} \\
\bottomrule
\end{tabular}
\end{table}

\begin{table}
\caption{ \label{tab:hyper_miniimg} Selected hyperparameters for miniImageNet dataset.}
\begin{tabular}{l|p{5.2cm}|p{5.2cm}}
\toprule
Method & miniImageNet (100/1) & miniImageNet (20/5) \\ \hline
ER & $\text{lr}=0.003$, $\text{bs}=1$, $\text{aug}=true$, $\text{decay}=false$ & $\text{lr}=0.01$, $\text{bs}=1$, $\text{aug}=true$, $\text{decay}=false$ \\ \hline
A-GEM & $\text{lr}=0.0003$, $\text{bs}=32$, $\text{aug}=true$, $\text{decay}=true$ & $\text{lr}=0.00003$, $\text{bs}=32$, $\text{aug}=false$, $\text{decay}=false$ \\ \hline
DER & $\text{lr}=0.003$, $\text{bs}=10$, $\text{aug}=true$, $\text{decay}=true$, $\alpha=1$ & $\text{lr}=0.0001$, $\text{bs}=10$, $\text{aug}=true$, $\text{decay}=false$, $\alpha=1$ \\ \hline
DER++ & $\text{lr}=0.003$, $\text{bs}=10$, $\text{aug}=false$, $\text{decay}=true$, $\alpha=0.1$, $\beta=1$ & $\text{lr}=0.001$, $\text{bs}=10$, $\text{aug}=true$, $\text{decay}=false$, $\alpha=1$, $\beta=1$ \\ \hline
X-DER & $\text{lr}=0.001$, $\text{bs}=10$, $\text{aug}=true$, $\text{decay}=false$, $\alpha=0.6$, $\beta=0.9$, $\lambda=0.1$, $\eta=0.01$, $m=0.7$ & $\text{lr}=0.001$, $\text{bs}=10$, $\text{aug}=false$, $\text{decay}=false$, $\alpha=0.3$, $\beta=0.8$, $\lambda=0.1$, $\eta=0.001$, $m=0.3$ \\ \hline
ER-ACE & $\text{lr}=3e-05$, $\text{bs}=32$, $\text{aug}=false$, $\text{decay}=true$ & $\text{lr}=0.0001$, $\text{bs}=10$, $\text{aug}=false$, $\text{decay}=false$ \\ \hline
iCaRL & $\text{lr}=0.00003$, $\text{bs}=10$, $\text{aug}=false$, $\text{decay}=false$ & $\text{lr}=0.0001$, $\text{bs}=10$, $\text{aug}=true$, $\text{decay}=false$ \\ \hline
BiC & $\text{lr}=0.00003$, $\text{bs}=10$, $\text{aug}=false$, $\text{decay}=false$ & $\text{lr}=0.0001$, $\text{bs}=10$, $\text{aug}=false$, $\text{decay}=false$ \\ \hline
Labels trick & $\text{lr}=0.003$, $\text{bs}=32$, $\text{aug}=true$, $\text{decay}=false$ & $\text{lr}=0.00003$, $\text{bs}=10$, $\text{aug}=true$, $\text{decay}=false$ \\ \hline
EWC + LT & $\text{lr}=0.003$, $\text{bs}=32$, $\text{aug}=false$, $\text{decay}=true$, $\lambda=10$ & $\text{lr}=0.003$, $\text{bs}=10$, $\text{aug}=false$, $\text{decay}=false$, $\lambda=1$ \\ \hline
SI + LT & $\text{lr}=0.03$, $\text{bs}=1$, $\text{aug}=false$, $\text{decay}=true$, $c=0.5$, $\xi=0.9$ & $\text{lr}=3e-05$, $\text{bs}=10$, $\text{aug}=false$, $\text{decay}=true$, $c=2$, $\xi=0.9$ \\ \hline
LwF + LT & $\text{lr}=0.0003$, $\text{bs}=32$, $\text{aug}=false$, $\text{decay}=true$, $\alpha=5$ & $\text{lr}=3e-05$, $\text{bs}=32$, $\text{aug}=false$, $\text{decay}=false$, $\alpha=10$ \\ \hline
Nearest mean & \multicolumn{2}{l}{---} \\ \hline
SLDA & \multicolumn{2}{l}{$\epsilon=0.8$} \\ \hline
VAE-GC & \multicolumn{2}{l}{$\text{lr}=0.003, \text{bs}=32, \text{aug}=false, \text{decay}=true$} \\ \hline
PEC & \multicolumn{2}{l}{$\text{lr}=0.001, \text{bs}=1, \text{aug}=false, \text{decay}=true$} \\
\bottomrule
\end{tabular}
\end{table}

\subsection{Details on balancing strategies}
\label{app:balancing_details}

In Section~\ref{sec:balance}, we discussed the issues of comparability of PEC class scores and the impact of dataset imbalance. We introduced strategies to alleviate the effect of data imbalance, and here we provide additional details about these.

In Oracle balancing and Buffer balancing, we use CMA-ES~\citep{hansen2001completely} to find scalars for re-scaling class scores. This evolutionary optimization algorithm maintains a Gaussian distribution, which is iteratively updated to fit values that yield high fitness scores. We tuned hyperparameters for this algorithm in our initial experiments and then used fixed values everywhere, which we provide here: initial standard deviation $ = 0.001$, population size $= 100$, weight decay $= 0$. \rebuttal{For each class~$c$, w}e parameterize the scalar that we search for as $s_{\rebuttal{c}} = \exp(l_{\rebuttal{c}})$ and we search for real values $l_{\rebuttal{c}}$, since we want to find positive values for the scalars.

In Equal budgets, we use the same number of gradient descent updates for each class. If we are allowed multiple passes over the dataset, then we can set some fixed value $N_U$ and use this many updates per class. In the single-pass-through-data setting, we cut the number of updates for each class to the minimal number of examples for any class.

\subsection{Data augmentations}
\label{app:data_augmentations}

Here, we describe the data augmentations used in our experiments. Note that for every method we decide whether to use these data augmentations or not based on grid search; for PEC we never use these.

\begin{itemize}
    \item MNIST: no augmentations are used.
    \item Balanced SVHN: random crop to original size with padding $= 4$.
    \item CIFAR-10, CIFAR-100, and miniImageNet: random crop to original size with padding $= 4$, random horizontal flip.
\end{itemize}

\section{Additional experiments}
\subsection{Additional comparisons in another experimental setup}
\label{app:more_vae_gc}

Here, to further test the robustness of PEC, we perform additional empirical comparisons using an experimental setting based on~\citet{van2021class}. First, we describe the experimental setup in Appendix~\ref{app:more_vae_gc_setup} and additional baselines in Appendix~\ref{app:more_vae_gc_methods}, and then we proceed to the results in Appendix~\ref{app:more_vae_gc_results}.

\subsubsection{Experimental setup for Appendix~\ref{app:more_vae_gc} only} 
\label{app:more_vae_gc_setup}

We largely follow the experimental setup from \citet{van2021class}. In the implementation of PEC, we match the parameter counts and the number of optimization steps with those used for VAE-GC in~\citet{van2021class}.

We compare PEC against a comprehensive list of CIL methods; we provide explanations for the methods that were not included in our main text experiments in Appendix~\ref{app:more_vae_gc_methods}.

For some datasets, namely MNIST \citep{deng2012mnist}, Fashion-MNIST \citep{xiao2017fashion}, and CIFAR-10 \citep{krizhevsky2009learning}, we directly use raw image data. For others, namely CIFAR-100 \citep{krizhevsky2009learning} and CORe50 \citep{lomonaco2017core50}, we use pretrained frozen feature extractors, which are applied on the raw data before further processing; we follow \citet{van2021class}, using the same feature extractors.

Below we briefly summarize the experimental setup for the particular datasets:
\begin{itemize}[nosep]
    \item \textbf{MNIST} and \textbf{Fashion-MNIST}: we use $(5/2)$ task split. Optimization for $2000$ steps per task, or $1000$ steps per class, with batch size $= 128$. MLP is used as an architecture, no feature extractor is used. Parameter count per class for VAE-GC and PEC methods: $150K$.
    \item \textbf{CIFAR-10}: we use $(5/2)$ task split. Optimization for $5000$ steps per task, or $2500$ steps per class, with batch size $= 256$. Simple networks with convolutions and linear layers are used as architectures, no feature extractor is used. Parameter count per class for VAE-GC and PEC methods: $350K$.
    \item \textbf{CIFAR-100}: we use $(10/10)$ task split. Optimization for $5000$ steps per task, or $500$ steps per class, with batch size $= 256$. MLP is used as an architecture, a simple convolutional network pre-trained on CIFAR-10 is used as a feature extractor. Parameter count per class for VAE-GC and PEC methods: $180K$. 
    \item \textbf{CORe50}: we use $(5/2)$ task split. For this dataset, we only perform one pass through the data during training. We use batch size $= 1$. MLP is used as an architecture, ResNet18 \citep{he2016deep} pre-trained on ImageNet \citep{deng2009imagenet} is used as a feature extractor. Parameter count per class for VAE-GC and PEC methods: $170K$.
\end{itemize}

In all experiments, we use the Adam \citep{kingma2014adam} optimizer. We set the learning rate to the standard value of $0.001$, with the following exceptions:
\begin{itemize}[nosep]
    \item for the CORe50 experiments, LR is set to $0.0003$ for PEC and $0.0001$ for other methods;
    \item for the CIFAR-100 experiments, LR is set to $0.0001$ for discriminative approaches and standard $0.001$ otherwise.
\end{itemize}
Additionally, for all PEC experiments, the learning rate decays linearly to $0$ through the training.

For VAE-GC we use $10K$ importance samples to estimate likelihoods.

The method-specific hyperparameters of SI, EWC and AR1 are selected based on the grid search described in~\citet{van2021class}. While for the experiments in Tables~\ref{tab:main_one_class} and~\ref{tab:main_many_classes} the shrinkage parameter $\epsilon$ of SLDA was tuned, for the experiments in this section the default value of $0.0001$ is used.

For VAE-GC experiments on MNIST, we additionally include batch normalization in the architecture, since we observed it improves performance. Additionally, in order to ensure a fair comparison with VAE-GC, the best-performing baseline in these experiments, we run a grid search for this method where we incorporate the following elements that are used with PEC: linear learning rate decay and GELU activation function~\citep{hendrycks2016gaussian}. These did not improve the results for VAE-GC.

\subsubsection{Additional methods}
\label{app:more_vae_gc_methods}

We provide short descriptions and references for baselines used in this section that were not present in our main text experiments:


\emph{Bias-correction} methods attempt to mitigate imbalances between classes coming from different tasks.
\begin{itemize}[nosep]
    \item CopyWeights with Re-init (\textbf{CWR}) \citep{lomonaco2017core50} maintains the frozen representation network after the initial task, and for each consecutive task, only updates the part of the final linear layer corresponding to classes from that task.
    \item \textbf{CWR+} \citep{maltoni2019continuous} improves over CWR by introducing different initialization and normalization schemes.
    \item \textbf{AR1} \citep{maltoni2019continuous} builds on CWR+ but enables representations to shift, regularizing them additionally with SI.
\end{itemize}

\emph{Generative replay} uses generative models to produce samples for pseudo-rehearsal.
\begin{itemize}[nosep]
    \item Deep Generative Replay (\textbf{DGR}) \citep{shin2017continual} leverages VAE to produce examples from past tasks and uses them as additional data for training.
\end{itemize}

\begin{table}
  \caption{\label{tab:vaegc_results} Empirical evaluation of PEC against a suite of CIL baselines, in experimental setups based on~\citet{van2021class}, see Appendix~\ref{app:more_vae_gc}. Final average accuracy (in $\%$) is reported, with mean and standard error from 10 seeds. Methods are separated into the following categories: baselines, regularization-based, bias-correction, generative replay, generative classification, PEC. \textbf{FE}:~pre-trained feature extractor is used.}
  \begin{center}
  \begin{small}
  \resizebox{4.95in}{!}{
  \begin{tabular}{lp{1.85cm}p{2.3cm}p{1.85cm}p{1.85cm}p{1.85cm}}
    \toprule
    \textbf{Method} & \textbf{MNIST ~~~~~~~$(5/2)$} & \textbf{Fashion-MNIST ~~~~~~~$(5/2)$} & \textbf{CIFAR-10} ~~~~~~~$(5/2)$ & \textbf{CIFAR-100} ~~~~~~~$(10/10)$, \textbf{FE} & \textbf{CORe50} ~~~~~~~$(5/2)$, \textbf{FE} \\
    \midrule 
     \it Fine-tuning & \it 19.92 ($\pm$ 0.02) & \it 19.73 ($\pm$ 0.25) & \it 18.72 ($\pm$ 0.27) & \it ~~7.98 ($\pm$ 0.10) & \it 18.52 ($\pm$ 0.29) \\
    \it Joint & \it 98.27 ($\pm$ 0.03) & \it 89.50 ($\pm$ 0.08) & \it 82.61 ($\pm$ 0.12) & \it 51.42 ($\pm$ 0.13) & \it 71.50 ($\pm$ 0.20) \\
    \midrule
     EWC & 20.00 ($\pm$ 0.07) & 19.75 ($\pm$ 0.24) & 18.59 ($\pm$ 0.29) & ~~8.34 ($\pm$ 0.08) & 18.59 ($\pm$ 0.34) \\
     SI  & 19.98 ($\pm$ 0.11) & 19.57 ($\pm$ 0.42) & 18.15 ($\pm$ 0.31) & ~~8.54 ($\pm$ 0.08) & 18.65 ($\pm$ 0.27) \\
    \midrule
     CWR & 30.18 ($\pm$ 2.59) & 21.80 ($\pm$ 2.25) & 18.26 ($\pm$ 1.38) & 22.16 ($\pm$ 0.69) & 40.17 ($\pm$ 0.82)\\
     CWR+ & 38.61 ($\pm$ 2.93) & 19.80 ($\pm$ 2.77) & 19.06 ($\pm$ 1.79) & ~~9.48 ($\pm$ 0.32) & 40.35 ($\pm$ 0.98)\\
     AR1 & 44.27 ($\pm$ 2.50) & 29.14 ($\pm$ 2.17) & 24.60 ($\pm$ 1.08) & 17.49 ($\pm$ 0.34) & 46.77 ($\pm$ 0.93) \\
     Labels Trick & 33.05 ($\pm$ 1.32) & 20.78 ($\pm$ 1.90) & 20.14 ($\pm$ 0.40) & 24.02 ($\pm$ 0.39) & 42.81 ($\pm$ 1.08) \\
    \midrule
    DGR & 91.30 ($\pm$ 0.53) & 69.18 ($\pm$ 1.70) & 17.73 ($\pm$ 1.58) & 15.14 ($\pm$ 0.20) & 56.67 ($\pm$ 1.21) \\
    \midrule
    SLDA & 87.30 ($\pm$ 0.02) & 81.50 ($\pm$ 0.01) & 38.39 ($\pm$ 0.04) & 44.48 ($\pm$ 0.00) & \bf 70.80 ($\pm$ 0.00) \\
     VAE-GC & 96.29 ($\pm$ 0.04) & 82.90 ($\pm$ 0.06) & 55.92 ($\pm$ 0.13) & 49.46 ($\pm$ 0.08) & \bf 70.74 ($\pm$ 0.08)\\
    \midrule
     \bf PEC & \bf 96.73 ($\pm$ 0.04) & \bf 85.22 ($\pm$ 0.07) & \bf 57.75 ($\pm$ 0.09) & \bf 49.80 ($\pm$ 0.09) & \bf 70.81 ($\pm$ 0.11) \\

    \bottomrule
  \end{tabular}
  }
  \end{small}
  \end{center}
  \vspace{0.2in}
\end{table}

\subsubsection{Results}
\label{app:more_vae_gc_results}

The results are presented in Table~\ref{tab:vaegc_results}. Note that the experimental setup used here involves multiple epochs and, in most cases, higher parameter counts than in the main text experiments. As a consequence, the advantage of PEC over VAE-GC is less pronounced. Nonetheless, PEC retains its strong performance and outperforms or matches all the baselines considered here on all datasets. This corroborates its robustness to experimental conditions.

\rebuttal{
\subsection{Performance of rehearsal-free methods without Labels trick}
\label{app:rehearsal-no-lt}
In the main text, we reported the results of rehearsal-free methods with Labels trick added on top. For completeness, here we provide results without the Labels trick. Results for the one-class-per-task and multiple-classes-per-task are presented in Table~\ref{tab:nonreh-no-lt-single} and Table~\ref{tab:nonreh-no-lt-multi}, respectively.
}

\begin{table*}
  \caption{\rebuttal{Performance of rehearsal-free methods without the Labels trick in the one-class-per-task case. Final average accuracy (in $\%$) is reported, with mean and standard error from 10~seeds. This table complements Table~\ref{tab:main_one_class} in the main text.}}
  \vspace{-0.19in}
  \label{tab:nonreh-no-lt-single}
  \begin{center}
  \resizebox{4.75in}{!}{%
  \small
  \begin{tabular}{lp{1.95cm}p{1.95cm}p{1.95cm}p{1.95cm}p{1.85cm}}
    \toprule
     \!\textbf{Method} & \textbf{MNIST} ~~~~~~~~~~~~~~~~(10/1) & \textbf{Balanced~SVHN} (10/1) & \textbf{CIFAR-10} ~~~~~~~~~~~~~~~~(10/1) & \textbf{CIFAR-100} ~~~~~~~~~~~~~~~~(100/1) & \textbf{miniImageNet} (100/1) \\
    \midrule
    \!EWC  & 10.09 ($\pm$ 0.00) & ~~9.92 ($\pm$ 0.08) & 10.55 ($\pm$ 0.44) & 1.00 ($\pm$ 0.00) & 0.99 ($\pm$ 0.02)  \\
     \!SI & 12.73 ($\pm$ 0.99) & 10.00 ($\pm$ 0.00) & 10.08 ($\pm$ 0.06) & 1.10 ($\pm$ 0.04) & 0.97 ($\pm$ 0.07)  \\
     \!LwF & 11.83 ($\pm$ 0.58) & ~~9.94 ($\pm$ 0.03) & 10.12 ($\pm$ 0.09) & 0.93 ($\pm$ 0.02) & 0.97 ($\pm$ 0.04)  \\
    \bottomrule
  \end{tabular}%
  }
  \end{center}
  \vspace{-0.18in}
\end{table*}

\begin{table*}
  \caption{\rebuttal{Performance of rehearsal-free methods without the Labels trick in the multiple-classes-per-task case. Final average accuracy (in $\%$) is reported, with mean and standard error from 10~seeds. This table complements Table~\ref{tab:main_many_classes} in the main text.}}
  \vspace{-0.19in}
  \label{tab:nonreh-no-lt-multi}
  \begin{center}
  \resizebox{4.75in}{!}{%
  \small
  \begin{tabular}{lp{1.95cm}p{1.95cm}p{1.95cm}p{1.95cm}p{1.85cm}}
    \toprule
     \!\textbf{Method} & \textbf{MNIST} ~~~~~~~~~~~~~~~~(5/2) & \textbf{Balanced~SVHN} (5/2) & \textbf{CIFAR-10} ~~~~~~~~~~~~~~~~(5/2) & \textbf{CIFAR-100} ~~~~~~~~~~~~~~~~(10/10) & \textbf{miniImageNet} (20/5) \\
    \midrule
    \!EWC  & 20.10 ($\pm$ 0.18) & 12.99 ($\pm$ 1.43) & 12.39 ($\pm$ 0.86) & 6.81 ($\pm$ 0.12) & 1.88 ($\pm$ 0.20)  \\
     \!SI & 21.46 ($\pm$ 0.48) & 19.13 ($\pm$ 0.02) & 18.75 ($\pm$ 0.02) & 6.42 ($\pm$ 0.12) & 2.94 ($\pm$ 0.11)  \\
     \!LwF & 21.49 ($\pm$ 1.86) & 23.88 ($\pm$ 0.62) & 20.14 ($\pm$ 0.18) & 7.75 ($\pm$ 0.13) & 3.31 ($\pm$ 0.03)  \\
    \bottomrule
  \end{tabular}%
  }
  \end{center}
  \vspace{-0.18in}
\end{table*}

\rebuttal{
\subsection{Performance evaluation with lower parameter budgets}
In this section, we perform additional experiments, where we replace the variant of ResNet18 that we used in our main text experiments with a narrower version as is used by~\cite{lopez2017gradient,chaudhry2019tiny,mir,caccia2021new,soutif2023comprehensive}. We call this variant Narrow ResNet18. We note that the only difference between the architecture we used in the main experiments and the Narrow ResNet18 is the width, with the former having a width factor of $64$ and the latter of $20$. We also adjust the parameter count for PEC to match the $1.1M$ count of the Narrow ResNet18. For this experiment, we performed limited hyperparameter searches: for each method, we take the hyperparameters that were used for the variant with the bigger (default) parameter budget and tune only learning rate. Results are reported in Table~\ref{tab:narrow-one-class} for the one-class-per-task case and in Table~\ref{tab:narrow-multi-class} for the many-classes-per-task case. We can see that PEC retains its strong performance in this regime and still outperforms all non-rehearsal baselines in all considered cases. PEC's performance looks favorably even compared to challenging rehearsal-based approaches, DER++ and X-DER, as PEC outperforms them in $4$ out of $8$ considered cases.
}

\begin{table*}
  \caption{\rebuttal{Performance with parameter budget of $1.1M$, for the one-class-per-task case. For discriminative methods, Narrow ResNet18 architecture is used. Final average accuracy (in $\%$) is reported, with mean and standard error from 10~seeds.}}
  \label{tab:narrow-one-class}
  \begin{center}
  \resizebox{4.75in}{!}{%
  \small
  \begin{tabular}{llcp{1.95cm}p{1.95cm}p{1.95cm}p{1.85cm}}
    \toprule
     & \!\textbf{Method} & \textbf{Buffer}  & \textbf{Balanced~SVHN} (10/1) & \textbf{CIFAR-10} ~~~~~~~~~~~~~~~~(10/1) & \textbf{CIFAR-100} ~~~~~~~~~~~~~~~~(100/1) & \textbf{miniImageNet} (100/1) \\
    \midrule
     & \!DER++ & \checkmark & \bf 77.07 ($\pm$ 1.01) & 30.98 ($\pm$ 2.45) & ~~3.73 ($\pm$ 0.12) & ~~2.59 ($\pm$ 0.12) \\
     & \!X-DER & \checkmark & 71.30 ($\pm$ 0.47) & 36.46 ($\pm$ 2.25) & 12.17 ($\pm$ 0.39) & \bf 10.36 ($\pm$ 0.31)  \\
     & \!Labels trick (LT) & \hspace{-0.03cm}- & ~~9.87 ($\pm$ 0.06) & 10.05 ($\pm$ 0.11) & ~~1.03 ($\pm$ 0.03) & ~~1.03 ($\pm$ 0.05) \\
      & \!EWC\,+\,LT & \hspace{-0.03cm}- & 10.04 ($\pm$ 0.07) & ~~9.72 ($\pm$ 0.29) & ~~0.93 ($\pm$ 0.03) & ~~1.00 ($\pm$ 0.05)   \\
     & \!SI\,+\,LT & \hspace{-0.03cm}- & ~~9.99 ($\pm$ 0.05) & ~~9.52 ($\pm$ 0.33) & ~~0.99 ($\pm$ 0.06) & ~~1.05 ($\pm$ 0.05) \\
     & \!LwF\,+\,LT & \hspace{-0.03cm}- & ~~9.92 ($\pm$ 0.13) & ~~9.83 ($\pm$ 0.28) & ~~0.98 ($\pm$ 0.01) & ~~1.00 ($\pm$ 0.00)   \\
     & \!EWC & \hspace{-0.03cm}- & ~~9.91 ($\pm$ 0.07) & ~~9.31 ($\pm$ 0.44) & ~~0.98 ($\pm$ 0.08) & ~~1.01 ($\pm$ 0.06) \\
     & \!SI & \hspace{-0.03cm}-   & 10.00 ($\pm$ 0.00) & 10.00 ($\pm$ 0.00) & ~~1.00 ($\pm$ 0.03) & ~~0.96 ($\pm$ 0.05) \\
     & \!LwF & \hspace{-0.03cm}- & ~~9.97 ($\pm$ 0.07) & ~~9.91 ($\pm$ 0.04) & ~~1.02 ($\pm$ 0.05) & ~~0.99 ($\pm$ 0.01)  \\
    \midrule
     & \!\bf PEC (ours) & \hspace{-0.03cm}-  & 60.17 ($\pm$ 0.21) & \bf 53.58 ($\pm$ 0.31) & \bf 19.67 ($\pm$ 0.36) & \bf 10.84 ($\pm$ 0.18) \\

    \bottomrule
  \end{tabular}%
  }
  \end{center}
\end{table*}

\begin{table*}
  \caption{\rebuttal{Performance with parameter budget of $1.1M$, for the many-classes-per-task case. For discriminative methods, Narrow ResNet18 architecture is used. Final average accuracy (in $\%$) is reported, with mean and standard error from 10~seeds.}}
  \label{tab:narrow-multi-class}
  \begin{center}
  \resizebox{4.75in}{!}{%
  \small
  \begin{tabular}{llcp{1.95cm}p{1.95cm}p{1.95cm}p{1.85cm}}
    \toprule
     & \!\textbf{Method} & \textbf{Buffer} & \textbf{Balanced~SVHN} (5/2) & \textbf{CIFAR-10} ~~~~~~~~~~~~~~~~(5/2) & \textbf{CIFAR-100} ~~~~~~~~~~~~~~~~(10/10) & \textbf{miniImageNet} (20/5) \\
    \midrule
     & \!DER++ & \checkmark & \bf 84.44 ($\pm$ 0.36) & 42.86 ($\pm$ 0.70) & 16.83 ($\pm$ 0.39) & ~~9.86 ($\pm$ 0.29) \\
     & \!X-DER & \checkmark & 71.30 ($\pm$ 3.70) & 52.16 ($\pm$ 1.08) & \bf 22.91 ($\pm$ 0.38) & \bf 16.48 ($\pm$ 0.33)  \\
     & \!Labels trick (LT) & \hspace{-0.03cm}- & 26.56 ($\pm$ 1.23) & 22.87 ($\pm$ 0.67) & ~~8.03 ($\pm$ 0.17) & ~~3.99 ($\pm$ 0.06) \\
      & \!EWC\,+\,LT & \hspace{-0.03cm}- & 25.28 ($\pm$ 1.06) & 23.33 ($\pm$ 0.96) & ~~8.01 ($\pm$ 0.14) & ~~3.38 ($\pm$ 0.11)   \\
     & \!SI\,+\,LT & \hspace{-0.03cm}- & 28.12 ($\pm$ 1.18) & 21.20 ($\pm$ 0.72) & ~~7.39 ($\pm$ 0.18) & ~~4.16 ($\pm$ 0.07) \\
     & \!LwF\,+\,LT & \hspace{-0.03cm}- & 41.37 ($\pm$ 1.48) & 38.08 ($\pm$ 0.79) & 10.92 ($\pm$ 0.29) & ~~3.29 ($\pm$ 0.18)   \\
     & \!EWC & \hspace{-0.03cm}- & 17.51 ($\pm$ 0.87) & 15.53 ($\pm$ 0.36) & ~~2.94 ($\pm$ 0.09) & ~~2.18 ($\pm$ 0.05) \\
     & \!SI & \hspace{-0.03cm}-  & 17.99 ($\pm$ 0.67) & 18.46 ($\pm$ 0.06) & ~~5.31 ($\pm$ 0.09) & ~~2.91 ($\pm$ 0.04) \\
     & \!LwF & \hspace{-0.03cm}- & 26.14 ($\pm$ 0.18) & 20.99 ($\pm$ 0.30) & ~~5.89 ($\pm$ 0.03) & ~~2.86 ($\pm$ 0.04)  \\
    \midrule
     & \!\bf PEC (ours) & \hspace{-0.03cm}-  & 60.17 ($\pm$ 0.21) & \bf 53.58 ($\pm$ 0.31) & 19.67 ($\pm$ 0.36) & 10.84 ($\pm$ 0.18) \\

    \bottomrule
  \end{tabular}%
  }
  \end{center}
\end{table*}

\subsection{Performance when varying number of ensemble members}

Here we study how the performance changes if we allow for an ensemble with independent student-teacher pairs. For simplicity, we did not include this possibility in our description of PEC and experiments in the main text. However, it is a natural variant to consider, especially given the theoretical \rebuttalTwo{support} from Section~\ref{sec:theory}. Results in Figure~\ref{fig:ablation_ensemble} demonstrate increased performance when using more than one student-teacher pair. The gains are roughly comparable to those when the width of single networks increases, see Figure~\ref{fig:ablations}.

\begin{figure}
    \centering
    \includegraphics[width=0.36\textwidth]{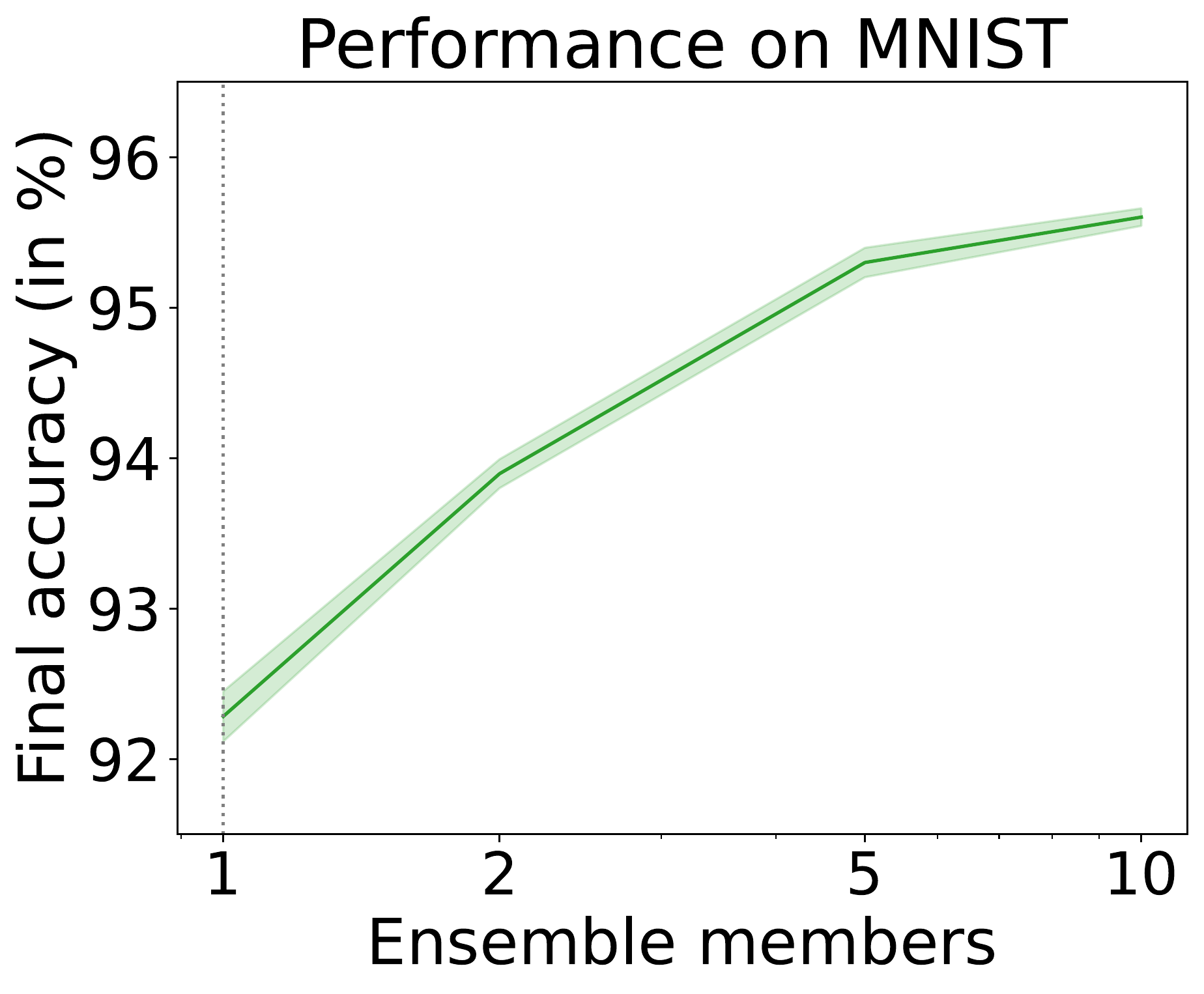}
    \hspace{0.2in}
    \includegraphics[width=0.38\textwidth]{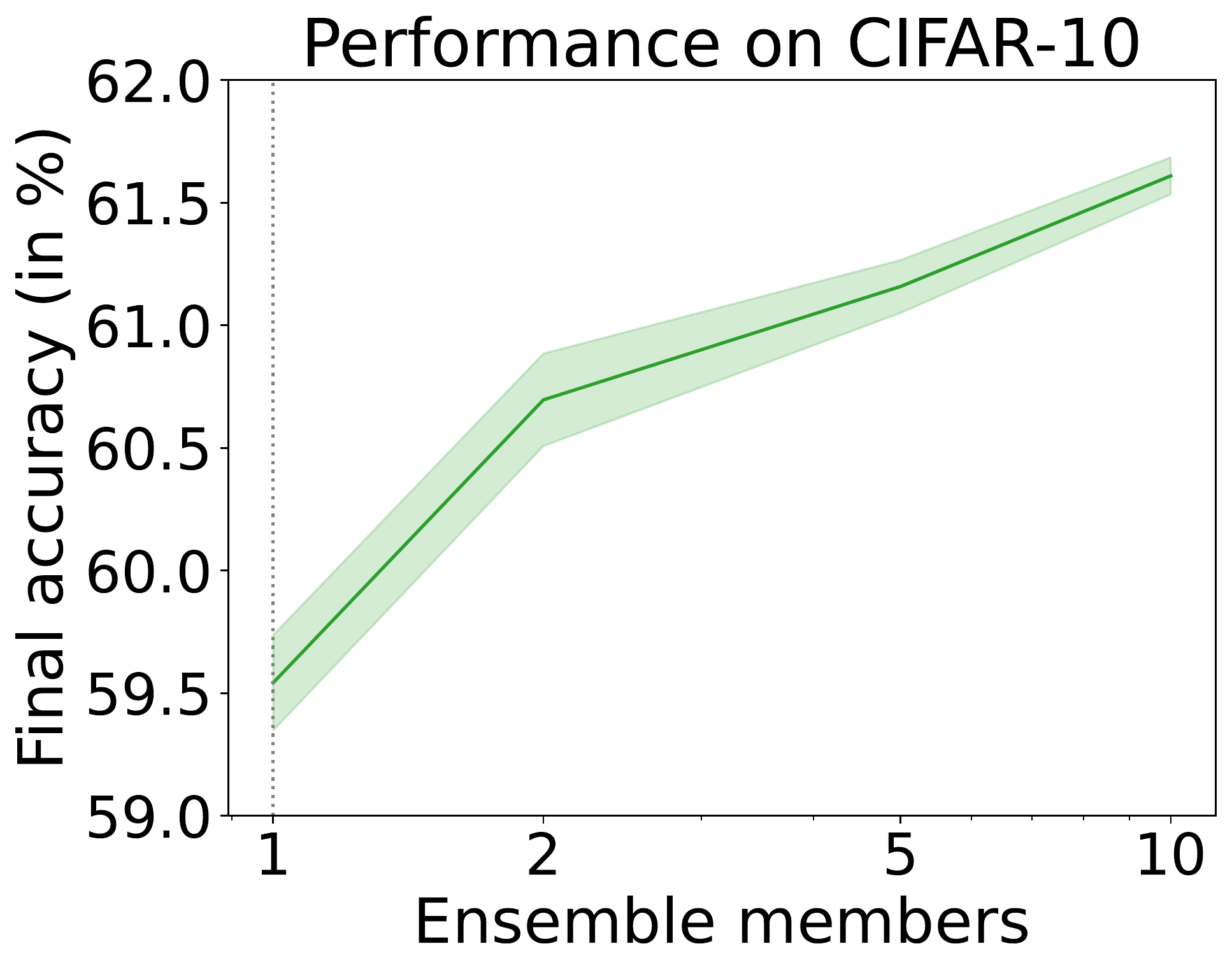}
    \caption{
\label{fig:ablation_ensemble} CIL performance for varying number of ensemble members. 
    Plotted are the means (solid lines) and standard errors (shaded areas) from $10$ seeds.}
\end{figure}

\subsection{Is PEC secretly a generative model?}
\label{app:pec_generations}

Here we examine if one can generate samples from datasets' class distributions given PEC modules. Since the score of a PEC module is low for examples from its corresponding class, we try to generate samples by starting from a random input and then minimizing the PEC score for a given class by gradient descent on inputs. Specifically, we utilize Adam with the learning rate of $0.00003$, and perform $30K$ updates. Generated samples are presented in Figure~\ref{fig:pec_generations} and compared against samples from the class-specific VAE modules of VAE-GC, as well as original data. Interestingly, samples from PEC bear a resemblance to respective digits. However, the quality of generations is very poor.

\begin{figure}
    \centering
    \includegraphics[width=0.8\textwidth]{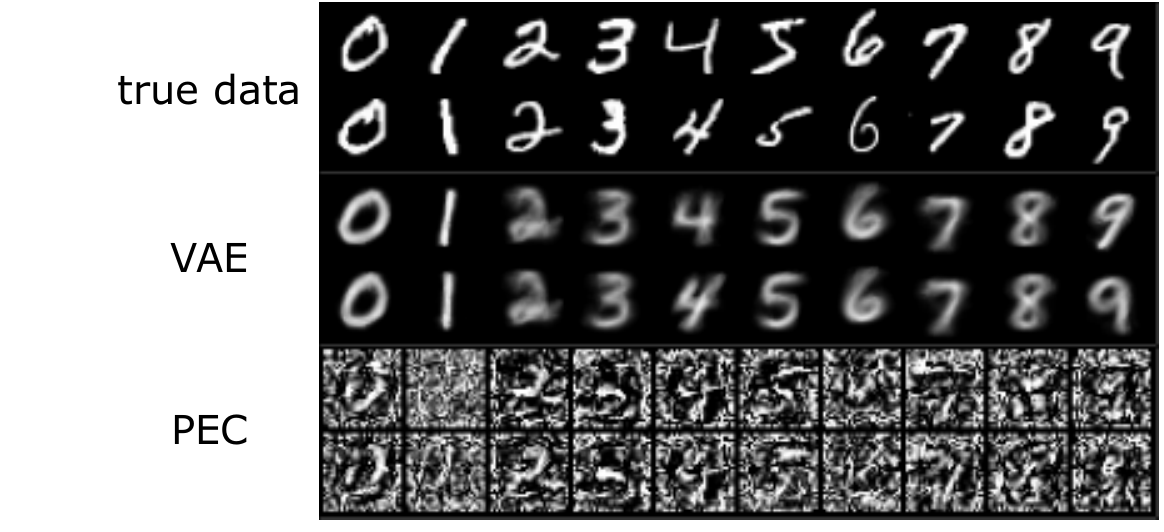}
    \hspace{1in}
    \caption{
\label{fig:pec_generations} 
Original data from the MNIST dataset is compared to generations from VAE and PEC modules used in our experiments.
Each column shows samples for a different class.}
\end{figure}

\subsection{Results for methods with memory buffer for varying buffer size}

In our main text experiments, when comparing against methods that leverage a memory buffer, we focused on a moderate buffer size of $500$. Here we include comparisons for the case of small and big memory buffers, containing respectively $100$ and $5000$ examples, following regimes from~\citet{der}. For these experiments, we perform the same hyperparameter searches as for the main text results, see Appendix~\ref{app:hyperparameters}.

Results for the case of one class per task are presented in Table~\ref{tab:app_buffer_one_class} and for the case of multiple classes per task are presented in Table~\ref{tab:app_buffer_multi_class}. As expected, the performance of these methods drops significantly as we move to the smaller buffer, increasing the advantage of PEC. In the case of big buffer size, PEC is still competitive in the one-class-per-task setting and scores the best for CIFAR-100 and miniImageNet datasets, while in the multiple-classes-per-task setting its advantage mostly fades away. This is not surprising, as replay-based methods are increasingly similar to the joint training as the buffer size grows.

\begin{table}
  \caption{Results for the varying buffer size for the methods that use memory buffer in case of one class per task. Final average accuracy (in $\%$) is reported, with mean and standard error from 10 seeds. Results for PEC are included to ease comparison.}
  \label{tab:app_buffer_one_class}
  \begin{center}
  \resizebox{\columnwidth}{!}{%
  \small
  \begin{tabular}{lcp{1.95cm}p{1.95cm}p{1.95cm}p{1.95cm}p{1.85cm}}
    \toprule
    \!\textbf{Method} & \hspace{-0.0cm}\textbf{Buffer size} & \textbf{MNIST} ~~~~~~~~~~~~~~~~(10/1) & \textbf{Balanced~SVHN} (10/1) & \textbf{CIFAR-10} ~~~~~~~~~~~~~~~~(10/1) & \textbf{CIFAR-100} ~~~~~~~~~~~~~~~~(100/1) & \textbf{miniImageNet} (100/1) \\
    \midrule
     \!ER & \hspace{-0.0cm}100 & 68.67 ($\pm$ 1.04) & 46.53 ($\pm$ 1.98) & 28.39 ($\pm$ 0.80) & ~~5.74 ($\pm$ 0.17) & ~~2.76 ($\pm$ 0.15) \\
    \!A-GEM & \hspace{-0.0cm}100 & 63.90 ($\pm$ 1.21) & 10.35 ($\pm$ 0.18) & 11.40 ($\pm$ 0.57) & ~~0.90 ($\pm$ 0.04) & ~~1.50 ($\pm$ 0.07) \\
    \!DER & \hspace{-0.0cm}100 & 81.27 ($\pm$ 0.64) & 57.67 ($\pm$ 2.42) & 25.33 ($\pm$ 0.76) & ~~1.00 ($\pm$ 0.03) & ~~1.01 ($\pm$ 0.04) \\
    \!DER++ & \hspace{-0.0cm}100 & 81.41 ($\pm$ 0.39) & 59.11 ($\pm$ 1.89) & 23.33 ($\pm$ 1.23) & ~~2.55 ($\pm$ 0.33) & ~~2.45 ($\pm$ 0.19) \\
    \!X-DER & \hspace{-0.0cm}100 & 80.18 ($\pm$ 0.61) & 48.64 ($\pm$ 3.86) & 30.93 ($\pm$ 0.44) & ~~7.45 ($\pm$ 0.23) & ~~4.92 ($\pm$ 0.17) \\
    \!iCaRL & \hspace{-0.0cm}100 & 75.45 ($\pm$ 0.54) & 22.16 ($\pm$ 2.99) & 28.99 ($\pm$ 0.39) & ~~5.06 ($\pm$ 0.16) & ~~3.46 ($\pm$ 0.16) \\
    \!BiC & \hspace{-0.0cm}100 & 64.48 ($\pm$ 1.59) & 14.39 ($\pm$ 1.13) & 12.48 ($\pm$ 0.72) & ~~3.12 ($\pm$ 0.28) & ~~1.16 ($\pm$ 0.05) \\
    \midrule
     \!ER & \hspace{-0.0cm}500 & 84.42 ($\pm$ 0.26) & 74.80 ($\pm$ 0.69) & 40.59 ($\pm$ 1.17) & 12.47 ($\pm$ 0.32) & ~~5.72 ($\pm$ 0.24) \\
    \!A-GEM & \hspace{-0.0cm}500 & 59.84 ($\pm$ 0.82) & 9.93 ($\pm$ 0.03) & 10.19 ($\pm$ 0.11) & ~~1.01 ($\pm$ 0.03) & ~~1.11 ($\pm$ 0.07)  \\
    \!DER & \hspace{-0.0cm}500 & 91.65 ($\pm$ 0.11) & 76.45 ($\pm$ 0.71) & 40.03 ($\pm$ 1.52) & ~~1.01 ($\pm$ 0.05) & ~~0.99 ($\pm$ 0.01) \\
    \!DER++ & \hspace{-0.0cm}500 & 91.88 ($\pm$ 0.19) &  80.66 ($\pm$ 0.48) & 35.61 ($\pm$ 2.43) & ~~6.17 ($\pm$ 0.38) & ~~1.40 ($\pm$ 0.09)  \\
    \!X-DER & \hspace{-0.0cm}500 & 82.97 ($\pm$ 0.14) & 70.65 ($\pm$ 0.61) & 43.16 ($\pm$ 0.46) & 15.55 ($\pm$ 0.14) & ~~8.22 ($\pm$ 0.35)  \\
    \!iCaRL & \hspace{-0.0cm}500 & 83.08 ($\pm$ 0.28) & 57.18 ($\pm$ 0.94) & 37.80 ($\pm$ 0.42) & ~~5.74 ($\pm$ 0.14) & ~~7.52 ($\pm$ 0.14) \\
    \!BiC & \hspace{-0.0cm}500 & 86.00 ($\pm$ 0.37) & 70.69 ($\pm$ 0.52) & 35.86 ($\pm$ 0.38) & ~~6.44 ($\pm$ 0.28) & ~~1.46 ($\pm$ 0.05)  \\
    \midrule
     \!ER & \hspace{-0.0cm}5000 & 92.34 ($\pm$ 0.41) & 82.06 ($\pm$ 0.68) & 43.90 ($\pm$ 0.59) & 20.76 ($\pm$ 0.65) & ~~6.33 ($\pm$ 0.31) \\
    \!A-GEM & \hspace{-0.0cm}5000 & 59.27 ($\pm$ 0.96) & 9.98 ($\pm$ 0.04) & 11.32 ($\pm$ 0.59) & ~~0.97 ($\pm$ 0.06) & ~~1.03 ($\pm$ 0.04) \\
    \!DER & \hspace{-0.0cm}5000 & 93.74 ($\pm$ 0.08) & 85.47 ($\pm$ 0.63) & 51.41 ($\pm$ 0.63) & ~~0.95 ($\pm$ 0.03) & ~~0.97 ($\pm$ 0.04) \\
    \!DER++ & \hspace{-0.0cm}5000 & 95.01 ($\pm$ 0.07) & 88.36 ($\pm$ 0.25) & 49.51 ($\pm$ 2.04) & ~~9.22 ($\pm$ 0.78) & ~~1.56 ($\pm$ 0.13) \\
    \!X-DER & \hspace{-0.0cm}5000  & 86.23 ($\pm$ 0.09) & 78.54 ($\pm$ 0.24) & 52.32 ($\pm$ 1.23) & 22.33 ($\pm$ 0.80) & ~~3.31 ($\pm$ 0.15) \\
    \!iCaRL & \hspace{-0.0cm}5000 & 88.02 ($\pm$ 0.21) & 81.97 ($\pm$ 0.28) & 61.13 ($\pm$ 0.22) & ~~8.64 ($\pm$ 0.21) & 14.17 ($\pm$ 0.52) \\
    \!BiC & \hspace{-0.0cm}5000 & 93.11 ($\pm$ 0.17) & 86.89 ($\pm$ 0.25) & 62.84 ($\pm$ 0.40) & ~~2.84 ($\pm$ 0.06) & ~~2.95 ($\pm$ 0.14) \\
    \midrule
    \!\textbf{PEC (ours)} & \hspace{-0.03cm}- &  92.31 ($\pm$ 0.13) & 68.70 ($\pm$ 0.16) &   58.94 ($\pm$ 0.09) &   26.54 ($\pm$ 0.11) &   14.90 ($\pm$ 0.08) \\
    \bottomrule
  \end{tabular}%
  }
  \end{center}
\end{table}

\begin{table}
  \caption{Results for the varying buffer size for the methods that use memory buffer in case of multiple classes per task. Final average accuracy (in $\%$) is reported, with mean and standard error from 10 seeds. Results for PEC are included to ease comparison.}
  \label{tab:app_buffer_multi_class}
  \begin{center}
  \resizebox{\columnwidth}{!}{%
  \small
  \begin{tabular}{lcp{1.95cm}p{1.95cm}p{1.95cm}p{1.95cm}p{1.85cm}}
    \toprule
    \!\textbf{Method} & \hspace{-0.0cm}\textbf{Buffer size} & \textbf{MNIST} ~~~~~~~~~~~~~~~~(\rebuttal{5/2}) & \textbf{Balanced~SVHN} (\rebuttal{5/2}) & \textbf{CIFAR-10} ~~~~~~~~~~~~~~~~(\rebuttal{5/2}) & \textbf{CIFAR-100} ~~~~~~~~~~~~~~~~(\rebuttal{10/10}) & \textbf{miniImageNet} (\rebuttal{20/5}) \\
    \midrule
     \!ER & \hspace{-0.0cm}100 & 60.89 ($\pm$ 1.42) & 58.07 ($\pm$ 0.45) & 38.75 ($\pm$ 0.63) & 10.78 ($\pm$ 0.34) & ~~4.37 ($\pm$ 0.14) \\
    \!A-GEM & \hspace{-0.0cm}100 & 57.45 ($\pm$ 1.26) & 20.40 ($\pm$ 0.85) & 18.10 ($\pm$ 0.15) & ~~6.07 ($\pm$ 0.05) & ~~2.67 ($\pm$ 0.16) \\
    \!DER & \hspace{-0.0cm}100 & 81.68 ($\pm$ 0.37) & 70.40 ($\pm$ 1.25) & 37.09 ($\pm$ 0.83) & ~~8.62 ($\pm$ 0.18) & ~~3.95 ($\pm$ 0.08) \\
    \!DER++ & \hspace{-0.0cm}100 & 81.52 ($\pm$ 0.71) & 66.64 ($\pm$ 1.67) & 40.31 ($\pm$ 0.73) & 10.74 ($\pm$ 0.19) & ~~6.41 ($\pm$ 0.23) \\
    \!X-DER & \hspace{-0.0cm}100 & 84.19 ($\pm$ 0.94) & 74.79 ($\pm$ 0.81) & 51.60 ($\pm$ 0.81) & 19.44 ($\pm$ 0.24) & ~~8.83 ($\pm$ 0.81) \\
    \!iCaRL & \hspace{-0.0cm}100 & 74.53 ($\pm$ 0.52) & 58.45 ($\pm$ 0.74) & 48.05 ($\pm$ 0.34) & 10.87 ($\pm$ 0.14) & ~~9.78 ($\pm$ 0.13) \\
    \!BiC & \hspace{-0.0cm}100 & 68.10 ($\pm$ 1.44) & 27.84 ($\pm$ 0.88) & 23.34 ($\pm$ 1.07) & ~~8.42 ($\pm$ 0.23) & ~~4.08 ($\pm$ 0.02) \\
    \midrule
     \!ER & 500 & 86.55 ($\pm$ 0.17) & 80.00 ($\pm$ 0.54) & 57.49 ($\pm$ 0.54) & 18.15 ($\pm$ 0.17) & ~~6.67 ($\pm$ 0.21) \\
    \!A-GEM & 500 & 50.82 ($\pm$ 1.36) & 19.71 ($\pm$ 0.49) & 18.21 ($\pm$ 0.14) & ~~6.43 ($\pm$ 0.06) & ~~2.47 ($\pm$ 0.18) \\
    \!DER & 500 & 91.64 ($\pm$ 0.11) & 78.22 ($\pm$ 0.94) & 43.98 ($\pm$ 1.20) & ~~7.69 ($\pm$ 0.27) & ~~3.86 ($\pm$ 0.04) \\
    \!DER++ & 500 &  92.30 ($\pm$ 0.14) &  86.50 ($\pm$ 0.43) & 50.61 ($\pm$ 0.80) & 19.88 ($\pm$ 0.25) & 10.50 ($\pm$ 0.42) \\
    \!X-DER & 500 & 81.91 ($\pm$ 0.53) & 73.88 ($\pm$ 1.59) & 54.15 ($\pm$ 1.92) &  27.84 ($\pm$ 0.28) &  14.65 ($\pm$ 0.40)  \\
    \!iCaRL & 500 & 80.41 ($\pm$ 0.45) & 73.65 ($\pm$ 0.57) & 54.33 ($\pm$ 0.33) & 13.31 ($\pm$ 0.16) & 13.46 ($\pm$ 0.20) \\
    \!BiC & 500 & 88.08 ($\pm$ 0.81) & 77.65 ($\pm$ 0.45) & 46.74 ($\pm$ 0.83) & 14.12 ($\pm$ 0.26) & ~~6.90 ($\pm$ 0.19) \\
    \midrule
     \!ER & \hspace{-0.0cm}5000 & 94.60 ($\pm$ 0.11) & 87.62 ($\pm$ 0.52) & 63.55 ($\pm$ 0.28) & 26.71 ($\pm$ 0.36) & ~~9.14 ($\pm$ 0.37) \\
    \!A-GEM & \hspace{-0.0cm}5000 & 46.52 ($\pm$ 1.37) & 18.33 ($\pm$ 0.52) & 18.49 ($\pm$ 0.13) & ~~6.66 ($\pm$ 0.05) & ~~2.92 ($\pm$ 0.12) \\
    \!DER & \hspace{-0.0cm}5000 & 95.02 ($\pm$ 0.06) & 84.08 ($\pm$ 0.84) & 55.57 ($\pm$ 0.72) & ~~7.12 ($\pm$ 0.21) & ~~3.85 ($\pm$ 0.06) \\
    \!DER++ & \hspace{-0.0cm}5000 & 95.70 ($\pm$ 0.07) & 91.29 ($\pm$ 0.15) & 61.41 ($\pm$ 0.57) & 24.07 ($\pm$ 0.40) & ~~5.31 ($\pm$ 0.41) \\
    \!X-DER & \hspace{-0.0cm}5000  & 87.78 ($\pm$ 0.47) & 79.79 ($\pm$ 1.22) & 59.80 ($\pm$ 0.52) & 34.58 ($\pm$ 0.30) & 16.94 ($\pm$ 0.58) \\
    \!iCaRL & \hspace{-0.0cm}5000 & 87.56 ($\pm$ 0.29) & 85.03 ($\pm$ 0.17) & 63.85 ($\pm$ 0.23) & 14.49 ($\pm$ 0.11) & 19.39 ($\pm$ 0.18) \\
    \!BiC & \hspace{-0.0cm}5000 & 94.55 ($\pm$ 0.14) & 88.38 ($\pm$ 0.19) & 69.84 ($\pm$ 0.22) & 25.42 ($\pm$ 0.21) & 19.94 ($\pm$ 0.16) \\
    \midrule
    \!\textbf{PEC (ours)} & \hspace{-0.03cm}- &  92.31 ($\pm$ 0.13) & 68.70 ($\pm$ 0.16) &   58.94 ($\pm$ 0.09) &   26.54 ($\pm$ 0.11) &   14.90 ($\pm$ 0.08) \\
    \bottomrule
  \end{tabular}%
  }
  \end{center}
\end{table}

\rebuttal{\subsection{PEC with different initialization schemes}
\label{sec:init}
We verify how much the performance of PEC depends on the chosen initialization scheme. We study Kaiming~\citep{he2015delving} (our default choice), Xavier~\citep{glorot2010understanding}, and random uniform with different ranges. For each choice of initialization scheme, we use it to initialize the parameters of both the teacher and the student networks. The results in Table~\ref{tab:pec_initialization} indicate that the performance of PEC is relatively stable with respect to the way the networks are initialized.} 

\begin{table}
  \caption{\rebuttal{Performance comparison of PEC for various initialization schemes. Reported is the final average accuracy (in \%), with mean and standard error from $10$ seeds.}}
  \label{tab:pec_initialization}
  \begin{center}
  \small
  \begin{tabular}{llp{1.95cm}p{1.95cm}}
    \toprule
     & \!\!\!\!\!\!\!\textbf{Method} & \textbf{MNIST} & \textbf{CIFAR-10} \\
    \midrule
    & \!\!\!\!\!\!\!PEC, Kaiming (vanilla) & 92.31 ($\pm$ 0.13)  &   58.94 ($\pm$ 0.09) \\
     & \!\!\!\!\!\!\!PEC, Xavier & 91.70 ($\pm$ 0.13) & 58.02 ($\pm$ 0.14)  \\
     & \!\!\!\!\!\!\!PEC, random uniform from $[-0.001, 0.001]$ & 88.52 ($\pm$ 0.22) & 57.33 ($\pm$ 0.22)  \\
     & \!\!\!\!\!\!\!PEC, random uniform from $[-0.01, 0.01]$ & 91.73 ($\pm$ 0.09) & 57.60 ($\pm$ 0.20)  \\
     & \!\!\!\!\!\!\!PEC, random uniform from $[-0.1, 0.1]$ & 92.01 ($\pm$ 0.14) & 57.46 ($\pm$ 0.20)  \\
    \bottomrule
  \end{tabular}%
  \end{center}
\end{table}

\rebuttal{
\subsection{PEC with linear architectures}
Here we further explore architectural choices for PEC. We compare PEC with our vanilla architecture, which has a single hidden layer with a nonlinear activation function, to PEC with linear networks that contain no activation function. We consider two variants. For both variants, the described changes are applied to both the teacher and the student networks. In \emph{Removed nonlinearity}, we take the vanilla PEC architecture and remove the nonlinear activation function, resulting in a linear network that retains some inductive biases of the original architecture (e.g. the use of convolutions). In \emph{Single linear}, we just take a single linear layer with the output size chosen to match the parameter count of the original architecture. Results in Table~\ref{tab:pec_linear} indicate that typically vanilla PEC performs substantially better than the linear variants. Interestingly, for Balanced SVHN, Single linear performs better than vanilla. A reason for this might be the suboptimality of the used convolutional architecture, which we did not tune extensively.
}

\rebuttalTwo{
\subsection{PEC versus ensemble discriminative classifier}
Here we report additional experiments in which we test whether the strong performance of PEC could be explained by network ensembling (i.e., training a separate network per task), rather than that it is due to the PEC classification rule. To test this, we compare PEC with various variants of an `ensemble discriminative classifier baseline'. For this ensemble baseline, we train for each task a separate network equipped with a task-specific softmax layer, and during inference we take the argmax of the predicted scores across the softmax layers of the different models. We consider three variants of this ensemble baseline. In \emph{Ensemble, full}, we use for each task a separate copy of the original discriminative network (i.e., two-layer MLP for MNIST, ResNet18 for the other datasets). Note that this baseline uses $T$ times more parameters than PEC, with $T$ the number of tasks. In \emph{Ensemble, small}, we use for each task a version of the original discriminative network (i.e., two-layer MLP for MNIST, ResNet18 for the other datasets), but with the width of the networks reduced in order to match the parameter count of PEC and other baselines. Finally, in \emph{Ensemble, PEC-architecture}, we use for each task the same network architecture as for PEC, except with a softmax output layer instead of the $d$-dimensional linear output layer and with the width of the hidden layer adjusted in order to match the parameter count of PEC. This last variant is the most direct comparison, because it is as close to PEC as possible, except that it does not use the PEC classification rule.
}

\rebuttalTwo{To give these ensemble baselines the best possible chance, we run these three variants in the easier multiple-classes-per-task setting. (In preliminary experiments we confirmed that these ensemble baselines indeed perform substantially worse in the single-class-per-task case.) The results are reported in Table~\ref{tab:pec_vs_ensemble}. We see that, on all five datasets, PEC substantially outperforms all three variants of the ensemble baseline. This provides strong evidence that the PEC classification rule is critical for the strong performance of PEC.
}

\begin{table}
  \caption{\rebuttal{Performance comparison of vanilla PEC versus variants of PEC with linear networks. Reported is the final average accuracy (in \%), with mean and standard error from $10$ seeds.}}
  \label{tab:pec_linear}
  \begin{center}
  \resizebox{\columnwidth}{!}{%
  \small
  \begin{tabular}{llp{1.95cm}p{1.95cm}p{1.95cm}p{1.95cm}p{1.85cm}}
    \toprule
     & \!\!\!\!\!\!\!\textbf{Method} & \textbf{MNIST} & \textbf{Balanced~SVHN} & \textbf{CIFAR-10} & \textbf{CIFAR-100} & \textbf{miniImageNet} \\
    \midrule
    & \!\!\!\!\!\!\!PEC & 92.31 ($\pm$ 0.13) & 68.70 ($\pm$ 0.16) &   58.94 ($\pm$ 0.09) &   26.54 ($\pm$ 0.11) &   14.90 ($\pm$ 0.08) \\
     & \!\!\!\!\!\!\!PEC, Removed nonlinearity & 92.41 ($\pm$ 0.12) & 61.85 ($\pm$ 0.20) & 46.19 ($\pm$ 0.14) & 19.41 ($\pm$ 0.11) & 10.55 ($\pm$ 0.11) \\
     & \!\!\!\!\!\!\!PEC, Single linear & 80.39 ($\pm$ 0.24) & 71.08 ($\pm$ 0.18) & 40.04 ($\pm$ 0.14) & 13.21 ($\pm$ 0.10) & ~~7.30 ($\pm$ 0.11)  \\
    \bottomrule
  \end{tabular}%
  }
  \end{center}
\end{table}

\begin{table}
  \caption{\rebuttalTwo{Comparison of PEC with several versions of the ensemble discriminative classifier baseline, for the multiple-classes-per-task case. Reported is the final average accuracy (in \%), with mean and standard error from $10$ seeds.}}
  \label{tab:pec_vs_ensemble}
  \begin{center}
  \resizebox{\columnwidth}{!}{%
  \small
  \begin{tabular}{llp{1.95cm}p{1.95cm}p{1.95cm}p{1.95cm}p{1.85cm}}
    \toprule
     & \!\!\!\!\!\!\!\textbf{Method} & \textbf{MNIST} & \textbf{Balanced~SVHN} & \textbf{CIFAR-10} & \textbf{CIFAR-100} & \textbf{miniImageNet} \\
    \midrule
    & \!\!\!\!\!\!\!PEC & 92.31 ($\pm$ 0.13) & 68.70 ($\pm$ 0.16) &   58.94 ($\pm$ 0.09) &   26.54 ($\pm$ 0.11) &   14.90 ($\pm$ 0.08) \\
     & \!\!\!\!\!\!\!Ensemble, full & 55.47 ($\pm$ 1.38) & 52.17 ($\pm$ 1.70) & 41.15 ($\pm$ 1.00) & 21.48 ($\pm$ 0.21) & ~~8.83 ($\pm$ 0.19)  \\
     & \!\!\!\!\!\!\!Ensemble, small & 48.04 ($\pm$ 2.16) & 47.78 ($\pm$ 0.98) & 38.31 ($\pm$ 0.77) & 17.58 ($\pm$ 0.22) & ~~7.75 ($\pm$ 0.23)  \\
     & \!\!\!\!\!\!\!Ensemble, PEC-architecture & 51.09 ($\pm$ 1.71) & 47.70 ($\pm$ 0.28) & 36.39 ($\pm$ 0.13) & 24.52 ($\pm$ 0.17) & 12.82 ($\pm$ 0.18)  \\
    \bottomrule
  \end{tabular}%
  }
  \end{center}
\end{table}

\section{Detailed theoretical \rebuttalTwo{support for} PEC}
\label{app:theory_details}

\subsection{Proofs of the propositions from main text}

\begin{repproposition}{propo:conservatism_main}[Proposition 1 from \citet{ciosek2020conservative}, rephrased]
    Let $H$ be a Gaussian Process and $X$ a dataset. Then the following inequality holds:
\begin{equation}
    s_{\infty}^{H, X}(x^*) \geq \sigma(H_{X})(x^*).
\end{equation}
\end{repproposition}
\begin{proof}
       We can directly use Proposition 1 from \cite[Section 4.1, Appendix F.1]{ciosek2020conservative} with $f$ from the original proposition statement being equal to $H$, and function $h$ from the original proposition statement being equal to $\varphi(X, \hat{h}(X), x^*) := g^{\hat{h}, X}(x^*)$, for any $\hat{h} \sim H$. Note that $\tilde{\sigma}_{\mu}^2(x_{\star})$ and $\sigma_{X}^2(x_{\star})$ from the original statement correspond to our $s_{\infty}^{H, X}(x^*)$ and $\sigma(H_{X})(x^*)$, respectively.
\end{proof}

\begin{lemma}[Lemma 4 from \citet{ciosek2020conservative}, rephrased]
\label{lemma:concentration}
    Let $h : \mathbb{R}^K \rightarrow \mathbb{R}^M$ be a function with the domain restricted to $\{x: ||x||_{\infty} \leq 1 \}$, $U \geq 0$ be a constant such that $||h(x)||_{\infty} \leq U$ for all $x$ in the domain, and $\mathcal{X}$ be a data distribution with support in $\{x: ||x||_{\infty} \leq 1 \}$. Furthermore, assume that $K \geq 3$, and that our procedure for training neural networks $g^{h, X}$ returns Lipschitz functions with constant $L$. Then, for a dataset $X := \{ x_1, \ldots, x_N \} \sim \mathcal{X}$, the following inequality holds with probability $1 - \delta$:

    \begin{equation}
    \label{eq:concentration}
    \begin{split}
        \mathbb{E}_{x^* \sim \mathcal{X}} \left[ \frac{1}{M} || g^{h, X}(x^*) - h(x^*) ||^2 \right] \leq \frac{1}{MN} \sum_{i=1}^N || g^{h, X}(x_i) - h(x_i) ||^2 \\+ LU \cdot O \left( \frac{1}{\sqrt[K]{N}} \sqrt{\frac{\log(1 / \delta)}{N}} \right).
    \end{split}
    \end{equation} 
\end{lemma}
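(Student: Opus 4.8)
As the statement is an almost verbatim rephrasing of Lemma~4 of \citet{ciosek2020conservative}, the cleanest route — mirroring the proof of Proposition~\ref{propo:conservatism_main} — is to invoke that lemma directly under a notation dictionary: the trained approximator $g^{h,X}$ plays the role of the $L$-Lipschitz network in the original statement, the frozen teacher $h$ plays the role of the bounded target, the cube $\{x : \|x\|_\infty \le 1\}$ is the common input domain, and $M$, $U$, $K$ match their output dimension, magnitude bound, and input dimension. The displayed inequality is then exactly their conclusion. For transparency I sketch the argument behind it.

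\textbf{Reduction via Wasserstein duality.} The difficulty is that $g^{h,X}$ is trained on the very sample $X$ over which the empirical average is formed, so Hoeffding's inequality cannot be applied to the summands directly. The key device is to note that, writing $\psi_g(x) := \tfrac{1}{M}\|g(x) - h(x)\|^2$ and letting $\mu := \mathcal{X}$, $\hat\mu_N := \tfrac1N\sum_i \delta_{x_i}$, the Kantorovich--Rubinstein duality yields, for \emph{every} function $g$ at once,
\begin{equation}
\mathbb{E}_{x^* \sim \mathcal{X}}\!\left[\psi_g(x^*)\right] - \frac{1}{N}\sum_{i=1}^N \psi_g(x_i) \;\le\; \mathrm{Lip}(\psi_g)\cdot W_1(\mu, \hat\mu_N),
\end{equation}
where $W_1$ is the $1$-Wasserstein distance. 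Because this holds uniformly over all Lipschitz $g$, instantiating it at the data-dependent choice $g = g^{h,X}$ is immediate and sidesteps the coupling entirely; all the randomness is then isolated in the single scalar $W_1(\mu,\hat\mu_N)$.

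\textbf{Key steps.} First I would bound the Lipschitz constant: since $\nabla \psi_g = \tfrac{2}{M}\sum_j (g_j - h_j)\,\nabla(g_j - h_j)$, the boundedness of the residual $\|g - h\| = O(U)$ together with the $L$-Lipschitzness of $g$ (and the regularity of $h$) gives $\mathrm{Lip}(\psi_g) = O(LU)$, which is where the prefactor $LU$ originates. Second, I would control $W_1(\mu,\hat\mu_N)$: the classical rate of convergence of the empirical measure in Wasserstein distance on a $K$-dimensional domain gives $\mathbb{E}[W_1(\mu,\hat\mu_N)] = O(N^{-1/K})$ precisely in the regime $K \ge 3$ (for $K \le 2$ the rate is instead $N^{-1/2}$ up to logarithms), which is the sole place the hypothesis $K \ge 3$ is used and which supplies the $1/\sqrt[K]{N}$ factor. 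Third, a bounded-differences (McDiarmid) argument — changing one sample point moves $W_1$ by $O(1/N)$ — upgrades this to a high-probability statement, contributing the $\sqrt{\log(1/\delta)/N}$ term; the displayed big-$O$ bundles these two contributions.

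\textbf{Main obstacle.} The substantive input is the sharp empirical-measure Wasserstein rate $N^{-1/K}$ for $K \ge 3$ (a Fournier--Guillin-type estimate), which is the nontrivial probabilistic ingredient; a secondary technical point is that the Lipschitz bound on $\psi_g$ implicitly requires $h$ to be regular, not merely bounded, so one must either assume this (as holds for the smooth teacher used in PEC) or absorb $h$'s modulus of continuity into $L$. Since both points are established in \citet{ciosek2020conservative}, the paper can simply cite their Lemma~4 under the dictionary above.
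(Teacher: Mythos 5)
Your primary plan is exactly the paper's proof: it invokes Lemma~4 of \citet{ciosek2020conservative} directly, with the dictionary that $f$ in the original statement is $h$ and $h_{X_f}$ is $g^{h,X}$, so the displayed inequality is their conclusion verbatim. Your accompanying sketch (Kantorovich--Rubinstein duality, the $O(N^{-1/K})$ empirical-Wasserstein rate for $K \geq 3$, and the McDiarmid upgrade) correctly summarizes the argument inside the cited lemma but is not needed for, nor part of, the paper's proof.
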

\begin{proof}
    Let us directly use Lemma 4 from \cite[Appendix F.2]{ciosek2020conservative}, with $f$ from the original formulation being $h$, and $h_{X_f}$ from the original formulation being $g^{h, X}$.
\end{proof}

\begin{repproposition}{propo:concentration_main}
    Let $H$ be a Gaussian Process, $\mathcal{X}$ be a data distribution, and $X := \{ x_1, \ldots, x_N \} \sim \mathcal{X}$ a finite dataset. Assume that our class of approximators is such that for any $h \sim H$, $h$ can be approximated by $g^{h, X}$ so that, \rebuttalTwo{for arbitrarily large training sets,} the training error is smaller than $\epsilon > 0$ with probability at least $1 - \gamma$, $\gamma > 0$. Then, under technical assumptions, for any $\delta > 0, $ it holds with probability at least $1 - \delta - \gamma$ that
    \begin{equation}
        \mathbb{E}_{x^* \sim \mathcal{X}} \left[ s_1^{H, X}(x^*) \right] \leq \epsilon + \kappa_N,
    \end{equation}
    where $\kappa_N$ goes to $0$ as $N \rightarrow \infty$.
\end{repproposition}

\begin{proof}
    The technical assumptions that are needed are those of Lemma~\ref{lemma:concentration}. Let $h_1 \sim H$ be the sample used by $s_1^{H, X}$ and use Lemma~\ref{lemma:concentration} for $h_1$. On the right side of Equation~\ref{eq:concentration} we have the training error plus another factor that goes to $0$ when $N$ grows, which we denote by $\kappa_N$. Thus, combining \autoref{eq:concentration} \rebuttal{(inequality holding with probability $1 - \delta$)} with the assumption on the training error \rebuttal{(inequality holding with probability $1 - \gamma$)}, \rebuttal{we get that both positive events occur} with probability at least $1 - \delta - \gamma$ \rebuttal{, in which case} we obtain
    \begin{equation*}
    \begin{split}
    \mathbb{E}_{x^* \sim \mathcal{X}} \left[ s_1^{H, X}(x^*) \right] = 
    \mathbb{E}_{x^* \sim \mathcal{X}} \left[ || g^{h, X}(x^*) - h(x^*) ||^2 \right] \leq \\ \frac{1}{N} \sum_{i=1}^N || g^{h, X}(x_i) - h(x_i) ||^2 + LU \cdot O \left( \frac{1}{\sqrt[K]{N}} \sqrt{\frac{\log(1 / \delta)}{N}} \right) \leq \epsilon + \kappa_N,
    \end{split}
    \end{equation*}
which we wanted to prove.
\end{proof}


\subsection{Additional proposition about GP posterior variance rule}

In Section~\ref{sec:theory}, we provided an intuitive argument for the use of the GP posterior variance rule. Here we present an additional proposition that shows convergence to $0$ of scores for in-distribution data.

We assume the setting without the observation noise. In such a case, the formula for a posterior variance of a Gaussian Process $H := \mathcal{GP}(\mu, k)$ at a point $x^*$ given dataset $(X, y)$ is as follows~\citep{rasmussen2006gaussian}:

\begin{equation} \label{eq:gp_posterior_formula}
\sigma(H_X)(x^*) = k(x^*, x^*) - k(x^*, X)^Tk(X, X)^{-1}k(X, x^*).
\end{equation}

\begin{proposition}
    Let $H$ be a Gaussian Process with continuous kernel function $k$, $X$ be a dataset sampled according to $\mathcal{X}$, and $x^* \sim \mathcal{X}.$ Then the posterior variance $\sigma(H_{X})(x^*)$ goes to $0$ in the limit of infinite data.
\end{proposition}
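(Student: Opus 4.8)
The plan is to reduce the full posterior variance to the posterior variance given a single, well-chosen nearby training point, and then exploit continuity of $k$ together with the fact that the i.i.d.\ sample $X$ eventually places points arbitrarily close to $x^*$. If $k(x^*,x^*)=0$ the prior variance is already zero and there is nothing to prove, so throughout I assume $k(x^*,x^*)>0$.

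First, I would record the monotonicity of GP posterior variance under conditioning. Since $H$ is a Gaussian Process, the variables $H(x^*), H(x_1), \ldots, H(x_N)$ are jointly Gaussian, and in the noise-free setting $\sigma(H_X)(x^*)$ is exactly the conditional variance $\Var\bigl(H(x^*) \mid H(x_1), \ldots, H(x_N)\bigr)$. For jointly Gaussian variables this conditional variance is deterministic and coincides with the mean-squared error of the best linear predictor, which can only decrease as predictors are added. Hence for any single index $i$,
\[
0 \le \sigma(H_X)(x^*) \le \sigma(H_{\{x_i\}})(x^*).
\]

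Second, I would make the single-point bound explicit. Specializing Equation~\ref{eq:gp_posterior_formula} to the one-element dataset $\{x_i\}$ gives
\[
\sigma(H_{\{x_i\}})(x^*) = k(x^*,x^*) - \frac{k(x^*,x_i)^2}{k(x_i,x_i)},
\]
valid whenever $k(x_i,x_i)>0$, which holds for $x_i$ close enough to $x^*$ by continuity. As $x_i \to x^*$, continuity of $k$ yields $k(x^*,x_i)\to k(x^*,x^*)$ and $k(x_i,x_i)\to k(x^*,x^*)$, so the right-hand side tends to $k(x^*,x^*) - k(x^*,x^*) = 0$. It therefore suffices to show that the sample eventually contains a point arbitrarily close to $x^*$. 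For this I would argue that almost surely $x^*$ lies in the support of $\mathcal{X}$, so that for every $\epsilon>0$ the ball $B(x^*,\epsilon)$ has positive mass $p_\epsilon := \mathcal{X}(B(x^*,\epsilon))>0$. Conditioning on $x^*$ and using that $x_1,x_2,\ldots$ are i.i.d.\ from $\mathcal{X}$, the probability that none of the first $N$ samples lands in $B(x^*,\epsilon)$ is $(1-p_\epsilon)^N$, which is summable in $N$; by Borel--Cantelli, almost surely $X$ contains a point within $\epsilon$ of $x^*$ for all large $N$. Letting $\epsilon \to 0$ along a sequence gives $\min_{1\le i \le N}\lVert x_i - x^*\rVert \to 0$ almost surely, and taking $x_i$ to be this nearest neighbour in the two displayed bounds forces $\sigma(H_X)(x^*)\to 0$ as $N\to\infty$.

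The main obstacle I anticipate is not any single step — each is individually standard — but rather stitching them together with the correct mode of convergence. The reduction to the nearest-neighbour point relies on the conditioning-monotonicity of Gaussian posterior variance, which must be invoked for arbitrary conditioning sets, and the accumulation of samples near $x^*$ has to be argued on the support of $\mathcal{X}$, where some care is needed because $x^*$ is itself random and drawn from the same measure. Making the ``infinite data'' limit precise, as almost-sure convergence over the joint draw of $x^*$ and $X$, is the part most in need of careful bookkeeping.
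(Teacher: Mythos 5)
Your proof is correct and takes essentially the same route as the paper's: both reduce $\sigma(H_{X})(x^*)$ to the single-point posterior variance at a nearby sample point via monotonicity of Gaussian conditional variance under additional conditioning, and then use continuity of $k$ to drive that one-point bound to $0$. Your extra care — the Borel--Cantelli argument showing the sample almost surely accumulates at $x^*$, and the explicit handling of the degenerate case $k(x^*,x^*)=0$ and of the invertibility of $k(x_i,x_i)$ — simply makes rigorous the steps the paper asserts informally (``in the limit of infinite data, $X$ will contain a point $x_0$ with $\lVert x_0 - x^*\rVert < \delta$'').
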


\begin{proof}
    Since $k$ is continuous, so is the function $\psi(x) := k(x^*, x^*) - k(x^*, x)k(x, x)^{-1}k(x, x^*).$ Therefore, for any $\epsilon > 0$, there exists $\delta > 0$ so that $||\psi(x_0) - \psi(x^*)|| < \epsilon$ whenever $||x_0 - x^*|| < \delta.$ We also see that 
    $$||\psi(x_0) - \psi(x^*)|| = || k(x^*, x^*) - k(x^*, x_0)k(x_0, x_0)^{-1}k(x_0, x^*) - 0 ||, $$ 
    which, using~\eqref{eq:gp_posterior_formula}, is a posterior variance with respect to the dataset $\{x_0\}.$ In the limit of infinite data, $X$ will contain a point $x_0$ such that $||x_0 - x^*|| < \delta$. In such a case, we will have 
    $$\sigma(H_{X})(x^*) = \sigma((H_{\{x_0\}})_{X \setminus \{x_0\}})(x^*) \leq \sigma(H_{\{x_0\}})(x^*) < \epsilon.$$ 
    Here, first, we decompose computation of the posterior $H_{X}$ into taking posteriors with respect to $\{x_0\}$ and $X \setminus \{x_0\}$, and then we use the fact that the prior variance is not bigger than the posterior variance. The resulting inequality concludes the proof.
\end{proof}

\section{Resources}

For running experiments, we used Slurm-based clusters with heterogenous hardware, including nodes with A100 GPUs, V100 GPUs, and CPUs only. We note that because of the single-pass-through-data setting and relatively small dataset sizes, our experiments were feasible to run in all of the mentioned hardware setups. To produce our main benchmarking results in Tables \ref{tab:main_one_class} and \ref{tab:main_many_classes}, $1300$ runs are needed, each taking on average $21.5$ minutes on an A100 GPU, which makes $466$ hours in total. During working on the project, we executed over $65K$ experimental runs.

\end{document}